\title{
AgraSSt: 
Approximate Graph Stein Statistics \\
for Interpretable Assessment of
\\
Implicit Graph Generators
}
\author{%
  Wenkai Xu\\
      Department of Statistics\\
% \hspace{-5.5 cm} 
University of Oxford \\
\texttt{wenkai.xu@stats.ox.ac.uk} \\
  % examples of more authors
  \And
  Gesine Reinert \\
    Department of Statistics\\
% \hspace{-5.5 cm} 
University of Oxford \\
  \texttt{reinert@stats.ox.ac.uk} \\[2ex]
% \hspace{-5.5 cm}
%     Department of Statistics\\
% % \hspace{-5.5 cm} 
% University of Oxford \\
}
\newcommand{\gKSS}{{\rm gKSS}} 
\theoremstyle{plain}
\newtheorem{theorem}{Theorem}[section]
\newtheorem{proposition}[theorem]{Proposition}
\newtheorem{lemma}[theorem]{Lemma}
\newtheorem{corollary}[theorem]{Corollary}
\theoremstyle{definition}
\newtheorem{definition}[theorem]{Definition}
\theoremstyle{remark}
\newtheorem{remark}[theorem]{Remark}
\newcommand{\uk}{\underline{k}} 
\newtheorem{Proposition}{Proposition}
\newtheorem{Assumption}{Assumption}
\newcommand{\R}{\mathbb{R}} %real numbers
\renewcommand{\H}{\mathcal{H}} %RKHS
\newcommand{\G}{\mathcal{G}} %Graph
\newcommand{\X}{\mathcal{X}} %data space x
\newcommand{\A}{\mathcal{A}} %graph generator
\renewcommand{\P}{\mathbb{P}} %distribution class
\newcommand{\E}{\mathbb{E}} %expectation
\newcommand{\EE}{\mathbb{E}} 
\newcommand{\N}{\mathbb{N}} %natural number
\newcommand{\RR}{\mathbb{R}} 
\newcommand{\PP}{\mathbb{P}}
\newcommand{\T}{\mathcal{A}}
\newcommand{\q}{\widehat{q}}
\newcommand{\bone}{\mathbb{1}}
\newcommand{\I}{\mathbb{1}}
\newcommand{\nullH}{\mathfrak{H}}
\newcommand{\astar}{{a^*}}
\def\given{\typeout{Command 'given' should only be used within bracket command}}
\newcounter{@bracketlevel}
\def\@bracketfactory#1#2#3#4#5#6{
\expandafter\def\csname#1\endcsname##1{%
\addtocounter{@bracketlevel}{1}%
\global\expandafter\let\csname @middummy\alph{@bracketlevel}\endcsname\given%
\global\def\given{\mskip#5\csname#4\endcsname\vert\mskip#6}\csname#4l\endcsname#2##1\csname#4r\endcsname#3%
\global\expandafter\let\expandafter\given\csname @middummy\alph{@bracketlevel}\endcsname
\addtocounter{@bracketlevel}{-1}}%
}
\def\bracketfactory#1#2#3{%
\@bracketfactory{#1}{#2}{#3}{relax}{1mu plus 0.25mu minus 0.25mu}{0.6mu plus 0.15mu minus 0.15mu}
\@bracketfactory{b#1}{#2}{#3}{big}{1mu plus 0.25mu minus 0.25mu}{0.6mu plus 0.15mu minus 0.15mu}
\@bracketfactory{bb#1}{#2}{#3}{Big}{2.4mu plus 0.8mu minus 0.8mu}{1.8mu plus 0.6mu minus 0.6mu}
\@bracketfactory{bbb#1}{#2}{#3}{bigg}{3.2mu plus 1mu minus 1mu}{2.4mu plus 0.75mu minus 0.75mu}
\@bracketfactory{bbbb#1}{#2}{#3}{Bigg}{4mu plus 1mu minus 1mu}{3mu plus 0.75mu minus 0.75mu}
}
\newcommand{\AgraSSt}{{\rm AgraSSt}}
\newcommand{\wk}[1]{\textcolor{black} {#1}}
\newcommand{\gr}[1]{\textcolor{black}{#1}}
\begin{document}

\maketitle

\begin{abstract}
% \wk{The preliminary text serves as a flow instruction and to-do list.}
We propose and analyse 
% a set of 
a novel statistical procedure, coined {\it AgraSSt}, to assess the quality of graph generators which may not be available in explicit forms.  
%validate generative models for graphs, that may not admit explicit forms of probability distributions. 
In particular, AgraSSt can be used  
%procedure also enables the users 
to determine whether a \emph{learned} 
% or \emph{trained} 
graph generating process is 
% \emph{actually} 
capable of generating graphs 
which resemble a given input graph. 
%from the desired probabilistic models. 
Inspired by Stein operators for random graphs, the key idea of AgraSSt is the construction of a kernel discrepancy based on an 
operator obtained from the graph generator.
%These operators are then  
% for ERGMs 
%so to construct the relevant kernel Stein discrepancies for implicit models. 
%Beyond validating  implicit models, 
AgraSSt can 
%also be used to
provide interpretable criticism{s} for a graph generator training procedure and help 
%for the graph generators themselves and
%to
identify reliable sample  
%#
%is also applicable for interpreting the misfit of the implicit models as well as generating reliable/representative sample 
batches for downstream tasks.
% The applications for such validation procedure can be useful in 1) generating reliable sample sets for desired target tasks;
% 2) determine whether the implicit model generates probabilistic graphs;
% 3) by comparing with ground true training samples, we can identify where the graph generating procedure went wrong.
%In this work,
We 
%provide consistency results and
give theoretical guarantees for a broad class of 
%exponential
random graph models. 
%(ERGM). 
%focus on a broad class of probabilistic models 
% for graphs 
%called Exponential Random Graph Models (ERGMs).
%We show 
% theoretical properties 
%consistency result for the proposed statistical procedures
% and graphical tests in social networks \cite{hunter2008goodness}. 
% We 
%and 
We provide empirical results on both synthetic input graphs with  \emph{known} graph generation procedures,
%is \emph{known}
and 
%the 
real-world 
input graphs
%scenario
that the state-of-the-art ({deep}) generative models for graphs are  \emph{trained} on.

\end{abstract}

\medskip
{\bf Keywords.} {Synthetic graph generators, goodness-of-fit testing, model criticism, Stein's method, kernel Stein discrepancies}

\medskip 
{\bf MSC2020 Subject Classification.} 60E05, 62E17, 60B20, 05C80

\section{Introduction}
\label{sec:intro}

Generative models for graphs have received increasing attention in  the statistics and machine learning communities. 
Recently, deep neural networks have been utilised to learn rich representations from graph structures and generate graphs 
%with improved qualities
\citep{dai2020scalable, li2018learning, liao2019efficient, you2018graphrnn}. 
% \todo{used citep instead of cite when appropriate, throughout the text} 
However, due to the % flexibility of the and the
often opaque deep learning procedures, these deep generative models 
%for graphs 
are usually implicit, which hinders theoretical analysis to assess how
%and it is difficult to determine how 
close the generated samples are in their distributional properties to the graph distribution they are meant to be sampled from.

\iffalse
%distributed compared to the desired probability law, i.e. the quality of graph generation.
While simulations for parametric models would be easier to analyse, 
%On the other hand, 
learning parametric models for graphs requires an
%admits 
explicit probability distribution whose (estimated/learned) parameters can be used for model assessment; however, such a distribution is often not available. 
% e.g. via maximum likelihood,
%Despite the parametric models are explicit and seem to be more accessible, there are a few significant pitfalls for learning parametric models for graphs and assessing the quality for the learned model.  
The class of parametric models capture only a fraction of models from which
%admits a much smaller model class compared to 
implicit graph generators can draw, and restricts the modelling power.
%and expressiveness from various graph features. 
Moreover, random graph models often incorporate dependence between edges and 
%due to the dependencies through edges and vertices within a graph,
parameter estimation can be inconsistent even for well-specified models, e.g. exponential random graph models (ERGMs) \citep{shalizi2013consistency}. %that covers a rich class of probabilistic graphs.
Such inconsistency 
% in parameter estimations may 
can lead to wrong conclusions in model assessment. Parametric models are also prone to 
%In addition, a more pressing issue with parametric modelling is 
model mis-specification: the true  parametric model class is generally 
%not 
%correctly
unknown \emph{a priori} in many real scientific problems such as neuro-spike modelling \citep{lehmann2021characterising} and 
%analysing 
dynamical social network analysis \citep{steglich2010dynamic}. 
Even within a model class, making useful modelling assumptions is  
%still 
a crucial bottleneck when performing inference on graphs. For example, an open problem in the recent advances of approximate Bayesian computation (ABC) for ERGMs \citep{lehmann2021bayesian} is
%to decide 
choosing sufficient statistics on graphs. 
Model mis-specification can also 
lead to wrong conclusions for model assessment.
% when assessing model quality.  
% Assuming the underlying parametric model is \emph{correctly} known, which is restrictive to a much smaller model class, the parameter estimation from observed samples can be inconsistent \citep{shalizi2013consistency}, leading to incorrect outcome for validating the generative models.
% When the underlying generative model is assumed to be known \emph{a priori}, validating the generating procedure by comparing estimated parameters from samples can lead to incorrect result as the estimation may not be consistent in many cases \cite{shalizi2013consistency}, even though the parametric models sacrifices by addressing a smaller class of generative models.
\fi

%\wk
Learning parametric models requires an explicit pre-specified probability distribution class, {and}
%where 
the learned parameters can be used for model assessment. However, parametric models may only capture a fraction of the graph features and have restricted modelling power. 
Due to dependency between edges, parameter estimation can be inconsistent \citep{shalizi2013consistency} even for well-specified models, and {may} lead to wrong conclusions in model assessment{s}.
%\todo{I removed to original paragraph and merged a shorter version}
Although recent advances in deep generative models for graphs may surpass some of the above-mentioned issues by learning rich graph representations, methods to assess the quality of such implicit graph generators are lacking. 
In principle, nonparametric
hypothesis {tests} %testing procedures 
can be useful 
%approach#
to assess complex models, {such as by the popular kernel-based tests procedures}
%. A popular class of nonparametric hypothesis testing 
% for one-sample and two-sample problem 
%is kernel-based testing procedures
that utilise functions in {a} reproducing kernel Hilbert space (RKHS) \citep{RKHSbook}. 
When the models are described in the form of explicit probabilities, goodness-of-fit tests \citep{chwialkowski2016kernel, liu2016kernelized} may apply; however, goodness-of-fit testing procedures are generally not applicable for implicit models. Instead, {if a large %\wk
{set of} sample{s} from the target distribution is observed,} one may generate samples from the implicit model and perform a two-sample test {such as a maximum mean discrepancy (MMD) test} \citep{gretton2007kernel} for model assessment 
% implicit models 
\citep{jitkrittum2017linear, xu2020stein, xu2021interpretable}. 
% See SI.\ref{app:kernel-based-testing} for more details.

% Measuring sample qualities in one-sample problem are studied using RKHS functions \cite{gorham2015measuring} and checking model assumptions via the two-sample setting are also reported in various data scenarios \cite{jitkrittum2017linear, xu2020stein}.

% When the target distributions are represented by samples with asymptotically large in size, solving the one-sample problem or the two-sample problem can be two ways to validate a model. 
% One-sample problem is referred to as the goodness-of-fit test, where the model to be validated is known in its explicit form and is tested against one set of observed samples. %to conclude whether the samples are generated from the model distribution. 
% Two-sample problem associates with the setting when two sets of samples from \emph{unknown} distribution are obtained. Comparisons are directly performed on two sets of samples to conclude whether they are from the same distribution. 

\begin{figure*}[t]
    \centering
    % \vspace{-.4cm}
    \includegraphics[width=1.\textwidth, 
    % height=.35\textwidth
    ]{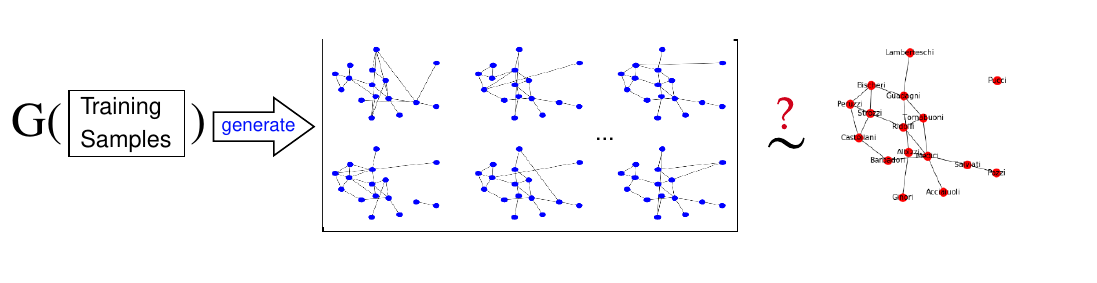}
    \vspace{-.8cm}
    \caption{Illustration for the assessment {task}:
    %procedure:
    % generator for Florentine marriage network: #
    a graph generator $G$, which is learned from training samples, {generates  a set of network samples} (vertices in blue), 
    % {\color{red}{node or vertex?}} 
    and is assessed against the {target} graph, which here is Padgett's Florentine marriage network (with vertices in red labelled as family names).
    }
    % \vspace{-.5cm}
    \label{fig:florentine}
\end{figure*}

However, in real-world applications, {often only a single graph from the target distribution is observed}
%we may only access the target distribution via a single network observation 
\citep{bresler2018optimal, reinert2019approximating}. 
% Hence, 
%\wk
{In this case,} the MMD 
%above-mentioned 
%\wk{maximum mean discrepancy (MMD)
%\citep{gretton2007kernel}
methods for model assessment via
two-sample testing procedures 
% \wk{mentioned above} 
cannot be
%adapted
used
to assess implicit graph generators.
% ive models. 
To the best of our knowledge, {beyond 
Monte Carlo tests 
% \wk{graphical tests \citep{hunter2008goodness}}
%which would be 
based on a simple test statistic {with often poor power}, 
%to have poor power,
no principled test}
%there is no systematic way 
{is available} for assessing the quality of implicit graph generators. 
%Observing only one single graph from target distribution makes the problem harder.
% 
% 
In Figure.\ref{fig:florentine}, the task is illustrated: we are given a graph generator $G$ which is  learned from a set of training samples
% This generator $G$  
and can generate samples of user-defined size. {The task is} to assess whether $G$
% \footnote{Generator illustrated is a trained Cross-Entropy Low-rank Logit (CELL) model \cite{pmlr-v119-rendsburg20a}.} 
can generate samples
% resemble the observed graph, 
from the same distribution that generates the observed graph, 
{for example, Padgett's} Florentine marriage network \citep{padgett1993robust}.
In Figure.\ref{fig:florentine}, $G$  is a  Cross-Entropy Low-rank Logit (CELL) model \citep{pmlr-v119-rendsburg20a} trained on Padgett's Florentine marriage network, {but any graph generator could be used.}
{This paper makes three main contributions.} 
%\wk{The main contribution of this paper is three-fold.} 
(1)
{We {introduce}
%derive
\underline{A}pproximate \underline{gra}ph \underline{S}tein \underline{St}atistics (\AgraSSt) in Section \ref{sec:approx_ergm_stein}, which
{opens up a \emph{principled way} to} understand implicit graph generators.
\AgraSSt{} is a variant of a kernel Stein discrepancy
based on an empirical Stein operator for  conditional distributions of general random graph models. {The} 
%, \wk{where the
testing procedure is inspired %from 
{by gKSS, a} goodness-of-fit testing {procedure} for explicit {exponential random graph models (ERGMs)} \citep{xu2021stein}.
(2) {We provide theoretical guarantees for \AgraSSt{} (\cref{sec:approx_ergm_stein}).} 
% \textbf{2)} 
% \textbf{3)}  
% In Section~\ref{sec:application} we {apply} \AgraSSt %procedure beyond assessment of graph generator in Section~\ref{sec:application}. We 
% We provide procedures for 
(3)
% %We propose 
% a systematic procedure based on \AgraSSt{}
% %using \underline{A}pproximate \underline{gra}ph \underline{S}tein \underline{St}atistics (AgraSSt) in Section~\ref{sec:approx_ergm_stein},
% to assess the quality of {an} implicit graph generator
% as well as theoretical guarantees. 
We propose interpretable model criticisms when there is a {model} misfit
and identify representative synthetic sample batches when the model is a good fit (\cref{sec:application}).
% % {and we show how \AgraSSt { }can be used}
% % %; and
% % for identifying 
% %reliable and 
% representative {synthetic}
% %quality 
% batches of small sample size when the {generator} $G$ is capable of generating high quality samples.
% \textbf{4)} 
\iffalse 
{{In 
Section~\ref{sec:theory}, we}  
\iffalse 
analyse the statistical behaviour \gr{of} for the proposed AgraSSt procedure and 
\fi
provide theoretical guarantees of AgraSSt for {a large class of} ERGMs.}
\fi 
% 
% 

{{Further, i}n Section~\ref{sec:background} we review gKSS.} 
%We begin the paper by reviewing relevant
%\wk
%{background concepts}
%on \gr{ERGMs}, Stein operators and kernel-based 
%discrepancy measures for hypothesis testing procedures. 
%in Section~\ref{sec:background}
%and 
{We provide} 
%conclude the discussion with 
empirical results in Section~\ref{sec:exp} and {a discussion}
%future directions 
in Section~\ref{sec:future}.
% We propose our key ingredient AgraSSt
% \underline{A}pproximate \underline{gra}ph \underline{S}tein \underline{St}atistics (AgraSSt) 
% in Section \ref{sec:approx_ergm_stein}, and various applications of AgraSSt in Section \ref{sec:application} including model validation and interpreting trained graph generative models. 
% We provide theoretical analysis in Section \ref{sec:theory} followed by empirical studies in Section \ref{sec:exp}.
% \gr{The code for the paper is available at xxx.} 
%We provide 
Proof details, {more background,} %additional 
theoretical and empirical results, 
%\todo{There is no code in the SI from what I can see. Could we add a link here?} 
and 
implementation details 
% \gr{as well as code for the experiments} 
{are found} in {the}  Supplementary Information (SI) {text}. 
The code for the experiments is
% for the paper 
% attached.
available at \url{https://github.com/wenkaixl/agrasst}.
% {Code is  attached {as a folder} in the SI.} %supplementary materials.} 
% \todo{is this the same as SI?\wk{Supplementary allows a .zip file that contains both text SI and code folder.}} 
% \gr{In SI \ref{app:proof} we provide proofs of theoretical results which contain explicit bounds on distributional approximations. We also provide  refined results for the case that the observed network is generated by an ERGM. SI xxx contains additional eperimental results and ... .} 
% The code is
% % for the paper 
% % attached.
% available at \url{https://github.com/wenkaixl/agrasst}.
% in the attachment.
% {\color{red}{is that the SI?}} 

\section{Background{: graph kernel Stein statistics}}\label{sec:background} 
% \todo{shortened, changed presentation, moved details to appendix} 
{In \citep{xu2021stein} a goodness-of-fit testing procedure called gKSS for exponential random graph models is introduced, which assumes that only a single network may be observed in the sample. As AgraSSt is inspired by gKSS, and gKSS serves as comparison method when the observed network is known to be generated from an exponential random graph model {(ERGM)}, we briefly review gKSS. The notation introduced in this section is used throughout the paper.}
The class of 
%exponential random graph models 
ERGMs is  %assemble
a rich model class {which includes}  
% of random graph models 
%that generalises popular random graph models such as 
Bernoulli random graphs
\iffalse , 
% \citep{renyi1959random}, 
stochastic block models (SBM) \citep{abbe2015community}, random dot-product graphs (RDPG)\citep{athreya2017statistical}, geometrically weighted edgewise shared partnerships (GWESP) \citep{hunter2008goodness}, etc. 
\fi 
%ERGMs incorporate the notion of \gr{transitireciprocity} to random graphs, to model the ``larger-than-expected'' number of subgraph counts and 
% {they}
and 
%are
{which is} extensively used for social network analysis \citep{wasserman1994social, holland1981exponential, frank1986markov}.
% Exponential Random Graph Models
ERGMs model random graphs via a Gibbs measure with respect to a chosen set of network statistics, such as  number of edges, 2-stars, and triangles.
%, etc. 
% \paragraph{Notations}
Denote by  $\G^{lab}_n$, the set of vertex-labeled graphs on $n$ vertices, with 
% identify $\G^{lab}_n$ with $\{0,1\}^{N}$,
{$N =n(n-1)/2 $ possible undirected edges.}
% $N=\begin{pmatrix}n\\2\end{pmatrix}$,
{Encode} $x \in \G^{lab}_n$ by an ordered collection of $\{0,1\}$-valued variables $x = (x^{(ij)})_{1 \le  i < j \le n} \in \{0,1\}^N$ {where} $x^{(ij)}=1$ {if and only if}
%implies 
there is an edge between $i$ and $j$. We denote an (ordered) {vertex-pair index $s=(i,j)$}  by $s\in [N]:=\{1, \ldots, N\}$.
For a {collection of graphs}  %graph 
$H_1, \ldots, H_k$ on at most $n$ vertices,  with $H_1$ denoting a single edge, let $t_\ell$ denote the number of counts of $H_\ell$ in the observed subgraph (possibly scaled; for details including a precise definition, \cref{def:ergm}, see SI\,\ref{sigauss}), 
%$V(H)$ denote the vertex set, 
%and for $x\in\{0,1\}^N$, denote by $t(H,x)$  the number of
%{\it edge-preserving} injections from $V(H)$ to $V(x)$; an injection $\sigma$ preserves edges if for all edges $vw$ of $H$ {with  $\sigma(v)<\sigma(w)$}, $x_{\sigma(v)\sigma(w)}=1$.
%(here assuming $\sigma(v)<\sigma(w)$). 
%For  $v_H =| V(H)| \ge 3$  set
%$$
%t_H(x)=\frac{t(H,x)}{n(n-1)\cdots(n-v_H+3)}.
%$$
%If $H{=H_1}$ is a single edge, then $t_H(x)$ is twice the number of edges of $x$. In the exponent this scaling of counts matches Definition~1 in \citet{bhamidi2011mixing} and Sections~3 and~4 of \citet{chatterjee2013estimating}.
%An ERGM {
%as a random graph model 
%for the collection  $x\in \{0,1\}^{N}$
%can be defined
% as follows.
% see \cite{reinert2019approximating}.
%\begin{definition}[
%Definition 1.5 in
%\citet{reinert2019approximating%}]
%\label{def:ergm} 
%Fix $n\in \N$ and $k \in \N$. {{L}et $H_1$ be a single edge {and f}or $l={2}, \ldots, k$ let}  $H_l$ be a connected graph on at most $n$ vertices; 
%and
%set $t_l(x) = t_{H_l}(x)$. 
For  $\beta = (\beta_1, \dots, \beta_k)^{\top} {\in \R^k}$ 
%with $\beta_l \in \R$
and
%the sufficient statistic
$t(x) =(t_1(x),\dots,t_k(x))^{\top} \in \R^k$ 
%where $t_l(x)$ is a function of $H_l$ and $x$, 
%we call 
we say that 
$X\in \G^{lab}_n$ follows  the exponential random graph model  $X\sim \operatorname{ERGM}(\beta, t)$ if for  $\forall x\in \G^{lab}_n$,
\begin{equation}\label{eq:ergm}
    % \P
    q(X = x) = \frac{1}{\kappa_n(\beta)}\exp{\left(\sum_{l=1}^{k} \beta_l t_l(x) \right)}.
\end{equation}
Here $\kappa_n(\beta)$ is a  normalisation constant. {In this model, $t_\ell (x), \ell=1, \ldots, k$, are sufficient statistics.} 
%\end{definition}

Parameter estimation $\hat \beta_l$ for $\beta_l$ is \emph{only} possible when {the graphs $H_2, \ldots, H_k$} %$$t_l(x)$
are specified a priori; see SI.\ref{app:param-estimation} for estimation details.
% \iffalse
% \textbf{Maximum likelihood estimator (MLE)} has been considered in the ERGM setting. However, as the normalisation constant $\kappa_n(\beta)$ is generally unknown, Markov chain Monte-carlo methods are used to estimate the normalisation constant thus exact likelihood. The two-stage iterative algorithm with Markov chain Monte-carl maximum likelihood estimator (MCMCMLE) for ERGM has been developed \cite{snijders2002markov}. When the network size is large, a good estimation for the normalised $\kappa_n(\beta)$ requires large amount of Monte-carlo samples that causes exhaustive computation cost. 
% To alleviate the problem associated with normaliser, \textbf{Maximum Pseudo-likelihood Estimator(MPLE)} has been developed for ERGM \cite{schmid2017exponential}. MPLE considers the factorisation of conditional edge probability to approximate the exact likelihood, 
% \begin{equation}\label{eq:mple}
% q(x) = \Pi_{s\in[N]} q(x^{s}|x_{-s}).
% \end{equation}
% For the exponential family based model, the conditional distribution $q(x^{s}|x_{-s})$ does not require computing the normaliser.
% Estimation based on \textbf{contrastive difference (CD)} \cite{hinton2002training} has also been developed for ERGM estimation \cite{hunter2006inference}.
%  with convergence property for exponential family models \cite{jiang2018convergence} and Boltzmann machine \cite{hyvarinen2006consistency}.
% \fi
In modern graph learning procedures, e.g. deep generative learning, the 
sufficient statistics $t_l$ of Eq.\eqref{eq:ergm}  may not be obtained explicitly.
%in the form of Eq.\eqref{eq:ergm}.
% in closed form for parameter estimation.
% Instead, the probability law over the graphs may need to be approximated otherwise. 

{In \citep{reinert2019approximating} the exponential random graph distribution in Eq.\,\ref{eq:ergm} is characterised by a so-called {\it Stein operator}, as follows. Let} 
$e_s \in  \{0,1\}^N$ be a vector with $1$ in coordinate {$s$} and 0 in all others; 
$x^{(s,1)} = x + (1-x_s) e_s$ has the $s$-entry  replaced of $x$ by the value 1, and $x^{(s,0)} = x - x_s e_s$ has  the $s$-entry  of $x$ replaced by the value 0; moreover,
%Moreover for $x \in \{0,1\}^N$  set 
% ${{x}}_{s}^+$: the $s$-entry  replaced of $x$ by the value 1;
%5 ${x}_{s}^-$: 
%and $  
${x}_{- s}$ is the set of edge indicators with entry $s$ removed. For a  function
$h: \{0,1\}^N \rightarrow \R$, let
$\Delta_s h(x) = h( x^{(s,1)}) - h(x^{(s,0)}).$ Set 
$ q_X(x^{(s,1)} | {x_{-s}} ) = \P ( X^s=1| {X_{-s} = x_{-s}}).$ Define the operator
\begin{eqnarray} 
%\label{eq:steinergmop} 
\A_{\beta, {t}} f(x)
%	&= &\frac{1}{\binom{n}{2}}\sum_{s \in [N] }\left[ q(x^{(s,1)}|x)\Delta_{s} f(x) \right. \\ 	&& \left.+  \{ f(x^{s,0)})-f(x)\}\right] \nonumber\\
	=  \frac{1}{N} \sum_{s\in[N]} \A^{(s)}_q f(x),
	\label{eq:ergm_stein} 
% 	\end{eqnarray}
\quad
% \operatorname{with}
% \begin{eqnarray}\label{eq:stein_component}
    \A^{(s)}_q f(x) 
    % &=&
    = q(x^{(s,1)}|{x_{-s}} ) \Delta_s f(x) 
    %  \nonumber \\
    %  &&
     + \left( f(x^{(s,0)}) - f(x)\right).
\end{eqnarray}
Then under mild conditions \cite{reinert2019approximating} show that if 
$\E_p[{\A}_{\beta, t}  f] = 0$ for all smooth test functions $f$, then $p$ {must be} the distribution of ERGM$(\beta, t)$. Thus, this operator characterises  ERGM$(\beta, t)$. 
%\gr{The operator ${\A}_{\beta, t}$ is a so-called %{\it Stein operator}. 
For the derivation of AgraSSt it is of interest to see how this operator is obtained. It is indeed the generator of a  so-called {\it Glauber} Markov chain on $ \G^{lab}_n$ with  transition probabilities 
$$\P (x \rightarrow x^{(s,1)} ) = N^{-1}
%\frac1N 
-
 \P( x \rightarrow  x^{(s,0)}) = N^{-1} % \frac1N 
 q(x^{(s,1)} | x_{-s}).$$
%where
%$ q_X(x^{(s,1)} | {x_{-s}} ) = \P ( X^s=1| {X_{-s} = x_{-s}}).$
}

{Further, as  
$$\Delta_s t_\ell (x)
= t_\ell(x^{(s,1)}) - t_\ell(x^{(s,0)}),$$
cancelling out common factors, \begin{eqnarray}
q(x^{(s,1)}| {x_{-s}} ) &= & \exp\left\{\sum_{\ell=1}^L \beta_\ell \gr{\Delta_s t(x)}\right\} \times   
\left( \exp\left\{ \sum_{\ell=1}^L \beta_\ell \gr{\Delta_s t(x)}\right\} +1 \right)^{-1}  \nonumber\\
&=&
q(x^{{(}s,1{)}} |\gr{\Delta_s t(x)} )
%= \mathbb{P} (X^s = 1 | t(x_{{-s}})) . 
\label{eq:condexp}
\end{eqnarray}
depends only on $\Delta_s t(x)$.} 
With the ERGM Stein operator in Eq.\eqref{eq:ergm_stein} and a rich-enough RKHS test function class 
% $f$ living in reproducing kernel Hilbert space (RKHS), 
$\H$, \citet{xu2021stein} propose a graph kernel Stein statistics (gKSS) to perform goodness-of-fit testing on {an} \emph{explicit} ERGM when a single network sample is observed.
With the summand components in Eq.\eqref{eq:ergm_stein}, the Stein operator 
% in Eq.\eqref{eq:ergm_stein} 
can be seen as taking expectation over {vertex-pair} variables $S\in[N]$ with uniform probability $\P(S=s)\equiv N^{-1} $
%\frac{1}{N}$, 
independently of $x$,  namely 
%Eq.\eqref{eq:ergm_stein} can be rewrite as: 
\begin{align*}
\A_q f(x) &= \sum_{s\in [N]} \P(S=s)\A^{(s)}_q f(x) =: {\E_S [ \A^{(S)}_q f(x)]} . %\nonumber
\end{align*}
% Note that the expectation is taken over
% %variable
% $S$,  with the network  $x$ fixed {except for the coordinate $S$}. 
For a fixed graph  $x$, 
% analogous to Eq.\eqref{eq:ksd},
gKSS is defined as
\begin{align}\label{eq:gkss}
    \operatorname{gKSS}(q;x) 
    %&= \sup_{\|f\|_{\H}\leq 1} \left|\E_{q(\cdot|x)}[f(X_{t_1}) | X_0 = x] - f(x) \right| \nonumber \\  
    & = \sup_{\|f\|_{\H}\leq 1} \Big|\E_S[\A^{(S)}_q f(x)] \Big|,
\end{align}
where the function $f$ is chosen to best distinguish $q$ from $x$.
% By taking the sup, gKSS seeks a function $f \in \H$ with unit RKHS norm that best distinguishes the model $q$ and sample $x$.
% Eq.\eqref{eq:stein_expectation} {when $X$ does not have distribution $q$; this rationale} is similar {as for} Eq.\eqref{eq:ksd}. We define the {\it graph kernel Stein statistics} (gKSS) as
For {an} RKHS $\H$ associated with kernel $K$, 
%i.e. $f(x)=\langle f, K(x,\cdot)\rangle_{\H}$, 
% similarly to \ref{hform}, 
{by the reproducing property of $\H$,} 
the squared version of gKSS admits a quadratic form {representation} 
%by reproducing property,
% \begin{equation}
% \label{eq:gkss_quadratic}    
$\operatorname{gKSS}^2(q;x) = \Big\langle \E_S[\A^{(S)}_q K(x,\cdot)],\E_S[\A^{(S)}_q K(x,\cdot)]  \Big\rangle,$
% \end{equation}
which can be computed readily.
{More background can be found in \cref{sec:ksd}.}

\section{AgraSSt: Approximate Graph Stein Statistic}\label{sec:approx_ergm_stein}

%  The Stein operator of ERGM has the form
% \begin{equation}
%     \A_q f(x) = \frac{1}{N} \A_q^{(s)} f(x)
% \end{equation}
% where
% $$
%  \A_q^{(s)} f(x) = q(x^{(s,1)}|{x^{-s}} )f (x^{(s,1)}) + q(x^{(s,0)}|{x^{-s}} )  f (x^{(s,0)})  - f(x).
% $$
% We tackle this question using the approximation of Glabuer dynamics to build a Stein test for goodness-of-fit.
An implicit graph generator may not admit a  probability distribution in the form of Eq.\eqref{eq:ergm}; however, the idea of constructing Stein operators based on Glauber dynamics using conditional probability distribution for ERGM is 
% not restricted to ERGMs but applicable to any exchangeable random graphs. 
inspiring.
Here we propose  Stein operators for conditional graph distributions, to facilitate an (approximate) characterisation 
%and approximation
for implicit random graph models. 

\subsection{Stein operators for conditional graph distributions}
%Similarly as for the ERGM, 
Let $q(x) =\mathbb{P}(X=x)$ be any distribution with support $\G^{lab}_n$.
%the set of graphs on $n$ vertices. 
Let $t(x)$ denote a statistic on graphs which takes on finitely many values $\uk$ and let 
$q_{\uk} ( x) = \mathbb{P}(X=x | t(x) = \uk)$.  We assume that $q_{\uk} ( x) > 0$ for all $\uk$ under consideration. %Let
%\gr{$p_{t=\uk}$ denote the}
%$Y_{\uk}$ have the 
%conditional distribution $p_{\uk}$
%, that is, the conditional distribution 
%of $X$ given that $t(X) = {\uk}$.
{For a generic outcome we write $q_t$.} 
{Inspired by \eqref{eq:condexp}, we} introduce a Markov chain on $\G^{lab}_n$
%the set of graphs on $n$ vertices
which  transitions from $x$ to $x^{s,1}$ with probability 
%Glauber-type dynamic as above, with 
% $$ q(x^{s,1} | t(x_s) ) = \mathbb{P} (X^s = 1 | t(x_s)) =: q_t (x^{s,1}) ,$$
\begin{equation}\label{eq:cond_prob}
    q(x^{s,1} | t(x_s) ) = \mathbb{P} (X^s = 1 | {\Delta_s t(x)}) =: q_t (x^{s,1}) ,
\end{equation} 
and which transitions from $x$ to 
$x^{s,0}$ with probability $q(x^{s,0} | {\Delta_s t(x)} ) 
%= \mathbb{P} (X^s = 0 | t(x_s)) 
= 1- q_t (x^{s,1}) ;$ {no other transitions occur.}
Let 
\begin{align}
\mathcal{A}_{q,t}^{(s)} f(x) {:= \mathcal{A}_{q,\Delta_s t(x)}^{(s)} f(x) }
%  = & q(x^{s,1} | t(x_s) ) (f( x^{(s,1)})  - f(x)) \nonumber\\
% & + q(x^{s,0} | t(x_s) ) (f( x^{(s,0)})  - f(x)).
 :=  q_t(x^{s,1})f( x^{(s,1)})  + q_t(x^{s,0})f(x^{(s,0)})  - f(x).
\label{eq:cond_stein}
\end{align}
For an ERGM,  $t(x)$ {could be taken} as a sufficient vector of statistics\footnote{When conditioning on the sufficient statistics, for ERGMs  the resulting Stein operator 
allows
%admits nice properties 
to establish elegant approximation results \citep{bresler2019stein, reinert2019approximating}.}, but {here we do not assume  a  parametric network model $q(x)$, and}
%in general, 
$t(x)$ does not have to be sufficient statistics for $q(x)$.

Recall that an operator is a Stein operator for a distribution $\mu$ if its expectation under $\mu$ is zero. {The following result
% , which is 
provides a theoretical foundation for \AgraSSt{}
and is proven in SI.\ref{app:proof}.} 
{ 
\begin{lemma}\label{Stein operators}
{In this setting, $\mathcal{A}_{q,{\Delta_s t(x)}={\uk}}^{(s)}$ is a Stein operator for the conditional distribution of $X$ given ${\Delta_s t}(X)= \uk$, and 
% similarly 
$ \sum_s \mathcal{A}_{q,{\Delta_s t(x)} = {\uk}}^{(s)}$ is a Stein operator for %the distribution of $Y_{\uk}$, that is,
the conditional distribution of $X$ given ${\Delta_s t(X)}=\uk$. 
}
\end{lemma}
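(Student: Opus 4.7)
The plan is to verify the defining Stein identity
\[
\E_{q_{\uk}}\bigl[\A^{(s)}_{q,t=\uk} f(X)\bigr] = 0
\]
for every bounded test function $f:\{0,1\}^N\to\R$, every edge index $s\in[N]$, and every value $\uk$ in the image of $t$. The statement for $\sum_s \A^{(s)}_{q,t=\uk}$ then follows immediately from linearity of expectation, so the crux of the proof is the per-coordinate identity.

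The first step is to recognise the operator in Eq.\eqref{eq:cond_stein} as a conditional-expectation residual. The weights $q_t(x^{s,1})$ and $q_t(x^{s,0})$, read off Eq.\eqref{eq:cond_prob}, are precisely the conditional probabilities that $X^s=1$ and $X^s=0$ given the remaining edge indicators $X_{-s}=x_{-s}$ under the restricted law $q_{\uk}$. Hence
\[
q_t(x^{s,1}) f(x^{(s,1)}) + q_t(x^{s,0}) f(x^{(s,0)}) = \E_{q_{\uk}}\!\bigl[f(X) \,\big|\, X_{-s}=x_{-s}\bigr],
\]
so that $\A^{(s)}_{q,t=\uk} f(x) = \E_{q_{\uk}}[f(X)\mid X_{-s}=x_{-s}] - f(x)$.

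The second step is a single application of the tower property: taking expectations under $q_{\uk}$ and iterating conditioning on $X_{-s}$ gives $\E_{q_{\uk}}\!\bigl[\A^{(s)}_{q,t=\uk} f(X)\bigr] = \E_{q_{\uk}}\bigl[\E_{q_{\uk}}[f(X)\mid X_{-s}]\bigr] - \E_{q_{\uk}}[f(X)] = 0$, which proves the Stein identity for each summand, and summing (or averaging with a $1/N$ factor as in Eq.\eqref{eq:ergm_stein}) over $s\in[N]$ delivers the second claim. An equivalent, more illuminating route is to note that Eq.\eqref{eq:cond_prob} defines a Glauber-type Markov chain on $\G^{lab}_n$ with reversible stationary distribution $q_{\uk}$, and $\sum_s \A^{(s)}_{q,t=\uk}$ is its infinitesimal generator; generators vanish in expectation under the stationary law.

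The only subtlety I anticipate is a bookkeeping issue about supports: a flip at edge $s$ may push $x$ out of the level set $\{t=\uk\}$, so one must ensure that inadmissible flips receive transition probability zero in Eq.\eqref{eq:cond_prob}, keeping the chain inside the conditioning event. This is handled cleanly by the very definition of $q_t$ as the conditional law on $\{t=\uk\}$, which automatically suppresses such transitions; because the state space is finite, no further moment or regularity conditions on $f$ are needed. The main obstacle is therefore not analytical but notational, namely being careful that $q_t(x^{s,\cdot})$ is consistently read as the $q_{\uk}$-conditional one-coordinate law given $X_{-s}$.
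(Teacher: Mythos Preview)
Your argument is correct and rests on the same identity as the paper's, but you package it differently. The paper evaluates $\mathcal{A}_{q,t=\uk}^{(s)} f$ separately at $x^{(s,1)}$ and $x^{(s,0)}$, then pairs these two contributions for each $x_{-s}$ and shows they cancel term by term; this is the detailed-balance computation written out explicitly. You instead recognise the operator as $\E_{q_{\uk}}[f(X)\mid X_{-s}]-f$ and appeal once to the tower property, which is the same cancellation seen from one level up. Your route is shorter and makes the Gibbs--sampler structure transparent (and your generator remark matches the paper's Glauber-dynamics motivation), while the paper's explicit pairing has the virtue of not relying on the reader unpacking what $q_t(x^{s,1})$ means as a conditional law. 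The interpretive caveat you flag---that $q_t(x^{s,\cdot})$ must be read as the $q_{\uk}$-conditional one-coordinate law given $X_{-s}$---is precisely what the paper's pairing step uses as well when it factors the summand as $p_t(x^{(s,1)})p_t(x^{(s,0)})$, so you are not assuming anything the paper does not.
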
 
% Proof of \cref{Stein operators}
% \begin{proof}  We have 
% \begin{align*}\mathcal{A}_{q,t}^{(s)} f(x^{s,1}) 
%  = & q(x^{s,0} | t(x_s) ) (f( x^{(s,0)})  - f(x^{s,1}))\\
%  \mathcal{A}_{q,t}^{(s)} f(x^{s,0}) 
%  = & q(x^{s,1} | t(x_s) ) (f( x^{(s,1)})  - f(x^{s,0})). 
% \end{align*}
% Thus, for $\mathbb{E}_{\uk}$ denoting the conditional distribution given $t (X)$, 
% \begin{align*}
% &\mathbb{E}_{t} \mathcal{A}_{q,t}^{(s)} f(Y) \\
%  = & \sum_{x_{-s}} \left\{ \mathbb{1} (x_s=1)
%  p_{t} ( x^{(s,1)}) q(y^{s,0} | t(y_s) ) \right. \\
%  & \left. (f( x^{(s,0)})  - f(x^{s,1}))  \right.\\
%  & \left. -  \mathbb{1} (x_s=0)
%  p_{t} ( x^{(s,0)}) q(y^{s,1} | t(y_s) ) \right. \\
%  & \left. (f( x^{(s,0)})  - f(x^{s,1})) \right\}\\
%  =& \sum_{x_{-s}}(f( x^{(s,0)})  - f(x^{s,1})) \\
%  & p_{t} ( x^{(s,1)}) p_{t} ( x^{(s,0)}) \left\{ 
%  \mathbb{1} (x_s=1)   -  \mathbb{1} (x_s=0)   
%  \right\} \\ 
%  &= 0
%  .
% \end{align*}
% \end{proof} 
In particular, 
$\mathbb{E}{_{q_{t=\uk}}} \left[  \mathcal{A}_{q, {\Delta_s t(x)} ={\uk}}^{(s)} 
%(Y_{t={\uk}}) 
\right] =0.$ 
Intuitively, if $\tilde{X}$ has distribution which is close to that of $X$, then with $\tilde{Y}_{{\Delta_s t(x)}={\uk}}$ denoting the corresponding random graph with distribution that of ${\tilde X}$ given ${\Delta_s t(X)}= \uk,$ it should hold that 
$\mathbb{E} \left[  \mathcal{A}_{q,t={\uk}}^{(s)} ({\tilde Y}_{{\Delta_s t(Y)}={\uk}}) \right] \approx 0$. 
In this way the Stein operator in Eq.\eqref{eq:cond_stein} can be used to assess the similarity between distributions. 
}

\subsection{Approximate Stein operators}

For implicit models and graph generators $G$, the Stein operator $\mathcal{A}_{q,t}^{(s)}$ in Eq.\eqref{eq:cond_stein}
% in Eq.\eqref{eq:ergm_stein} and \eqref{eq:stein_component} 
cannot be obtained without explicit knowledge of 
% $q(x^{(s,1)}|t(x_{-s}))$
$q_t(x^{s,1})$. However, {given a large number for samples from the graph generator $G$}, %with the samples from graph generator,
the conditional edge probabilities 
% $q(x^{(s,1)}|t(x_{-s}))$ 
$q_t(x^{s,1})$ can be estimated. {Here we 
denote by  
% $\widehat q(x^{(s,1)}|t(x_{-s}))$ 
$\q_t(x^{s,1})$
{an estimate of} 
% $q(x^{(s,1)}|t(x_{-s}))$
$q_t(x^{s,1})$;
{some estimators will be suggested in \cref{sec:estimation}}.} 
% \todo{rearranged} 

% analogous to Eq.\eqref{eq:ergm_stein} with estimated $q(x^{(s,1)}|t(x_{-s}))$ for Eq.\eqref{eq:stein_component}.
% approximate the ERGM Stein operator via estimating conditional edge probabilities from Glauber dynamics in a nonparametric fashion.
% that $\A$ gives rise to a distribution on networks for which we can estimate the conditional edge probabilities in a nonparametric fashion.
% Instead, the probability law over the graphs may need to be approximated otherwise. 

%The essence of the proposed
AgraSSt {performs} 
%is to perform
model assessment using {an operator which approximates the} % approximate 
Stein operator
%based on
$\mathcal{A}_{q,t}^{(s)}$. 
{We define t}he approximate Stein operator for {the} conditional random graph by 
%is defined via such estimation,
\begin{align}
% \mathcal{A}_{\q,t}^{(s)} f(x) 
%  = & \q(x^{s,1} | t(x_s) ) (f( x^{(s,1)})  - f(x)) \nonumber\\
% & + \q(x^{s,0} | t(x_s) ) (f( x^{(s,0)})  - f(x)).
\mathcal{A}_{\q,t}^{(s)} f(x) 
 =  \q_t(x^{s,1})f(x^{(s,1)})  + \q_t(x^{s,0})f( x^{(s,0)})  - f(x).
\label{eq:approx_cond_stein}
\end{align}
The 
%corresponding 
{vertex-pair} averaged {approximate} Stein operator is
\vspace{-2mm} 
\begin{equation}\label{eq:approx_stein}
\A_{\widehat q, t} f(x) = \frac{1}{N} \sum_{s \in [N]} \mathcal{A}_{\q,t}^{(s)} f(x).
\end{equation}
\vspace{-2mm} 
\subsection{Estimation with chosen 
% network 
statistics
on graphs
% $t(x_{-s})$
}\label{sec:estimation}

Using the Stein operator for conditional graph distributions, we can obtain the approximate Stein operators Eq.\eqref{eq:approx_cond_stein} and \eqref{eq:approx_stein} for an implicit graph generator $G$ by estimating 
% $\q(x^{(s,1)}|t(x_{-s}))$
$q_t(x^{s,1})$. 
% \todo{Note that we estimate $q_t(x^{s,1})$ by $\q_t(x^{s,1})$} 
Here $t(x)$ are user-defined statistics.
{In principle, any multivariate statistic $t(x)$ can be used in this formalism. However, estimating the conditional probabilities using relative frequencies {can} 
%could 
be computationally prohibitive when the graphs are very large and specific frequencies are rarely observed. Instead, here we consider simple summary statistics, such as} edge density, degree statistics or {the} number of neighbours connected to both vertices of $s$.  
% {\color{red}{mention those which we actually use}} 
% for computational efficient 
The estimation procedure is presented in Algorithm \ref{alg:est_conditional}.

\begin{algorithm}[t]
   \caption{Estimating {the} conditional probability 
%   for implicit models
   }
   \label{alg:est_conditional}
\begin{algorithmic}[1]
\renewcommand{\algorithmicrequire}{\textbf{Input:}}
\renewcommand{\algorithmicensure}{\textbf{Procedure:}}
% \INPUT~~\\
\REQUIRE
Graph generator $G$; statistics $t(x)$;
 %   {number of generated network samples} $L$;
\ENSURE~~\\
\STATE Generate samples $\{x_1,\dots,x_L\}$ from $G$.
\STATE For ${s\in[N], i\in[n]}$,
{let $n_{s, k}$ the number of 
%simulated 
graphs {$x_l, l \in [L], $} in which $s$ is present and %the chosen network statistic
${\Delta_s t(x)}=k$.} 
%when the compute the count of edge-present on $\{x_i^{(s,1)}\}$ based on chosen network statistics $t(x_{-s})$.
\STATE Estimate the conditional probability of {the edge $s$ being present conditional on ${\Delta_s t(x)}=k$ by an estimator
%edge-present function 
{${\widehat{g_t}}(s;k) $}, using a look-up table or smoothing.} 
%\approx q(x^{(s,1)}|t(x_{-s}))$.
\renewcommand{\algorithmicensure}{\textbf{Output:}}
\ENSURE
{${\widehat{g_t}}(s;k) $}
%$g_t(s;k)$
that estimates 
%the 
% conditional distribution 
$q(x^{(s)} {=1} |{\Delta_s t(x)}=k)$.
% $\q_t(x^{s,1})$.
\end{algorithmic}
\end{algorithm}

To estimate 
% $\widehat{q}(x^{(s,1)}|t(x_{-s})=k)$ 
$q_t(x^{s,1})$ in Step~3 of Algorithm \ref{alg:est_conditional}, if the underlying graph has exchangeable edge indicators  then
% $q(x^{(s)}=1| t(x_{-s} =k)$
$q_t(x^{s,1})$ does not depend on the choice of vertex-pair $s$, %Hence,
an intuitive way is to use a \emph{lookup table}. 
%Suppose that
{If} $t(x)$ is a possibly multivariate statistic with a discrete number of outcomes, count {$n(\uk, s)$, the number of times that vertex-pair $s$ is present in the simulated graphs and
%the number of edges $n_k$ present for 
} ${\Delta_s t(x)} = \uk$;  {set $n(\uk) = \sum_s n( \uk, s),$} and 
$N_{\uk} = \sum_{i =1}^L \sum_s {\mathbb{1}} ({\Delta_s t(x)}= k)$;
% \gr{instead of dividing by $n $ in $g(k)$ should we divide by $\sum_{i =1}^n {\bf{1}} ( t(x_{-s}) = k) $}
{if $t = \uk$ then estimate $\q_t(x^{s,1})$ by} 
%the probability is estimated via
\vspace{-2mm} 
\begin{align}\label{gk}
    \hat g_t(\uk) = \frac{n_k}{N_{\uk}}{{\mathbb{1}}(N_{\uk} \ge 1)}.
\end{align}
%\vspace{-2mm} 
% \gr{Here, $N = {|V| \choose 2}$ is the number of vertices in the network and $n$ is the number of simulated networks.} 
% \gr{perhaps divide by $n$ in the definition of $n(s,k)$?} 
% \gr{say something about binning; $k(min), k(max)$} 
{If} $k_{min}, k_{max}$ denote the minimum and maximum values of statistics from simulated graph samples, {then for $\uk$ outside this set,} 
%outside which 
the lookup table {estimator Eq.\eqref{gk}} {{is set to estimate} %estimates
$\hat g(\uk)=0 $ .
% \wk{for $k \notin [k_{min}, k_{max}]$}. 
\iffalse 
\widehat q(x^s|k<k_{min}) = 0$ and $\widehat q(x^s|k>k_{max}) = 1$ to respect the sparsity of the network.
Then if $\sum_{i =1}^L {\bf{1}} ( t(x_{-s}) = k)=0$ for $k\in [k_{min}, k_{max}]$, we estimate $\widehat q(x^{s}|t=k)$ as 0. This will lead to some underestimation in the ER model.
\gr{The new consistency result now takes care of this!} 
\fi 

% \gr{change the number of networks to $L$ as $n$ is usually the number of vertices?} 

%\gr{Should we just specify that we condition on the sum of the degrees, and not give the general version?}

%\gr{do we need a multivariate version so that we can condition on both degrees? Do we need a more sophisticated estimator which pays respect to $s$? No, because of exchangeability.} 

% However, depending on the choice of network statistics, the space of possible values of $t(x)$ can be combinatorically complex and efficiently construct the lookup table can be difficult. With $L$ not so large, we may consider estimating the conditional probabilities via kernel smoothing. 

If the underlying graph cannot be assumed to have exchangeable edge indicators or if the statistic 
$t$ is high dimensional, then any particular $n(\uk, s)$ may not be observed very often. In such a situation we
%Moreover, we 
can learn $\hat g_t(s;k)$ using kernel ridge regression so that the conditional probabilities for similar ${\Delta_s t(x)}$ are predicted in a smooth manner. 
% 
% {\color{red}{Did we try out the approach below? If not perhaps not mention}} 
%Cumulative-function based estimation may also apply, i.e.
Estimating $q(x^s|{\Delta_s t(x)} \leq k)$ instead of  $ q(x^s|{\Delta_s t(x)}= k)$
% instead of the equality, 
%which
{may} provide an alternative, smoother estimate for the conditional probabilities. 
% \td{In addition, we can consider some form of mean-field approximation for $g(k)$. These assumptions may be crude; but it will be interesting to find out whether these assumptions give good results in this situation.}

The next result 
 shows that the approximate Stein operator achieves 
the Stein identity 
%can be 
asymptotically.
For this result, which is proved in SI.\ref{app:proof}, we use the notation
$\| \Delta f\| = \sup_{s \in [N], x} \|\Delta_s f (x)\|.$

\begin{theorem} \label{th:consist}
%Let $q$ be the probability law of an exchangeable random graph model and 
Assume 
% that 
% $\q (x^{(s,1)}| t(x_{-s}) )  $ 
$\q_t(x^{s,1})$
is a consistent estimator for 
% $q(x^{(s,1)}| t(x_{-s}) ) $ 
$q_t(x^{s,1})$ 
as $L \rightarrow \infty$. 
% \todo{deleted: from Algorithm \ref{alg:est_conditional} as it is not needed for the statement} 
%from Algorithm \ref{alg:est_conditional}.  
Then {for any 
% function
$f$ 
such that
 $ || \Delta f || < \infty $}  we have $\E_q[\A_{ \q, t}f(x)] \to \E_q[\A_{q,t}f(x) ] =0$ as {$L \rightarrow \infty$.} 
%$\q \to q$.
\end{theorem}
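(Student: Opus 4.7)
The plan is to reduce the claim to a simple consistency argument using the finiteness of the graph space and the bounded difference structure of the approximate Stein operator.

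First, I would establish that $\mathbb{E}_q[\mathcal{A}_{q,t} f(x)] = 0$. By Lemma~\ref{Stein operators}, the operator $\mathcal{A}_{q,t=\uk}^{(s)}$ is a Stein operator for the conditional distribution of $X$ given $t(X)=\uk$, so $\mathbb{E}_{q_{t=\uk}}[\mathcal{A}_{q,t=\uk}^{(s)} f] = 0$ for every $s \in [N]$ and every value $\uk$ that $t$ can take. Partitioning $\mathbb{E}_q$ by conditioning on $t(X)$ gives
\begin{equation*}
\mathbb{E}_q[\mathcal{A}_{q,t}^{(s)} f(X)] = \sum_{\uk} q(t(X)=\uk)\, \mathbb{E}_{q_{t=\uk}}[\mathcal{A}_{q,t=\uk}^{(s)} f] = 0,
\end{equation*}
and averaging over $s \in [N]$ preserves this.

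Next, I would compute the difference between the exact and approximate operators pointwise in $x$. Since natural estimators (such as the lookup table in \eqref{gk}) satisfy $\widehat{q}_t(x^{s,0}) = 1 - \widehat{q}_t(x^{s,1})$, matching the identity $q_t(x^{s,0}) = 1 - q_t(x^{s,1})$, the terms $-f(x)$ cancel and collecting coefficients yields
\begin{equation*}
\mathcal{A}_{\widehat{q},t}^{(s)} f(x) - \mathcal{A}_{q,t}^{(s)} f(x) = \bigl[\widehat{q}_t(x^{s,1}) - q_t(x^{s,1})\bigr]\, \Delta_s f(x).
\end{equation*}
Therefore, with $\varepsilon_s(x) := \widehat{q}_t(x^{s,1}) - q_t(x^{s,1})$,
\begin{equation*}
\bigl|\mathcal{A}_{\widehat{q},t} f(x) - \mathcal{A}_{q,t} f(x)\bigr| \le \frac{1}{N} \sum_{s\in[N]} |\varepsilon_s(x)|\, \|\Delta f\|.
\end{equation*}

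Taking expectation over $X \sim q$ and applying the triangle inequality gives
\begin{equation*}
\bigl|\mathbb{E}_q[\mathcal{A}_{\widehat{q},t} f(X)] - \mathbb{E}_q[\mathcal{A}_{q,t} f(X)]\bigr| \le \|\Delta f\|\, \cdot \frac{1}{N} \sum_{s\in[N]} \mathbb{E}_q|\varepsilon_s(X)|.
\end{equation*}
By consistency of $\widehat{q}_t$, each $|\varepsilon_s(X)| \to 0$ (in probability) as $L \to \infty$. Since $\mathcal{G}^{lab}_n$ is finite and each $|\varepsilon_s(x)| \le 1$, the bounded convergence theorem upgrades this to convergence in $L^1$, so the right-hand side vanishes. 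Combined with the first step, this yields $\mathbb{E}_q[\mathcal{A}_{\widehat{q},t} f(X)] \to 0$ as $L \to \infty$.

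The step requiring most care is the decomposition: it depends on the estimator respecting the probability-normalisation constraint $\widehat{q}_t(x^{s,1}) + \widehat{q}_t(x^{s,0}) = 1$ so that the bound factors through $\|\Delta f\|$ rather than $\|f\|_\infty$. For estimators that do not respect this constraint, the argument still goes through with the weaker bound using both $\widehat{q}_t(x^{s,1}) - q_t(x^{s,1})$ and $\widehat{q}_t(x^{s,0}) - q_t(x^{s,0})$ controlled separately by consistency, at the cost of replacing $\|\Delta f\|$ by $\|f\|_\infty$. The remaining subtlety is the mode of consistency, but because the ambient space is finite and errors are bounded, any reasonable notion (in probability, in expectation, almost sure) suffices via dominated convergence.
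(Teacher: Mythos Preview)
Your proof is correct and follows essentially the same approach as the paper: both compute the pointwise difference $\mathcal{A}_{\widehat{q},t}^{(s)} f(x) - \mathcal{A}_{q,t}^{(s)} f(x) = [\widehat{q}_t(x^{s,1}) - q_t(x^{s,1})]\,\Delta_s f(x)$, bound by $\|\Delta f\|$ times the estimation error, and invoke consistency. You are more explicit than the paper in two places: you justify the ``$=0$'' part of the statement via Lemma~\ref{Stein operators} and a conditioning argument (the paper simply asserts it), and you spell out why convergence in probability suffices using boundedness and the bounded convergence theorem (the paper stops at the pointwise bound and says ``the assertion follows'').
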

Section \ref{sigauss} in SI.\ref{app:proof} 
%{Section \ref{sec:theory} 
provides refined results for ERGMs, including a Gaussian approximation. % \wk{shall we directly refer to supplementary?} \gr{good idea. I notice that the theoretical results are now dotted across the paper and not all in Section 5. Shall we include the material from Section 5 here? } 
%{\color{red}{proof?}} 
%\wk{We can do ER approximation for $q$ being ERGM maybe?
%}

\subsection{
% Approximate Graph Stein Statistics 
AgraSSt for implicit graph generators
}
The estimated conditional probabilities %then 
give an approximate Stein operator for Eq.\eqref{eq:cond_stein}. 
With the appropriately defined Stein operator from {an} implicit model {given} in Eq.\eqref{eq:approx_cond_stein}, we 
% are able to
can define \AgraSSt, 
% which is 
a kernel-based statistic analogous to gKSS in Eq.\eqref{eq:gkss}, as
$\operatorname{AgraSSt}(\widehat q, t;x) 
 = \sup_{\|f\|_{\H}\leq 1} \Big|N^{-1}\sum_s \A^{(s)}_{\widehat q,t} f(x)\Big|.$
% \vspace{-2mm} 
% \begin{align}\label{eq:AgraSSt}
%     \operatorname{AgraSSt}(\widehat q, t;x) 
%  = \sup_{\|f\|_{\H}\leq 1} \Big|\frac{1}{N}\sum_s \A^{(s)}_{\widehat q,t} f(x)\Big|.
% \end{align}
% \vspace{-2mm} 
% 
{In SI.\ref{app:proof} we prove the following result.} 

\begin{theorem}\label{consistenterop}
{If the graph is edge-exchangeable, then}
%any test function $f$ for which the
%$G(n,q)$
%Stein operator $\A_q f$ is well defined, and for all $\epsilon > 0$
%$$ 
%\mathbb{P}(|  \A_{\widehat q} f(X)  - \A_q f (X)| > \epsilon) \le \frac{4 }{\epsilon^2 N L  (|| \Delta f||)^{-2} }  \left(  \gr{ \left\{  p(s, \uk)   [  1- p(s, \uk)  p(\uk)  ] + 1 - p(\uk) \right\} } \right).
%$$
%Similarly, \gr{if the network is edge-exchangeable then} for $\A_{g(\uk)}$, the estimated $G(n,q)$ Stein operator using $\hat{q} = g(\uk),$ consistency holds; for all $\epsilon >0$, 
%\begin{align*}
%    \mathbb{P} ( |  \A_{g(\uk)} f(X)  - \A_q f (X) | > \epsilon) 
%    &\le \frac{4  }{\epsilon^2 NL  (|| \Delta f||)^{-2} }  \left(  \gr{ \left\{  p(s, \uk)   [  1- p(s, \uk)  p(\uk)  ] + 1 - p(\uk) \right\} } \right).
%\end{align*}
%Moreover,
{as $L \rightarrow \infty$, }  ${\AgraSSt}^2 (\q,t;x) $
%=  \frac{1}{N^2} \sum_{s, s'\in [N]} h_x(s, s')
%$$
%with 
%$$h_x(s, s') = \left\langle \A^{(s)}_{\hat q}  K(x,\cdot), \A^{(s')}_{\hat q}  K(\cdot,x)\right\rangle_{\H}$$
is a consistent estimator of 
% $\operatorname{gKSS}^2(q;x)$ in Eq.\eqref{eq:gkss_quadratic}.
\begin{equation}
{\rm{gKSS}}^2 (q; x) = N^{-2}  \sum_{s, s'\in [N]} 
\left\langle \A^{(s)}_{ q}  K(x,\cdot), \A^{(s')}_{ q}  K(\cdot,x)\right\rangle_{\H}.
    \label{eq:gkss_quadratic}
\end{equation}
\end{theorem}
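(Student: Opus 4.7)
My approach is to first cast $\operatorname{AgraSSt}^2$ in a quadratic-form expression analogous to the right-hand side, and then reduce the convergence to the consistency of the approximate Stein operator already established in Theorem~\ref{th:consist}. For the first step, the reproducing property of $\mathcal{H}$ gives $f(y)=\langle f, K(y,\cdot)\rangle_{\mathcal{H}}$, so since each $\mathcal{A}^{(s)}_{\widehat q, t}$ acts linearly in $f$ we can pull the inner product outside:
\begin{equation*}
N^{-1}\sum_s \mathcal{A}^{(s)}_{\widehat q, t} f(x) = \Big\langle f,\; N^{-1}\sum_s \mathcal{A}^{(s)}_{\widehat q, t} K(x,\cdot)\Big\rangle_{\mathcal{H}}.
\end{equation*}
Taking $\sup_{\|f\|_{\mathcal{H}}\le 1}$ and squaring (Cauchy--Schwarz, attained at the normalised Riesz representer) yields
\begin{equation*}
\operatorname{AgraSSt}^2(\widehat q, t; x) = N^{-2}\sum_{s,s'\in[N]} \big\langle \mathcal{A}^{(s)}_{\widehat q, t} K(x,\cdot),\, \mathcal{A}^{(s')}_{\widehat q, t} K(x,\cdot)\big\rangle_{\mathcal{H}}.
\end{equation*}
Thus the claim reduces to showing that each summand converges to its analogue built from $\mathcal{A}^{(s)}_{q}$.

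For the second step, I use edge-exchangeability of the graph to identify the two families of operators. Expanding $\mathcal{A}^{(s)}_{q}$ from Eq.\,\eqref{eq:ergm_stein} as $q(x^{(s,1)}|x_{-s})f(x^{(s,1)})+q(x^{(s,0)}|x_{-s})f(x^{(s,0)})-f(x)$ shows it has the same functional form as $\mathcal{A}^{(s)}_{q,t}$ in Eq.\,\eqref{eq:cond_stein}, only with a potentially finer conditioning. Under edge-exchangeability the conditional $q(x^{(s,1)}|x_{-s})$ depends on $x_{-s}$ only through an edge-symmetric summary, so choosing $t$ to capture that summary gives $q_t(x^{s,1})=q(x^{(s,1)}|x_{-s})$ and hence $\mathcal{A}^{(s)}_{q,t}=\mathcal{A}^{(s)}_{q}$. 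The RKHS-norm error between the estimated and true Stein actions on $K(x,\cdot)$ is
\begin{equation*}
\big\|\mathcal{A}^{(s)}_{\widehat q, t} K(x,\cdot)-\mathcal{A}^{(s)}_{q, t} K(x,\cdot)\big\|_{\mathcal{H}} \le |\widehat q_t(x^{s,1})-q_t(x^{s,1})|\,\big\|K(x^{(s,1)},\cdot)-K(x^{(s,0)},\cdot)\big\|_{\mathcal{H}},
\end{equation*}
which is bounded by a constant (using boundedness of $K$) times the estimation error. Consistency of the lookup-table estimator from Algorithm~\ref{alg:est_conditional}, which follows from the strong law of large numbers applied to the pooled edge indicators under edge-exchangeability, sends this error to zero as $L\to\infty$. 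Applying Theorem~\ref{th:consist} with $f=K(x,\cdot)$ (whose $\|\Delta f\|$ is finite by boundedness of $K$) formalises this convergence.

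Finally, since the inner product is continuous with respect to the RKHS norm, summand-wise convergence in norm yields convergence of each of the $N^2$ inner products in the quadratic form, and hence of their finite average. The main obstacle, I expect, is the identification step: justifying rigorously that under edge-exchangeability the chosen statistic $t$ makes $q_t$ coincide with $q(x^{(s,1)}|x_{-s})$ uniformly in $s$. A clean way forward is to exploit an Aldous--Hoover-type representation so that $q(x^{(s,1)}|x_{-s})$ factors through a symmetric functional of $x_{-s}$ of which $t$ is (eventually) sufficient; alternatively one can settle for a weaker ``$s$-averaged'' identity, which still suffices once we sum over $s,s'$ and divide by $N^2$ in the quadratic form.
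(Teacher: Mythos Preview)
Your core argument---express $\operatorname{AgraSSt}^2$ in RKHS quadratic form, bound the operator difference by $|\widehat q_t - q_t|$ times a kernel-dependent constant, and invoke consistency of the lookup-table estimator---is essentially the paper's proof. The paper packages the estimator consistency as a separate Chebyshev-type bound (their Proposition~A.2) rather than appealing to a law of large numbers, but the mechanism is the same.

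The one substantive divergence is your identification step $\mathcal{A}^{(s)}_{q,t}=\mathcal{A}^{(s)}_q$. The paper does \emph{not} do this: in the expanded version (their Theorem~A.3) the target on the right-hand side is written with $\mathcal{A}^{(s)}_{q,t}$, not with the full-conditional operator $\mathcal{A}^{(s)}_q$. In other words, the paper proves convergence of $\operatorname{AgraSSt}^2(\widehat q,t;x)$ to the $t$-conditioned analogue of $\mathrm{gKSS}^2$, and the notation $\mathcal{A}^{(s)}_q$ in the theorem statement should be read as $\mathcal{A}^{(s)}_{q,t}$. So you are attempting to prove a stronger statement than the paper actually establishes. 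Your argument for the identification is on the right track under finite exchangeability of the edge indicators (then $q(x^{(s,1)}\mid x_{-s})$ is a symmetric function of the binary vector $x_{-s}$ and hence depends only on its sum), but it goes through only when the user-chosen $t$ is at least as fine as the edge count; for a generic $t$ the equality $q_t(x^{s,1})=q(x^{(s,1)}\mid x_{-s})$ can fail, so this step needs an explicit hypothesis you have not stated. If you drop the identification and target $\mathcal{A}^{(s)}_{q,t}$ directly, your proof matches the paper's.
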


\paragraph{Re-sampling Stein statistic}
% We can construct re-sampling Stein operator in Eq~.\eqref{eq:resample_kss} using approximated conditional edge distributions with re-sampling size $B$. The gKSS in Eq.\eqref{eq:gkss_resample} with approximate conditional probability can be used as the test statistic to compare the implicit model and the observed network sample in a nonparametric manner. 
% We can verify the model fit using similar procedure as gKSS test in \cite{xu2021stein}.
A computationally efficient operator for large $N$ 
% with $N$ large 
can be derived via re-sampling
vertex-pairs $s$,
which creates a randomised operator. 
Let $B$ be the {fixed} size 
% of $s$
to be re-sampled. The re-sampled 
% Stein 
operator is
${\widehat\A_{\q,t}^B} f(x) = \frac{1}B \sum_{b\in [B]} \A^{(s_b)}_{\q,t} f(x),$
%\vspace{-2mm} 
% \begin{equation}\label{eq:resample_kss}
% {\widehat\A_{\q,t}^B} f(x) = \frac{1}B \sum_{b\in [B]} \A^{(s_b)}_{\q,t} f(x) ,
% \end{equation}
%\vspace{-2mm} 
where 
% ${b \in [B]}$ and 
$s_b$ are vertex-pair samples from $\{1, \ldots, N\}$, {chosen uniformly with replacement, independent of each other and of $x$.}
The  expectation of  $ {\widehat\A_{\q,t}^B} f(x)$ with respect to re-sampling is 
$
\E_B [ {\widehat\A_{\q,t}^B} f(x)  ]
{ = \E_S [ \A_{\q,t}^{(S)} f(x)]} 
% = \frac{1}{B} \sum_{b \in [B]} \frac{1}{N} \sum_{s_b =1}^N \A^{(s_b)}_q f(x) 
= \A_{\q,t} f(x). 
$
% The corresponding re-sampling AgraSSt is
%via re-sampled operator:
% \begin{align}\label{eq:agrsst_resample}
%     \widehat{\operatorname{gKSS}}(q;x) = \sup_{\|f\|_{\H}\leq 1} \Big|\frac{1}{B}\sum_{b\in [B]}\A^{(s_b)}_q f(x) \Big|.
% \end{align}

% {It is often more convenient to consider $\widehat{\operatorname{gKSS}}^2(q;x)$, which admits quadratic form via reproducing property. Let $\H$ associate with kernel $K$ and inner product $\langle \cdot, \cdot \rangle_\H$.
% % By the reproducing property of RKHS functions, {as for Eq.\eqref{hform},} algebraic manipulation allows  the supremum to be computed in closed form:
% % \begin{align}\label{eq:gkss_quadratic_form}
% % {\operatorname{gKSS}}^2(q;x) =  \frac{1}{N^2} \sum_{s, s'\in [N]} h_x(s, s')
% % \end{align}
% \begin{align}\label{eq:gkss_resample_quadratic_form}
%     \widehat{\operatorname{gKSS}}^2(q;x) =  \frac{1}{B^2} \sum_{b, b'\in [B]} h_x(s_b, s_{b'}).
% \end{align}
% where 
% $h_x(s, s') = \left\langle \A^{(s)}_q K(x,\cdot), \A^{(s')}_q K(\cdot,x)\right\rangle_{\H}.$
% }
The corresponding re-sampled AgraSSt is
$ \widehat{\operatorname{AgraSSt}}(\widehat q, t;x) = \sup_{\|f\|_{\H}\leq 1} \Big|\frac{1}{B}\sum_{b\in [B]}\A^{(s_b)}_{\widehat q,t} f(x) \Big|.$
% \vspace{-2mm} 
% \begin{align}\label{eq:AgraSSt_resample}
%     \widehat{\operatorname{AgraSSt}}(\widehat q, t;x) = \sup_{\|f\|_{\H}\leq 1} \Big|\frac{1}{B}\sum_{b\in [B]}\A^{(s_b)}_{\widehat q,t} f(x) \Big|.
% \end{align}
% \vspace{-2mm} 

{Similar to Eq.\eqref{eq:gkss_quadratic}, the squared version of $\widehat{\operatorname{AgraSSt}}$ admits {a representation} in a quadratic form,
%via the reproducing property,
\begin{align}\label{eq:AgraSSt_resample_quadratic_form}
    \widehat{\operatorname{AgraSSt}}^2({\widehat q,t};x) = B^{-2
}  \sum_{b, b'\in [B]} \widehat h_x(s_b, s_{b'}),
\end{align}
where 
$\widehat h_x(s, s') = \left\langle \A^{(s)}_{\widehat q,t} K(x,\cdot), \A^{(s')}_{\widehat q,t} K(\cdot,x)\right\rangle_{\H}.$
}
% {{The supremum in Eq.\eqref{eq:AgraSSt_resample} is achieved by} $$
% f^*(\cdot) = \frac{\frac{1}{B}\sum_b \A_{\q,t}^{(s_b)}K(x,\cdot)}{\left\| \frac{1}{B}\sum_a \A_{\q,t}^{(s_a)}K(x,\cdot) \right\|}.
% $$}
% 
%\wk
{We note that t}he randomised operator obtained via re-sampling is a form of stochastic Stein discrepancy {as introduced in} \citep{gorham2020stochastic}.

% which can be interpreted as the difference of between conditional expectation of one-step transition (with transition time at $t_1$) and the current state, measured via function $f$ \footnote{If the process is a martingale, the operator value is zero by definition. This might be an interesting discussion in a separate scope.}.
% \gr{It may be confusing to refer to the one step interpretation, the second expression is very natural} \wk{I got it .}

{For fixed $x$, {under mild conditions}  the consistency of $ \widehat{\operatorname{AgraSSt}^2}({\widehat q,t};x)$ as $B \to \infty$ is ensured by the following normal approximation, which follows 
% directly 
from Proposition 2 in \citep{, xu2021stein}.
\begin{proposition}\label{Bconsistency} Assume that $\widehat h_x(s, s')$ in \eqref{eq:AgraSSt_resample_quadratic_form} is bounded and that 
$ \widehat{\operatorname{AgraSSt}}({\widehat q,t};x) $ has 
non-zero 
variance $\sigma^2$. Let $Z$ be a normal variable with mean 
${\operatorname{AgraSSt}}({\widehat q,t};x) $ and variance $\sigma^2$. Then there exists an explicitly computable constant $C>0$ such that for all 3 times continuously differentiable functions $g$ with bounded derivatives up to order 3, 
$$ \mathbb{E} [ g( \widehat{\operatorname{AgraSSt}}({\widehat q,t};x) ) - g(Z) ] \le \frac{C}{B}.$$ 
\end{proposition}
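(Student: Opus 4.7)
The plan is to recognise the squared re-sampled statistic as a V-statistic in the i.i.d.\ uniform indices $s_1,\dots,s_B \in [N]$ and then invoke a Stein-type smooth-test-function bound of the kind established in Proposition 2 of \citep{xu2021stein}, with the square-root transformation handled at the end. Throughout, I would condition on the observed graph $x$ and on the estimator $\widehat q$, so that the only source of randomness is the re-sampling of $(s_b)_{b=1}^{B}$.

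First, I would write
$ \widehat{\operatorname{AgraSSt}}^2({\widehat q,t};x) = B^{-2}\sum_{b,b'} \widehat h_x(s_b,s_{b'}) $
explicitly as a degree-two V-statistic with kernel $\widehat h_x$, and split into the off-diagonal U-statistic part and the diagonal remainder $B^{-2}\sum_b \widehat h_x(s_b,s_b)$. Because $\widehat h_x$ is bounded by assumption, the diagonal contribution is deterministically $O(1/B)$, so it can be absorbed into the $C/B$ error term after Taylor expansion of $g$. The off-diagonal part is an unbiased estimator whose mean equals $\operatorname{AgraSSt}^2({\widehat q,t};x)$, since $\mathbb{E}_B[\widehat\A_{\widehat q,t}^B f(x)] = \A_{\widehat q,t}f(x)$ from the displayed identity preceding the proposition.

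Second, I would apply the Stein/exchangeable-pair machinery that underlies Proposition 2 of \citep{xu2021stein}: construct an exchangeable pair by resampling a single uniformly chosen $s_b$, bound the conditional first and second moments of the increment, and use the boundedness of $\widehat h_x$ to control the third-moment term. This yields a smooth-function bound
\[
\bigl| \mathbb{E}\bigl[g(\widehat{\operatorname{AgraSSt}}^2) - g(\widetilde Z)\bigr] \bigr| \le \frac{C_1}{B},
\]
where $\widetilde Z$ is Gaussian with mean $\operatorname{AgraSSt}^2$ and variance matching the variance of the V-statistic, and $C_1$ depends on $\|\widehat h_x\|_\infty$ and the derivatives of $g$ up to order three.

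Finally, I would transfer the bound from the squared statistic to $\widehat{\operatorname{AgraSSt}}$ itself via the delta method applied to the square root. Since $\sigma^2>0$ and $\operatorname{AgraSSt}({\widehat q,t};x)$ is strictly positive (otherwise the square root is degenerate), $\sqrt{\,\cdot\,}$ is smooth on a neighbourhood of the mean, so $\tilde g:=g\circ\sqrt{\,\cdot\,}$ has bounded derivatives up to order three there, and applying the previous display to $\tilde g$ gives the stated rate. The main obstacle I anticipate is the bookkeeping around the V-statistic diagonal and the third-moment coupling bound: carefully separating the $B=b=b'$ diagonal bias from the leading normal-approximation term, and tracking how $C$ depends on $\|\widehat h_x\|_\infty$, $\sigma^{-1}$, and the lower bound on $\operatorname{AgraSSt}$ needed for the square-root Taylor expansion to remain valid.
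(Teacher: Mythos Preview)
Your proposal is correct and essentially matches the paper's approach: the paper's entire proof is the one-line remark that the result ``follows from Proposition~2 in \citep{xu2021stein},'' and your Step~2 (recognising the squared statistic as a degree-two V-statistic in the i.i.d.\ re-sampled indices and applying the Stein exchangeable-pair bound from that proposition) is precisely what is being invoked. Your additional Steps~1 and~3 --- isolating the diagonal $O(1/B)$ contribution and passing from $\widehat{\operatorname{AgraSSt}}^2$ to $\widehat{\operatorname{AgraSSt}}$ via a delta-method/square-root argument --- are details the paper does not spell out (indeed the text immediately preceding the proposition speaks of the consistency of the \emph{squared} statistic, while the proposition is stated for the non-squared one), so your handling of this point is more careful than the paper itself; the caveat you flag about needing $\operatorname{AgraSSt}(\widehat q,t;x)>0$ for the square-root expansion is genuine and not addressed in the paper.
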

}

\section{Applications of AgraSSt}\label{sec:application}

\subsection{Assessing graph generators}\label{sec:assessing}
% \paragraph{Problem Setting} 
% Let us assume we have a graph generation procedure $G$, where $G$ can be of diverse forms including parametric models, ``black-box'' deep network based generative models, etc. With the given implicit models, our procedure want to answer whether $G$ generates samples that comes from the same distribution as the observe sample graph $x$, denoted as $G\sim x$.
% We want to answer the following question: is $x$ generated from $G$?
% {\color{red}{check: consistent use of graph vs network: graph for probabilistic object, network for GNN}} \wk{agree. i did it!}

% \paragraph
\AgraSSt{} measures {the} distributional difference between {the underlying distribution of} an implicit graph generator $G$ and an observed graph $x$, which is useful to assess the {quality of the generator $G$}.
%generator's quality. 
The  
{hypothesis testing}
procedure {for the null hypothesis that the observed graph $x$ comes from the same distribution that generates the samples, against the general alternative,} is shown in Algorithm \ref{alg:AgraSSt}.
{We emphasise two features of this procedure.}
%Such assessment comes with two emphasis.
% {Our proposed assessment} procedure that visually illustrated in Figure.\ref{fig:florentine} is presented in
% In our setting, only one realisation of $x$ is available, while the graph generator $G$ can provide as many samples as of interest. Users may want to assess whether the generator $G$ corresponds to probabilistic graph models that is consistent with the distribution that generates $x$. 
% Here we flip the notation in goodness-of-fit testing procedure and construct the null hypothesis as $\nullH_0: G\sim x$ to denote the generator $G$ is capable of generating graph samples that can resemble $x$. 
% While the goodness-of-fit test in general is not symmetric, we argue that this test addresses the question of interest, which is whether $G$ generates networks which ``look like'' $x$. 
% 
{Firstly, for} a given generator $G$, \AgraSSt{} directly assesses the quality of the implicit model represented via samples from $G$. {Secondly, the generator} 
$G$ can be {trained} on the observed graph $x$, for example through a deep neural network generator. 
% This emphasis is 
% especially useful for understanding the learning ability of graph generative models to validate whether the training procedure effectively learned the probabilistic model or instead over-fits specific network patterns in the training data.   
% 
By learning a deep neural network generator with training samples from the same distribution that generate $x$, \AgraSSt{} can assess the quality of the training procedure, i.e. whether the deep neural network is capable of learning the desired distributions. Additional details are discussed in SI.\ref{app:illustration}.
% 
% The procedure is based on AggraSSt and described in Algorithm \ref{alg:AgraSSt}.

% {\color{red}{check consistent use: Supplementary Material or SI.} \wk{I used lower case, seems you prefer upper case; I ll change it.}\gr{either is fine as long as it is uniform} 

\subsection{Interpreting trained graph generators}
If {the procedure in}  Algorithm \ref{alg:AgraSSt} rejects {the null hypothesis}, the generator may not be {suitable}
%``good-enough'' 
for generating samples from the distribution that generates the one observed graph. 
% {\color{red}{make clear: we consider only one observed network; for multiple networks 2-sample tests could be used}} \wk{this part is discussed in SI.\ref{app:kernel-based-testing}}
Hence, understanding where the misfit comes from can be very useful, especially for models trained from black-box deep neural networks.
\AgraSSt{} 
%procedure 
provides an interpretable model criticism by comparing the learned $\q_t(x^{s,1})$ with the  underlying $q_t(x^{s,1})$ {when available, such as in synthetic experiments from a specified ERGM}. {Such an  interpretation can be}  %which is also 
also useful to re-calibrate training procedures.

\subsection{Identifying reliable graph samples}\label{sec:reliable-sample}

% Generating samples that is close to $x$.
% \paragraph{When model validation rejects}
% Even when the validation procedure rejects
% $G$, it may due to the estimation of $\widehat q$ is done using a set of bad samples. 
% % We may still want to retrieve less worse network samples that is close to $x$. 
% We may re-draw a bunch of samples to re-estimate $\widehat q$. For a few times \td{quantify this number}, to get something that pass the validation.

If {the procedure in} Algorithm \ref{alg:AgraSSt} does not reject {the null hypothesis}, there is 
%\wk
{not enough evidence}
%{no reason 
to reject the hypothesis that
%a good chance that
the generator is capable of generating graphs that resembles the observed graph. 
% The model assessment is based on sample size $L$, which can be very large. However, 
{If a generator $G$ has passed this hurdle then it can be recommended for generating graph samples of the desired type. \AgraSSt{}  can also be put to use for the task of sample batch selection.} 
In real scientific studies, only a small batch of 
% reliable or 
representative graph samples may need to be generated for downstream tasks {such as}
%, e.g. 
% studying animal behaviours  following a given community interactive pattern; 
privacy-preserving {methods}
where users only access a small number of {graph}
data, or {a} randomised experimental design for community interaction.
To quantify the quality of sample batches
%can be quantified 
via 
% the estimation quality of $\q$ using 
$p$-values : 
% the empirical quantile $\gamma_{1-\alpha}$ in Step 5 of Algorihtm \ref{alg:AgraSSt}, i.e. $p$-value 
% Algorithm \ref{alg:AgraSSt}: 
(1) generate a sample batch {of size $m$, say}; (2) perform {the} steps in \textbf{Procedure} from Algorithm \ref{alg:AgraSSt}; (3) 
% {compute $\tau_l$ for each graph ${\bf x}_l, l=1, \ldots, m$, and} compute the \gr{empirical} 
compute {the} $p$-value $m^{-1} {\sum_{i=1}^m \mathbb{1}(\tau>\tau_i)}$, with $\tau$ as in step 3 and  $\{\tau_1,\dots,\tau_m\}$ as in step 5.
%by
If the $p$-value is smaller {than a pre-specified threshold}, generate another  sample batch; otherwise accept the current sample batch.
% {\color{red}{not clear, what is $\tau$? Calculated from the input graph ${\bf x}$?}}

\begin{algorithm}[t]
   \caption{Assessment procedures for graph generators}
   \label{alg:AgraSSt}
\begin{algorithmic}[1]
\renewcommand{\algorithmicrequire}{\textbf{Input:}}
\renewcommand{\algorithmicensure}{\textbf{Objective:}}
% \INPUT~~\\
\REQUIRE
    Observed graph $x$; graph generator $G$ and generated sample size $L$; 
    estimation statistics $t$; RKHS kernel $K$; re-sampling size $B$; {number of simulated graphs $m$}; confidence level $\alpha$;
% \ENSURE~~\\
% Test $\nullH_0: G \sim x$ versus $\nullH_1: G \not\sim x$.
\renewcommand{\algorithmicensure}{\textbf{Procedure:}}
\ENSURE~~\\
% \STATE Sample $\{z_1,\dots,z_n\}$ from $G$.
\STATE Estimate $\widehat q(x^{(s)}|{\Delta_s t(x)})$ based on Algorithm \ref{alg:est_conditional}.
\STATE Uniformly generate re-sampling index
$\{s_{1},\dots,s_{B}\}$ from $[N]$ with replacement.
\STATE Compute $\tau =\widehat{\operatorname{AgraSSt}}^2(\widehat q;x)$ in Eq.\eqref{eq:AgraSSt_resample_quadratic_form}. 
\STATE Simulate $\{z'_1,\dots,z'_m\}$ from $G$.
\STATE Compute $\tau_i =\widehat{\operatorname{AgraSSt}}^2(\widehat q;z'_i)$ in Eq.\eqref{eq:AgraSSt_resample_quadratic_form}.
% again with re-sampling. 
\STATE {Estimate} {empirical}  
% ($1-\alpha$) 
quantile $\gamma_{1-\alpha}$ via $\{\tau_1,\dots,\tau_m \}$.
\renewcommand{\algorithmicrequire}{\textbf{Output:}}
\REQUIRE
Reject the null if $\tau > \gamma_{1-\alpha}$; otherwise do not reject.
\end{algorithmic}
\end{algorithm}

\section{Empirical {results}
%Studies
}\label{sec:exp}

We first % show
%the 
{illustrate the performance of \AgraSSt{}} 
%correctness of our proposed procedures via empirical results 
on synthetic data, where the null distribution is 
% assumed to be 
\textit{known} {and we have control of the set-up; in particular we can illustrate the use of \AgraSSt{} for interpretable model criticism.} Then we 
{show the performance of \AgraSSt{}} 
%demonstrate the usefulness of the procedure
on 
%the 
{a} real-world data application to assess 
%the 
graph generator{s} trained via various deep generative models.

\subsection{Synthetic experiments}\label{sec:synthetic_exp}
{Only few  competing approaches are available for our task and many of them are devised specifically for ERGMs. Hence here we use an ERGM, namely}
%We consider
the Edge-2Star-Triangle (E2ST) model with
% as our synthetic example, which has also been discussed in relevant ERGM context 
% that has probability law
\begin{equation}\label{eq:e2st_model}
    q(x) \propto \exp \left(\beta_1 E_d(x) + \beta_2 S_2(x) + \beta_3 T_r(x) \right),
\end{equation}
where $E_d(x)$ denotes the number of edges of $x$, $S_2(x)$ denotes the 2-Star statistics and $T_r(x)$ denotes the triangle statistics.  
% $\beta = (-1.00, -0.25, 1.00)$, 
{Here,} $\beta = (-2.00, 0.00, 0.01)$ is chosen as the null model, 
while alternative {models are}  constructed by perturbing 
% 2Stars
the coefficient 
$\beta_2$ {as in}
\citet{yang2018goodness, xu2021stein}. 
% {\color{red}{comment on the $\psi$ function: conditions satisfied?}}
% 
{This 
% model 
%is a
particular ERGM  {is} 
% and is 
chosen {because}
%for two reasons: Firstly, 
it is the currently most complex ERGM for which a thorough theoretical analysis for parameter estimation is available, see \citet{mukherjee2013statistics}\footnote{The model also satisfies conditions in Theorem 1.7 in \citet{reinert2019approximating} where theoretical properties are  studied.
%Additional characterisation via $\varphi$ function are shown in SI.\ref{sigauss}.
}.} 
% {\color{red}{update to recent paper}}\wk{it says from his website to appear in Bernoulli, but no citation yet}.
%Secondly, there are not many competing approaches available for our task and some of them are devised specifically for ERGMs.} 
%gives a thorough theoretical analysis of the behaviour of this model so that it can be compared to some competing approaches which apply only to ERGMs and which are computer intensive.} 

% $\psi'(a)<1$

\subsubsection{{Related} 
%Complimentary 
approaches {for comparisons}} \label{subsec:related}
% Instead of approximating the Stein operators to build AgraSSt as the test statistics, 

%{\AgraSSt{} fills a gap in that there are not many approaches available for assessing the quality of graph samples from implicit distributions. 
To assess the performance of \AgraSSt{}, we
%We 
consider {the following}
%various
existing test statistics {which are either} tailored or modified 
%{here} 
% by computing via generated samples 
to perform assessment for implicit graph generators:
% We list various test statistics, 
%as the complimentary 
%approaches to compare with \textbf{AgraSSt}.
\textbf{Deg} {is a} degree-based statistics 
%proposed 
for
goodness-of-fit test of exchangeable random graphs \citep{ouadah2020degree} {based on the estimated} %where the
variance of the degree distribution. % is estimated. 
The statistics can be obtained from empirical degrees from samples generated from the implicit model.  
\textbf{TV\_deg} {denotes the} Total-Variation (TV)  distance between degree distributions. \citet{hunter2008goodness} %has
 proposes a simulation-based approach to construct graphical goodness-of-fit tests. \citet{xu2021stein}
%then 
quantifies {this approach using the}
%the visual approach for hypothesis testing using
total-variation distance between the distributions of chosen network statistics; see SI.\ref{app:TV} for details.
% , e.g. degree distribution or distribution of dyad-wise shared partners. 
% We adapt the formulation from \citet{xu2021stein} SI.D.
\textbf{MDdeg} {is the} Mahalanobis distance between degree distributions \citep{lospinoso2019goodness}.
% , where the Mahalanobis distance can be estimated using samples from the generator.
% between test statistics of choice. 
% We use affix after $\_$ to imply the statistics of choice, e.g. \textbf{TV\_deg} computes TV distance between degree distributions.
% \textbf{TST} the two-sample based testing procedure using one simulated sample from the generative model, m=1 case in \citet{ghoshdastidar2017two}. 
% A set of samples are generated from the model and two-sample tests are conducted between two sets of observed networks \cite{ghoshdastidar2018practical}. Despite the additional assumptions on for inhomogeneous Erd\"os-R\'enyi (IER) models for large graphs, the two sample test for single graph observation is only tested against one single graph sample generated from the implicit model which is less efficient. 

% \textbf{Baseline Approach} 
In the synthetic experiment where the parametric model is known (Eq.\eqref{eq:e2st_model}), {the}  
% With network statistics specified for ERGM, 
coefficients $\beta$
%can be
{are} estimated from generated samples for model assessment {to provide our} baseline approach,
% The distribution of estimated $\widehat\beta$ from generated samples can be used to check whether the observed network is close to the implicit generative model. 
denoted by \textbf{Param}.
% as the parameter estimation baseline approach and
Details can be found in SI.\ref{app:param-estimation}. 
% In addition, 
% with the knowledge of 
Knowing the explicit null model in the synthetic setting, we 
% are also able to 
can also compute gKSS in Eq.\eqref{eq:gkss}, 
% without approximation,
% conditional distribution, 
denoted by \textbf{Exact} as our benchmark. {The} Weisfeiler-Lehman graph kernel  \citep{shervashidze2011weisfeiler} with height parameter 3 is used for kernel-based approaches.
% both \textbf{AgraSSt} and \textbf{Exact}.
% assuming the explicit probability distribution for E2ST in Eq.\eqref{eq:e2st_model} is known.

\subsubsection{Simulation results} \label{subsec:simresults}

The rejection rates for various 
% approaches and AggraSSt 
settings are shown in Figure.\ref{fig:e2st}. 
% The null model in Eq.\eqref{eq:e2st_model} is as described above.
As the null model is relatively sparse
% , i.e. with around 
(edge density $11.2\%$), the sparser alternatives are much harder to distinguish while the denser ones are easier problems. 
% Such phenomenon is also demonstrated in \citet{xu2021stein} using gKSS for explicit models.
% 
%\todo{I put those into one paragraph}
In Figure.\ref{fig:e2st_validation}, we compare the AgraSSt procedure with {the approaches from 
% Subsection 
\cref{subsec:related}.} 
%various o
%ther assessment approaches described above. 
From the results, we see that \textbf{AgraSSt} performs competitively {to} the benchmark \textbf{Exact}, {which is only available when the model is known} explicitly, and outperforms other assessment procedures for implicit models.
% with good estimation of
% % the conditional probability 
% $q(x^{s}|t(x_{-s}))$. 
% The Total Variation distance on degree distribution 
\textbf{TV\_deg} {is} slightly less powerful than \textbf{AgraSSt} {but outperforms} 
%is also a better procedure compared to 
\textbf{Deg}.
%, though slightly less powerful than \textbf{AgraSSt}.  
Due to the small perturbation of the model parameter for the alternative, creating a hard problem for  sparser alternatives,  \textbf{MDdeg} and \textbf{Param} 
% statistics 
have {a much} lower rejection rate {and thus} less powerful for sparser alternatives.  
In Figure.\ref{fig:e2st_statistics}, we compare the performance of \textbf{AgraSSt} with different estimation methods for $\q_t(x^{s,1})$. The degree  $deg(k)$ of a vertex $k$ is calculated  {excluding the %potential edge 
{vertex-pair $s=(i,j)$}};
%Recall $s=(i,j)$. 
%{\gr{\textbf{Edges}: the edge density is used, excluding the vertex-pair $s=(i,j)$}}
\textbf{Sum\_deg}:  {for $s=(i,j)$} we set}
% $t(x_{-s})$ 
$t = deg(i) + deg(j)$;
%is the sum of the degree of $i$ and the degree of $j$;
%removing the presence of $s$;
\textbf{Cum\_deg}: the cumulative distribution function of sum of degrees are used;
% i.e. $q(x^s|t(x_{-s})\leq k))$ is estimated; 
\textbf{Bi\_deg}: {for $s=(i,j)$} {the 2-dimensional vector} ${t=(deg(i),deg(j))}$ {is} used;
%as a 2-dimensional vector for estimation.
\textbf{Edges}: $t$ 
% equals to 
is the edge density {after} removing vertex-pair $s$.
From the results, we see that {$\textbf{Edges}$}
%edge density 
{outperforms the} 
%the 
%best among o
other estimates, which echos the theoretical results shown in Theorem \ref{th:normalapprox} in SI, {as} the coefficient $\beta$ for E2ST satisfies {its assumptions}. We also see that using both vertex degrees as $2$d vector predicts substantially better than using predictors based on sum of degrees of two vertices.
In 
Figure.\ref{fig:e2st_resampling}, the comparison with re-sampling is shown. {
%As expected
With}
%With the
increase in re-sampling size, {the power of}  AgraSSt {increases}.
%becomes more powerful as expected. 
% Additional experiment results are shown in SI.\ref{app:exp}.

\begin{figure*}[t!]
    \centering
    % \vspace{-0.03cm}
    \subfigure[Different assessment approaches
    % degree statistics
    ]{
    \includegraphics[width=0.32\textwidth]{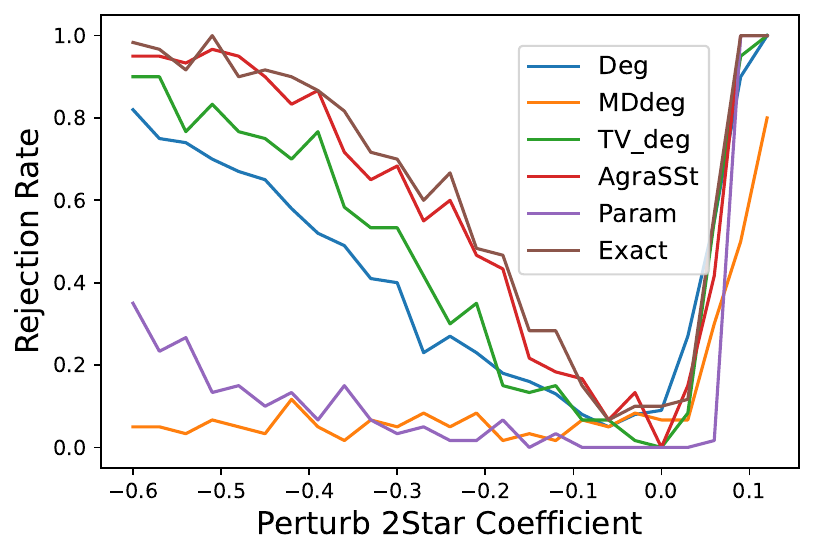}\label{fig:e2st_validation}}
    \subfigure[AgraSSt: different estimations]{
    \includegraphics[width=0.32\textwidth]{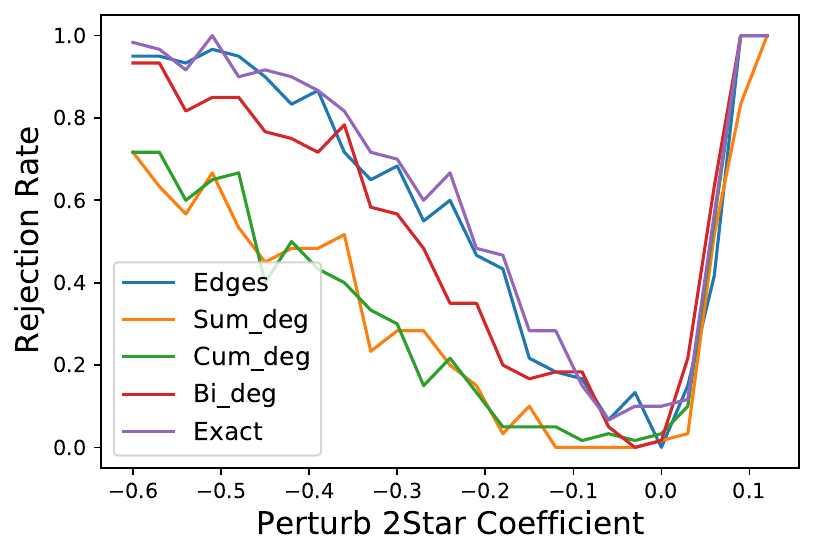}\label{fig:e2st_statistics}}
    \subfigure[AgraSSt with re-sampling]{
    \includegraphics[width=0.32\textwidth]{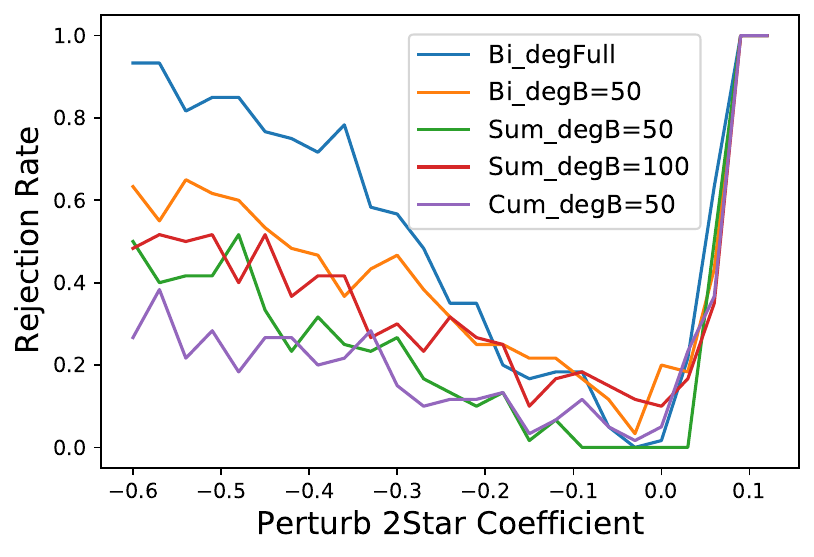}\label{fig:e2st_resampling}}
    %\vspace{-0.23cm}
    \caption{Synthetic experiment on E2ST
    % Edge-2Star-Triangle(E2ST) 
    model in Eq.\eqref{eq:e2st_model}: $100$ trials; $\alpha=0.05$; $L=1000$.
    % samples generated from the model. 
    }
    % \vspace{-0.5cm}
    \label{fig:e2st}
\end{figure*}

% \begin{table*}[t!]
%     \centering
%     \begin{tabular}{c|ccccc|c}
%         \toprule
%         {} & DeepGraph & GraphRNN & GRAN & NetGAN & NetGANwG & MC\\ 
%         \midrule
%         AgraSSt\_deg &  \\
%         Deg & \\
%         TV\_deg & \\
%         MD\_deg & \\
%         \bottomrule
%     \end{tabular}
%     \caption{Caption}
%     \label{tab:deep_model_validation}
% \end{table*}

\subsection{Real-world applications on deep graph generators}\label{sec:realdata_exp}
We now assess the performances of a set of state-of-the-art deep generative models for graphs trained on ERGMs and 
%\wk
{the} 
{Karate Club network collected by \citet{zachary1977information}.} The Karate Club network 
%is a graph on 
{has} 34 vertices 
%with 
{and} 78 edges 
%(density $13.9\%$) 
representing friendships.
%in a Karate Club. 
Soon after the data collection the Karate Club separated into two factions. 
% \todo{rearranged} 
This graph is a benchmark graph for community detection. {O}ne would not expect this graph to be 
close to an $G(n,p)$\footnote{Bernoulli random graph of size $n$, edge probability $p\in[0,1]$.}  graph 
%\wk
{or {to be well} modelled by an ERGM. }

\subsubsection{Graph generation methods}
\textbf{GraphRNN} \citep{you2018graphrnn}
%Graph Recurrent Neural Network (GraphRNN)
%designed 
{is} an architecture to generate graphs from learning two recurrent neural network{s} (RNN), one {a}  vertex-level RNN and the other {an} edge-level RNN. The procedure starts from a breadth-first-search for vertex ordering; two RNNs are trained from a sequential procedure.
%where subsequent vertices are added. 
\textbf{NetGAN} \citep{bojchevski2018netgan} utilises an adversarial approach by training an interplay between {a} generator and {a} discriminator neural network on graph data.
\textbf{CELL} \citep{pmlr-v119-rendsburg20a} 
%Cross Entropy Low-rank Logits (CELL)  
improves on {the} NetGAN idea by solving a low-rank {approximation} problem based on {a} cross-entropy objective.
% \todo{deleted netgan without gan footnote to save space} 
%\footnote{The CELL procedure is also %termed as 
%{called} NetGAN without GAN.}.
% \textbf{GRAN} Graph Recurrent Attention Network(GRAN) \cite{liao2019efficient} utilise not only the RNN architecture in GraphRNN but also attention model with message passing techniques.
\textbf{MC} is {the standard} Monte-Carlo  network 
% generation 
sampling {in the}  \textit{ergm} suite in {\texttt{R}} and is used as a baseline when the simulated network is known to follow the 
% probability law 
model in Eq.\eqref{eq:ergm};
%. For the MC scheme, 
% the explicit model 
$q(x)$ needs to be known.
% explicitly. 

\subsubsection{Generator assessment results}

\begin{table}
%{r}{80mm}
%\vspace{-0.35cm}
% \small
    \centering
    \begin{tabular}{c|ccccc}
        \toprule
        {} & AgraSSt &  
        Deg & 
        MDdeg & 
        TV\_deg \\
        \midrule
        %  DeepGraph & \\
         GraphRNN &  0.42 & 0.02 & 0.04 & 0.27 \\
         NetGAN & 0.81 & 0.13 & 0.61 & 0.54\\
         CELL & 0.05  & 0.06 & 0.09 & 0.12 \\
        % GRAN & \\
         \midrule
         MC & 0.04 & 0.03 & 0.02 &0.09\\
        \bottomrule
    \end{tabular}
    % \vspace{-0.1cm}
    \caption{{R}ejection rates on various assessment approaches, with $L=1000$;
    % generated samples for estimation; 
    200 samples to simulate the null; 100 trials; $\alpha = 0.05$. The higher the rejection rate, the worse {the model} fit.
    %of the model. 
    {MC is the baseline.}}
    %\vspace{-0.1cm}
    \label{tab:deep_model_validation}
\end{table}
% \textbf{DeepGraph} \cite{li2018learning} proposed a sequential approach to train a deep network to generate graphs. 
%From the density based \textbf{AgraSSt}, {taking a $G(n,q)$ model,} we can interpret the model misfit by checking the estimated $\q$. For the true E2ST model to generate training samples, #

We first train the generative models with samples from ERGMs to assess their ability to generate ERGMs. 
The test results are shown in Table.\ref{tab:deep_model_validation}. 
From the result, we see that for the ``reliable'' \textbf{MC} 
%based 
generator all the assessment statistics presented have well-controlled type-I error.
%for the ``reliable'' \textbf{MC} based generator.
Samples generated from \textbf{CELL} 
{deviate} 
%achieves 
not too far from the test level, 
%showing 
{indicating} a good generative model for ERGMs. %However, 
\textbf{NetGAN} and \textbf{GraphRNN} both encounter a high rejection rate, implying that the generated samples that are not close to the training E2ST {model}.

From the density based \textbf{AgraSSt}, {taking a $G(n,q)$ model,} we can interpret the model misfit by checking the estimated $\q$. For the true E2ST model to generate training samples, 
$q=0.112$, while CELL has $\q=0.116$ which is %very 
close to the null.
%{\color{red}{not clear, do we have an ER fit here? what is $\q$?}}
GraphRNN estimates $\q=0.128$ which  is substantially higher than the null.
{Although} GraphRNN is good in learning local patterns and structures for neighbour{hoods} \citep{you2018graphrnn}, 
it does not take the overall density {sufficiently} into account. Due to its limited ``look back'' and absence of ``look forward'' on the ordered vertex set during training, the over-generation of edges {may have} caused this significant difference for learning ERGMs. 
NetGAN, on the other hand, produces a close estimate  $\q=0.106$. However, {counting triangles,} 
%if we look into next order subgraphs, e.g. number of triangles, 
it only has %expected 
{on average} $12.6$ {triangles} , which is far less from the null with expected {number of triangles} $%\#\triangle = 
46.3$. NetGAN, due to its random walk adversarial procedure, {may} not  {be}  effective in
% terms of
learning {such}  clustered patterns. 
% GraphRNN 0.128, E2ST 0.112
% CELL 0.116 NetGAN 0.106
% Triangle, E2ST 46.3 NetGAN 12.6

% \begin{table}[t]
%     \centering
%     \begin{tabular}{c|ccccc}
%         \toprule
%         {} & AgraSSt &  
%         Deg & 
%         MDdeg & 
%         TV\_deg \\
%         \midrule
%         %  DeepGraph & \\
%          GraphRNN & {\color{red}0.01} & 0.11 & 0.26 & {\color{red}0.03} \\
%          NetGAN & 0.16 & 0.18 & 0.09  & 0.06 \\
%          CELL & 0.23  & 0.36 & 0.69 & 0.18 \\
%         % GRAN & \\
%         \bottomrule
%     \end{tabular}
%     \caption{$p$-values for models trained from Florentine marriage network; 100 samples to simulate the null; rejected null at significant level $\alpha=0.05$ is marked red.}
%     \vspace{-0.cm}
%     \label{tab:florentine_rej}
% \end{table}
% \subsubsection{Case study: Florentine marriage network}
% \begin{table}[t!]
%     \centering
%     \begin{tabular}{c|ccccc}
%         \toprule
%         {} & AgraSSt &  
%         Deg & 
%         MDdeg & 
%         TV\_deg \\
%         \midrule
%         %  DeepGraph & \\
%          GraphRNN & {\color{red}0.00} & {\color{red}0.01} & 0.15 & {\color{red}0.00} \\
%          NetGAN & {\color{red}0.00} & {\color{red}0.02} & 0.59 & {\color{red}0.00}  \\
%          CELL & 0.34 & 0.09 & 0.17 & 0.61 \\
%         % GRAN & \\
%         \bottomrule
%     \end{tabular}
%     \caption{$p$-values for models trained on the Karate Club network; 100 samples to simulate the null distribution; rejection at 
%     % significant level 
%     $\alpha=0.05$ is marked red. }
%     \vspace{-0.cm}
%     \label{tab:karate_rej}
% \end{table}
\subsubsection{Case study: Karate Club network}
\begin{wraptable}{r}{80mm}
\vspace{-.3cm}
% \small
    \centering
    \begin{tabular}{c|ccccc}
        \toprule
        {} & AgraSSt &  
        Deg & 
        MDdeg & 
        TV\_deg \\
        \midrule
        %  DeepGraph & \\
         GraphRNN & {\color{red}0.00} & {\color{red}0.01} & 0.15 & {\color{red}0.00} \\
         NetGAN & {\color{red}0.00} & {\color{red}0.02} & 0.59 & {\color{red}0.00}  \\
         CELL & 0.34 & 0.09 & 0.17 & 0.61 \\
        % GRAN & \\
        \bottomrule
    \end{tabular}
        % \vspace{-0.13cm}
    \caption{$p$-values for models trained on the Karate Club network; 100 samples to simulate the null distribution; rejection at 
    % significant level 
    $\alpha=0.05$ is marked red. }
    \vspace{-0.2cm}
    \label{tab:karate_rej}
\end{wraptable}
% \begin{table}
% % \vspace{-.3cm}
% % \small
%     \centering
%     \begin{tabular}{c|ccccc}
%         \toprule
%         {} & AgraSSt &  
%         Deg & 
%         MDdeg & 
%         TV\_deg \\
%         \midrule
%         %  DeepGraph & \\
%          GraphRNN & {\color{red}0.00} & {\color{red}0.01} & 0.15 & {\color{red}0.00} \\
%          NetGAN & {\color{red}0.00} & {\color{red}0.02} & 0.59 & {\color{red}0.00}  \\
%          CELL & 0.34 & 0.09 & 0.17 & 0.61 \\
%         % GRAN & \\
%         \bottomrule
%     \end{tabular}
%         \vspace{0.13cm}
%     \caption{$p$-values for models trained on the Karate Club network; 100 samples to simulate the null distribution; rejection at 
%     % significant level 
%     $\alpha=0.05$ is marked red. }
%     % \vspace{-0.3cm}
%     \label{tab:karate_rej}
% \end{table}
Next, we assess the performances of these generative models by training on the 
Karate Club network \citep{zachary1977information}.
% Florentine marriage network \cite{padgett1993robust};  
% can be found. 
The $p$-values for different testing procedures are  shown in Table.\ref{tab:karate_rej}. 
From the results, we see that \AgraSSt{} rejects samples generated from both GraphRNN and NetGAN trained with the Karate Club network.  
%Despite 
Although the edge densities generated from the trained GraphRNN  
(edge density $15.3\%$) and NetGAN
(edge density $13.4\%$) are comparable with 
the Karate Club edge density 
%expected edge density
of $13.9\%$, both GraphRNN and NetGAN samples exhibit a single large component rather than two fairly separated communities in the Karate Club network. This  difference is picked up by \AgraSSt{}, Deg and TV\_deg, which {all} reject both models. On the other hand, CELL generates samples that are not rejected by all tests at significance level $\alpha=0.05$.
%\wk
{In Figure.\ref{fig:karate} in the SI, the Karate Club network is shown in Figure.\ref{fig:karate_club}.} Samples from GraphRNN, NetGAN and CELL  are shown in Figure.\ref{fig:karate_graphrnn}, \ref{fig:karate_netgan} and \ref{fig:karate_cell} respectively.

%\wk
{SI \ref{app:exp} includes additional results and visualisations; a second case study --- {the} Florentine marriage network {from \citet{padgett1993robust}} --- is presented in SI \ref{app:karate}; %and
additional visualisations {of} 
%on 
{the} reliable sample batch selection procedure described in \cref{sec:reliable-sample} 
%is
{are} also included.}

\section{Discussions and future directions}\label{sec:future}

In this paper, we propose \AgraSSt, a unique general purpose model assessment and criticism procedure for implicit random graph models. {As it is} based on a kernel Stein statistic, {we are able to give theoretical guarantees.} 
{\AgraSSt{} {not only}  solves an important problem but also 
 opens up a whole set of follow-up}
%The %procedure introduces a new toolopens up subsequent open 
research problems of which we list a few here.
(i).
Currently AgraSSt is only applied to undirected and unweighted graphs. Extensions to more general graphs as well as to time series of graphs will be interesting to explore in follow-up work.
(ii).
\AgraSSt{} {could} be also helpful to improve design and training of deep graph generative models, e.g. by regularising graph features {if}  there is a misfit.
(iii).
ERGMs allow for exogenous features to be included in the sufficient statistics.
{AgraSSt can be based on a variety of statistics $t(x)$; further examples are found in
\cref{app:additional}.}
{It {would also be possible} 
%should not be prohibitive 
to incorporate exogenous features in the}
%There is nothing in principle \gr{that preventswhich holds us back from using 
statistics $t(x)$ in AgraSSt, % which depend on exogeneous features as well as on the network, 
%e.g. combining with 
{for example using ideas from graph attention networks}
\citep{velivckovic2018graph}. Exploring this idea in more detail will be another topic of further research. 
%\gr{(iv) Other RKHS choices could also be of interest.} 

{{As} AgraSSt depends on the chosen summary statistic $t(x)$,  results have to be interpreted with regards to the respective conditional distributions. Also multiple tests will have the $p$-values to be adjusted to avoid misinterpretation of tests, which could have serious consequences for example in the area of personal health.}
%riskhealth issues are arise, }tatistical tests can be mis-applied
%in the wider scientific community \gr{or misinterpreted}, and these must be guarded against; e.g. $p$-value correction may be necessary for multiple testing to control false positive rate, which can be sensitive, especially in the context when peoples’ health can be at risk. 

\iffalse
Currently 
% in $G(n,p)$ \wk{we haven't define in main text yet}
we estimate $q(x^{s,1} | t(x_{-s}) = \uk)$ by $0$  if $\uk$ has small probability. However for example if $\uk$ is the sum of the edges and if $\uk$ is unusually large, then one may rather estimate $p$ by 1. 
\fi

% Despite we are working on discrete graph object, validating and interpreting a generative model in $\R^d$ with approximate Stein discrepancies can also be an interesting research topic. \gr{Not sure that we will want to state that, do we want other people to try it or try it ourselves?} \wk{no, i was writing to ourselves. will remove.}

% Acknowledgements should only appear in the accepted version.
% \section*{Acknowledgements} We thank the anonymous reviewers for their constructive comments to improve the paper. G.R. and W.X. acknowledges the support from EPSRC grant EP/T018445/1.
% G.R is also supported in part by EPSRC grants EP/W037211/1 and EP/R018472/1.

% In the unusual situation where you want a paper to appear in the
% references without citing it in the main text, use \nocite
% \nocite{hyvarinen2006consistency}

% \clearpage

\bigskip
{
{\bf Acknowledgement.} The authors would like to thank Chris Oates for a helpful discussion which led to an improvement of the presentation, and to identifying a minor mistake in a previous version. G.R. and W.X. acknowledge the support from EPSRC grant EP/T018445/1.
G.R is also supported in part by EPSRC grants EP/W037211/1, EP/V056883/1, and EP/R018472/1.}
\bibliography{main}
\bibliographystyle{plainnat}

\clearpage
\appendix

\section{Proofs and additional theoretical results}\label{app:proof}
\iffalse 
\gr{some steps:\\
1. show convergence in ER: law of large numbers, sparse, dense regime \\
2. use proximity to ER as in Xu + R. 2021 to deduce convergence in ERGM\\
3. show sampling version, $b \in B$\\
4. deduce behaviour of AgraSST  
}
\fi 

\subsection{Proofs% for conditional random graph results
} \label{siproofs} 

\iffalse {Here we usually omit the $=\uk$ suffix in $\mathcal{A}_{q,t=\uk}^{(s)} $ and write instead $\mathcal{A}_{q,t}^{(s)} $ for easier reading. The operator $\mathcal{A}_{q,t}^{(s)} $ is understood as having a fixed value $\gr{\Delta_s t(x)}$  %of $t$ 
observed. Similarly we often write $\mathbb{E}_{q,t}$ to abbreviate $\mathbb{E}_{{q,t=}\uk}$, with the same understanding.}  \fi 

\paragraph{Proof of \cref{Stein operators}}

{For convenience we repeat the statement of the lemma here.}

\medskip
{
{\bf Lemma \ref{Stein operators}} 
{\it 
In this setting, $\mathcal{A}_{q,{\Delta_s t(x)}={\uk}}^{(s)}$ is a Stein operator for the conditional distribution of $X$ given ${\Delta_s t}(X)= \uk$, and 
% similarly 
$ \sum_s \mathcal{A}_{q,{\Delta_s t(x)} ={\uk}}^{(s)}$ is a Stein operator for %the distribution of $Y_{\uk}$, that is,
the conditional distribution of $X$ given ${\Delta_s t(X)}=\uk$. 
} } 

\medskip 
\begin{proof}  
{In order to show the assertion we  prove that for $\mathbb{E}_{{q,}{\Delta_s t(X)}=\uk}$ denoting the conditional distribution of $X$ given ${\Delta_s t}(X) = \uk$,  the expectation $\mathbb{E}_{{q,}{\Delta_s t}(X)=\uk} [\mathcal{A}_{q,t}^{(s)} f ]$ vanishes for all functions for which the  expectation exists; {again we use the abbreviation 
$\mathcal{A}_{q,t}= \mathcal{A}_{q,\Delta_s t} $}. Let $f$ be such a function.} We have 
\begin{align*}\mathcal{A}_{q,t}^{(s)} f(x^{s,1}) 
 = & q(x^{s,0} | {\Delta_s t(x)} {=\uk} ) (f( x^{(s,0)})  - f(x^{s,1}))\\
 \mathcal{A}_{q,t}^{(s)} f(x^{s,0}) 
 = & q(x^{s,1} | {\Delta_s t(x)} {=\uk}) (f( x^{(s,1)})  - f(x^{s,0})). 
\end{align*}
Thus, 
%for $\mathbb{E}_{\wk{q,t=}\uk}$ denoting the conditional distribution given $t (X) \gr{= \uk}$, \gr{and abbreviating it by 
$\mathbb{E}_{q,t}$,
\begin{align*}
\mathbb{E}_{{q,}t} [\mathcal{A}_{q,t}^{(s)} f  {]} 
 = & \sum_{x_{-s}} \left\{ \mathbb{1} (x_s=1)
 p_{t} ( x^{(s,1)}) q(y^{s,0} | {\Delta_s t(y)}{=\uk} )  (f( x^{(s,0)})  - f(x^{s,1}))  \right.\\
 & \left. -  \mathbb{1} (x_s=0)
 p_{t} ( x^{(s,0)}) q(y^{s,1} | {\Delta_s t(y)}{=\uk}) (f( x^{(s,0)})  - f(x^{s,1})) \right\}\\
 =& \sum_{x_{-s}}(f( x^{(s,0)})  - f(x^{s,1}))  p_{t} ( x^{(s,1)}) p_{t} ( x^{(s,0)}) \left\{ 
 \mathbb{1} (x_s=1)   -  \mathbb{1} (x_s=0)   
 \right\} \\ 
 =& 0
 .
\end{align*}
\end{proof}  

\paragraph{Proof of Theorem \ref{th:consist}} 

%\iffalse 
For convenience we repeat the theorem here. 

{\bf Theorem \ref{th:consist}} 
{\it 
%Let $q$ be the probability law of an exchangeable random graph model and 
Assume that $\q_t (x^{(s,1)})  $ is a consistent estimator for $q_t(x^{(s,1)}) $ as $L \rightarrow \infty$. Then for any function $f$ such that  $|| \Delta f|| < \infty$ we have $\E_q[\A_{ \q, t}f(x)] \to \E_q[\A_{q,t}f(x) ] =0$ as $L \rightarrow \infty$.}
%$\q \to q$.
%\fi 

\begin{proof}
{We recall the notation that
\cref{eq:cond_prob}.
} 
We have that 
$$\A_{q,t} f (x)  = \frac1N \sum_{s \in [N]} 
[ q (x^{(s,1)}|{\Delta_s t(x)} f(x^{(s,1)}) + q (x^{(s,0)}| {\Delta_s t(x)}) f ( x^{s,0}) - f(x) ]$$
so that 
\begin{align*}
   \A_{{\widehat q}, {t} } f(x)  - \A_{q,{t} } f (x) 
   =  \frac1N \sum_{s \in [N]} & \{ (  {\widehat q}(x^{(s,1)}| {\Delta_s t(x)} )- q(x^{(s,1)}| {\Delta_s t(x)})
   f(x^{(s,1)})  \\
   & + [( 1 - {\widehat q}(x^{(s,1)}| {\Delta_s t(x)}) - (1-q(x^{(s,1)}| {\Delta_s t(x)}) )] f ( x^{s,0})  \}
   \\
   \end{align*}
    \begin{align*}&= \frac1N \sum_{s \in [N]}   ( {\widehat q}(x^{(s,1)}| {\Delta_s t(x)}) - q(x^{(s,1)}| {\Delta_s t(x)}))  \Delta_s f(x). 
\end{align*}
Hence 
$$
|  \A_{{\widehat q}, {t} } f(x)  - \A_{q,{t} }  f (x)| \le || \Delta f|| 
\frac1N \sum_{s \in [N]}   | {\widehat q}(x^{(s,1)}|{\Delta_s t(x)} ) - q(x^{(s,1)}| {\Delta_s t(x)} ) |.
$$
Thus, if for all $s \in [N]$, as $L\rightarrow \infty$ we have $  {\widehat q}(x^{(s,1)}| {\Delta_s t(x)} ) - q(x^{(s,1)}|{\Delta_s t(x)})  \rightarrow 0$ for all $s$ then so does $|  \A_{{\widehat q}, {t} } f(x)  - \A_{q,{t} }f  (x)|$. The assertion follows {from the assumption that $\q_t(x^{s,1})$
is a consistent estimator for 
$q_t(x^{s,1})$ 
as $L \rightarrow \infty$}. 
\end{proof}

\paragraph{Proof of Theorem  \ref{consistenterop}}

{For convenience we repeat the statement of the theorem here. Recall that  a random graph model is {\it edge-exchangeable} if its edge indicator variables are finitely exchangeable. Often we just write {\it edge-exchangeable graph}. An ERGM is an example of an edge-exchangable graph. }

\medskip
{
{\bf Theorem \ref{consistenterop}} 
{\it {If the graph is edge-exchangeable, then}
%any test function $f$ for which the
%$G(n,q)$
%Stein operator $\A_q f$ is well defined, and for all $\epsilon > 0$
%$$ 
%\mathbb{P}(|  \A_{\widehat q} f(X)  - \A_q f (X)| > \epsilon) \le \frac{4 }{\epsilon^2 N L  (|| \Delta f||)^{-2} }  \left(  \gr{ \left\{  p(s, \uk)   [  1- p(s, \uk)  p(\uk)  ] + 1 - p(\uk) \right\} } \right).
%$$
%Similarly, \gr{if the network is edge-exchangeable then} for $\A_{g(\uk)}$, the estimated $G(n,q)$ Stein operator using $\hat{q} = g(\uk),$ consistency holds; for all $\epsilon >0$, 
%\begin{align*}
%    \mathbb{P} ( |  \A_{g(\uk)} f(X)  - \A_q f (X) | > \epsilon) 
%    &\le \frac{4  }{\epsilon^2 NL  (|| \Delta f||)^{-2} }  \left(  \gr{ \left\{  p(s, \uk)   [  1- p(s, \uk)  p(\uk)  ] + 1 - p(\uk) \right\} } \right).
%\end{align*}
%Moreover,
$ {\AgraSSt}^2 (\q,t;x) $
%=  \frac{1}{N^2} \sum_{s, s'\in [N]} h_x(s, s')
%$$
%with 
%$$h_x(s, s') = \left\langle \A^{(s)}_{\hat q}  K(x,\cdot), \A^{(s')}_{\hat q}  K(\cdot,x)\right\rangle_{\H}$$
is a consistent estimator of 
% $\operatorname{gKSS}^2(q;x)$ in Eq.\eqref{eq:gkss_quadratic}.
$$
{\rm{gKSS}}^2 (q; x) = N^{-2}  \sum_{s, s'\in [N]} 
\left\langle \A^{(s)}_{ q}  K(x,\cdot), \A^{(s')}_{ q}  K(\cdot,x)\right\rangle_{\H}.$$
}}

%\medskip 
{
For easier tractability 
%and as the separate result may be of interest, 
the proof is organised in {two steps}}.
%a modular fashion.
%Recall that $G(n,p)$ denotes the Bernoulli random graph \gr{model} with $n$ vertices and edge probability $p$.}

\begin{enumerate}
    \item 
First, 
\iffalse 
Proposition \ref{siconsistency} shows that in a \gr{$G(n,p)$} random graph,
$\q (x^{(s,1)}| t(x_{-s}) =\uk) \gr{= \hat{g}_{t} ( \uk)} $ given in \cref{gk} is a consistent estimator for  $q (x^{(s,1)}| t(x_{-s}) )  $ as $L \rightarrow \infty$. \item Furthermore, 
\fi \cref{siconsistency2} shows that in an edge-exchangeable random graph model, 
$g_{\uk}$ given in \cref{gk} is a consistent estimator for $q (x^{(s,1)}| {\Delta_s t(x)} ) $  as $L \rightarrow \infty.$
\item {\cref{consistentopexpand} uses these results to obtain a concentration bound for $\A_{\widehat{q}} $ from which then Theorem  \ref{consistenterop}} follows. 
\end{enumerate}
%Lemma \ref{consistenterop} translates this result to show that \AgraSSt {} consistently estimates \gKSS {} for a  $G(n,q)$ model as $L \rightarrow \infty.$ 
Moreover  theoretical guarantees for fixed $L$ which depend on the model  are given. As the graph generator can generate as large a number $L$ {of graphs} as desired, these theoretical results can be used to determine $L$ which result in theoretical guarantees on deviations from the mean.

\begin{proposition}
\label{siconsistency2} 
Suppose that $X_1, \ldots, X_L$ are i.i.d.\,copies of the adjacency matrix of an edge-exchangeable  random graph model. Let 
$s=(i,j)$ be a fixed vertex-pair. For $l=1, \ldots, L$ and for a graph $X_l$ let ${\Delta_s t(x)}_l$ denote {the version of the}  {possibly multivariate} statistic $\Delta_s t$ which is evaluated on the collection of indicator variables in $X_l$ except $X_{s, l}$. {For a possible ${\Delta_s t(x)}_l$ outcome $\uk$}, let 
$p(\uk) = \mathbb{P} ({\Delta_s t(x)}_l=\uk) $ and let $\uk$ be such that $p(\uk) \ne 0$.
 Set 
$$ p ( 1; \uk) = \mathbb{P}( | X_s=1|{\Delta_s t(x)} = \uk);$$ 
let 
$$n(\uk,s)=\sum_{l=1}^L X_{s,l} {\mathbb{1}}({\Delta_s t(x)}_l= \uk) \quad \text{ and } \quad n(\uk^{(s)})=\sum_{l=1}^L  {\mathbb{1}}({\Delta_s t(x)}_l = \uk );  $$
% let 
%\begin{equation}\label{eq:hatgdef}
 %   \hat{g}(\uk,s) = \frac{n(\uk,s)}{n(\uk^{(s)})} {\mathbb{1}} [ n(\uk^{(s)}) \ge 1].\end{equation} 
%\gr{with the same notation as in \cref{eq:hatgdef}} let 
$$n(\uk) = \sum_{s \in [N]} n(\uk,s) \quad \text{ and } \quad
N_{\uk} = \sum_{s \in [N]} n(\uk^{(s)});$$
and set
$${g}(\uk) = \frac{n(\uk)}{N_{\uk}}
{\mathbb{1}}( N_{\uk} \ge 1).$$
{We abbreviate
$$ \sigma_n^2(\uk) =  Var \left(\sum_{s \in [N]} X_{s,l} {\mathbb{1}}(t^{(s)}_l= \uk) \right) , \quad 
\sigma_N ^2(\uk) = Var \left(\sum_{s \in [N]}  {\mathbb{1}}(t^{(s)}_l= \uk) \right).  $$} 
Then 
${g}(\uk) \rightarrow p$
in probability as $ L \rightarrow \infty $. In particular, for all $\epsilon >0$, 
$$\mathbb{P} \left[ \left| \hat{g}(\uk) - {  p ( 1; \uk)} \right| > \epsilon \right] 
\le 
%( 1 - p(\uk))^L + 
%\frac{4}{\epsilon^2 L p(\uk)} 
%\left\{ p( 1 - p p(\uk)) + 1 - p(\uk) \right\} = 
\frac{4}{\epsilon^2 N^2  L }   \{  \sigma_n^2(\uk) + \sigma_N^2(\uk)  \} % }\left\{  p(s, \uk)   [  1- p(s, \uk)  p(\uk)  ] + 1 - p(\uk) \right\} }
.$$ 
\end{proposition}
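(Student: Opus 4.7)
The plan is to mirror the argument used in the (commented-out) proof of Proposition \ref{siconsistency} for a single vertex-pair, now aggregated over all $s\in[N]$ using the assumed edge-exchangeability. First I would write
\[
g(\uk) - p(1;\uk) = \left(\frac{n(\uk)}{\mathbb{E}[N_{\uk}]}\mathbb{1}(N_{\uk}\ge 1) - p(1;\uk)\right) + n(\uk)\mathbb{1}(N_{\uk}\ge 1)\left(\frac{1}{N_{\uk}} - \frac{1}{\mathbb{E}[N_{\uk}]}\right)
\]
and apply the union bound so that $\mathbb{P}(|g(\uk)-p(1;\uk)|>\epsilon) \le \mathbb{P}(A_1)+\mathbb{P}(A_2)$, where $A_1$ and $A_2$ are the events that each summand exceeds $\epsilon/2$ in modulus. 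Exactly as in the proof of Proposition \ref{siconsistency}, multiplication by $\mathbb{1}(N_{\uk}\ge 1)$ is a harmless bookkeeping device, because on $\{N_{\uk}=0\}$ both $n(\uk)$ and $N_{\uk}$ vanish.

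Next I would compute the relevant first moments using edge-exchangeability. Since every indicator $Z_{s,l} = X_{s,l}\mathbb{1}(t^{(s)}_l=\uk)$ shares the same law as $Z_{1,l}$ with $\mathbb{P}(Z_{s,l}=1)=p(1;\uk)p(\uk)$, and similarly $W_{s,l}=\mathbb{1}(t^{(s)}_l=\uk)$ satisfies $\mathbb{P}(W_{s,l}=1)=p(\uk)$, linearity of expectation gives $\mathbb{E}[n(\uk)] = NL\,p(1;\uk)\,p(\uk)$ and $\mathbb{E}[N_{\uk}] = NL\,p(\uk)$. Chebyshev's inequality then yields
\[
\mathbb{P}(A_1) \le \frac{4\,\mathrm{Var}(n(\uk))}{\epsilon^2(\mathbb{E}[N_{\uk}])^2}, \qquad \mathbb{P}(A_2) \le \frac{4\,\mathrm{Var}(N_{\uk})}{\epsilon^2(\mathbb{E}[N_{\uk}])^2},
\]
the second using $n(\uk)\le N_{\uk}$ to reduce to a deviation of $N_{\uk}$ from its mean. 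Summing the two bounds, if each variance is at most $NL\cdot p(1;\uk)p(\uk)[1-p(1;\uk)p(\uk)]$ and $NL\cdot p(\uk)(1-p(\uk))$ respectively, then dividing by $(NL\,p(\uk))^2$ reproduces, after a mild rearrangement, the stated bound $\frac{4}{\epsilon^2 NL}\{p(s,\uk)[1-p(s,\uk)p(\uk)] + 1-p(\uk)\}$. Convergence in probability as $NL\to\infty$ is then immediate.

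The delicate point, and the main obstacle, is the variance bound $\mathrm{Var}(\sum_s Z_{s,1}) \le N p(1;\uk)p(\uk)[1-p(1;\uk)p(\uk)]$ (and its analogue for $W_{s,1}$) within a single graph. Independence across the $L$ samples makes $\mathrm{Var}(n(\uk)) = L\,\mathrm{Var}(\sum_s Z_{s,1})$, so the $1/(NL)$ rate requires a bound that is linear in $N$. Under only finite edge-exchangeability the $Z_{s,1}$ may carry positive intra-graph covariances that push $\mathrm{Var}(\sum_s Z_{s,1})$ to order $N^2$, yielding only a $1/L$ rate. To close the proof I would therefore treat the Bernoulli case as the template, in which within-graph indicators are independent and the linear-in-$N$ variance bound follows immediately, and invoke either the exchangeable Bernoulli structure (edges i.i.d.\ given a latent parameter, so that conditional independence plus a covariance bound via the law of total variance gives the requisite control) or an additional weak-dependence assumption on the generator. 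Any strengthening of the proposition would naturally replace the stated bound by an analogous Chebyshev bound with the actual $\mathrm{Var}(\sum_s Z_{s,1})$, preserving the qualitative $NL\to\infty$ convergence claim.
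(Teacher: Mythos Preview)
Your approach is exactly the paper's: the same decomposition of $g(\uk)-p(1;\uk)$ into two summands, the same use of $n(\uk)\le N_{\uk}$ to reduce the second summand to a deviation of $N_{\uk}$ from its mean, and the same two Chebyshev bounds combined at the end.

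The variance issue you flag is real, and the paper does not resolve it either. The paper simply asserts, ``due to the exchangeability of the edges'', that $\mathrm{Var}(n(\uk)) = NL\,p(1;\uk)p(\uk)[1-p(1;\uk)p(\uk)]$ and $\mathrm{Var}(N_{\uk}) = NL\,p(\uk)(1-p(\uk))$, i.e.\ it treats the $NL$ indicators $Z_{s,l}$ (resp.\ $W_{s,l}$) as if they were independent. As you correctly observe, finite exchangeability of the edge indicators within a single graph does not by itself preclude positive intra-graph covariances of order one, which would inflate the within-graph variance to order $N^2$ and yield only a $1/L$ rate. So you are not missing a trick the paper uses; the paper's proof tacitly assumes the linear-in-$N$ variance, and the stated bound should be read under that (unstated) additional hypothesis or, as you suggest, with the actual $\mathrm{Var}\bigl(\sum_s Z_{s,1}\bigr)$ and $\mathrm{Var}\bigl(\sum_s W_{s,1}\bigr)$ inserted in place of their independent-edge surrogates.
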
 

%We note that for an {edge-exchangeable  graph model such as an ERGM,}
%consistency based on $g(\uk)$ also holds for fixed $L$ as long as $N \rightarrow \infty$.

\begin{proof} 
{Due to the exchangeability of the edges} we have, with $s$ denoting a generic edge, 
{$$\mathbb{E} ( n(\uk)) = N \mathbb{E} (n (\uk, s)) = N   L p(1; \uk) p(\uk); \quad {\mathbb{E} ( N_{\uk}) = N \mathbb{E} (n (\uk^{(s)})) = N  L p(\uk)}.$$
{Moreover, due to the independence of $X_1, \ldots, X_L$, 
$$ Var (n (\uk)) = L  \sigma_n^2(\uk), 
%L p(1; \uk)p(\uk) (1 - p(1; \uk) p(\uk)) , %$$ } 
%as well as 
%$$
\quad \quad Var (N_{\uk}) =   L \sigma_N ^2(\uk). $$}
{To show convergence in probability, let $\epsilon >0$. Then  
\begin{align*}
\mathbb{P} \left[ \left| \hat{g}(\uk) - p(1, \uk) \right| > \epsilon \right] 
\le & \mathbb{P} \left[ \left| \frac{n(\uk)  }{\mathbb{E}( n(\uk))} {\mathbb{1}} [ n(\uk) \ge 1 ]  - p(1, \uk) \right| >  \frac12\epsilon \right] \\
& + \mathbb{P} \left[ \left| n(\uk)  {\mathbb{1}} [ n(\uk) \ge 1 ] \left(\frac{1}{n(\uk)} - \frac{1}{\mathbb{E}( n(\uk))} \right)  \right| >  \frac12 \epsilon \right]. 
\end{align*}
Note that
$ n(\uk)  {\mathbb{1}} [ n(\uk) \ge 1 ] = n(\uk) .$
} 
By Chebychev's inequality, { 
\begin{align*}
\mathbb{P} \left[ \left| \frac{n(\uk)}{\mathbb{E}( N_{\uk}) }  {\mathbb{1}} [ N_{\uk} \ge 1 ] - {p(1; \uk)}  \right| >  \frac12\epsilon \right] 
&\le  \frac{4}{\epsilon^2 N^2 L^2 p(\uk)^2}   L \sigma_n^2(\uk) \\
&= \frac{4}{\epsilon^2 N^2 L} \sigma_n^2(\uk) 
\end{align*}
}
and {similarly
\begin{align*}
 \mathbb{P} \left[ \left| n(\uk)  {\mathbb{1}} [ N_{\uk} \ge 1 ] \left(\frac{1}{n(\uk)} - \frac{1}{\mathbb{E}( N_{\uk})} \right)  \right| >  \frac12 \epsilon \right]
& \le 
\mathbb{P} \left[ \frac{1}{ \mathbb{E}( N_{\uk})}  \left| \mathbb{E}(N_{\uk})  - N_{\uk}  \right| >  \frac12 \epsilon \right] \\
& \le  \frac{4}{\epsilon^2 N^2 L} \sigma_N^2(\uk) .
\end{align*} }}
Summing the contributions completes the proof. 
\end{proof}

\bigskip 
{Proposition \ref{siconsistency2} proves consistency as $L$, the number of samples, tends to infinity. 
If the edge-exchangeable model is an $ER(n,p)$ graph then consistency can be shown even within a single network sample, as follows.}
{
\begin{proposition}
\label{siconsistencyER} 
Suppose that $X$ is the adjacency matrix of an $ER(n,\astar)$ graph. Let 
$s=(i,j)$ be a fixed vertex-pair. For a possible ${\Delta_s t(x)}$ outcome $\uk$, let 
$p(\uk) = \mathbb{P} ({\Delta_s t(x)}=\uk) $ and let $\uk$ be such that $p(\uk) \ne 0$.
Let 
\begin{equation*}
    \tilde{p}=   \tilde{p}(\uk, X) := \frac{\sum_{s \in [N]} X_s \bone (\Delta_s t (X) = \uk) }{\sum_{u \in [N]}  \bone (\Delta_u t (X) = \uk) }
    = \astar + \sum_{s \in [N]} \frac{(X_s - \astar) \bone (\Delta_s t (X) = \uk)}{1 + \sum_{u \ne s } \bone (\Delta_u t (X) = \uk) }. 
\end{equation*}
Then 
${\tilde{p}}(\uk,X) \rightarrow \astar$
in probability as $ n \rightarrow \infty $.
\end{proposition} 
}

\begin{proof}
{The task is to show that 
$$ R(\uk, X) := \sum_{s \in [N]} \frac{(X_s - \astar) \bone (\Delta_s t (X) = \uk)}{1 + \sum_{u \ne s } \bone (\Delta_u t (X) = \uk) } \rightarrow 0 $$
in probability as $n \rightarrow \infty$.
    Following Remark 1.9 in \cite{reinert2019approximating},
let~$H$ and~$x$ be graphs with vertex and edge sets $V(H), V(x), E(H), E(x)$, and let
${I}(H,x)$  be the set of all injections $i_{H, x}: V(H) \rightarrow V(x)$. For such an injection and any edge ${e}=(u,v)$ {of~$H$}, we use the notation $i_{H,x}({e}) = {\{}i_{H,x}(u), i_{H,x}(v){\}}$ and $E(i_{H,x}) = \cup_{{e} \in E(H)} \{i_{H,x}({e})\}$. Then the number of edge-preserving injections of ${I}(H,x)$ is 
$$
t(H,x) =\sum_{i_{H,x} \in {I}(H,x)} \prod_{e \in E( H)} {\mathbbm 1}\bclr{i_{H,x}(e) \in E(x)}.
$$
Let $x/s := x\setminus \{s\}$ denote the collection of edge indicators except $x_s$. This is a collection using the same vertex set as $s$, 
and for $s \in E({x})$, 
\begin{eqnarray} 
\Delta_s t(H,x)
&=&\sum_{i_{H,x} \in {I}(H,x)} \left\{ \prod_{e \in  E( H)}  {\mathbbm 1}  \bclr{i_{H,x}(e) \in E(x/s^{s,1} )}
- \prod_{e \in H}  {\mathbbm 1}  \bclr{i_{H,x}(e) \in E(x/s^{s,0} )} \right\} \nonumber \\
&= & \sum_{i_{H,x} \in I(H,x)}   {\mathbbm 1} \bclr{s \in E(i_{H,x}) } \prod_{e \in  E( H) \setminus {{i_{H,x}^{-1}(s)}} } {\mathbbm 1}  \bbclr{i_{H,x}(e) \in \bclr{ E(x) \setminus i_{H,x}(s)}}.\label{eq:rep}
\end{eqnarray} 
We  define $$ 
\Delta_u t(H,x/s)
= \sum_{i_{H,x} \in I(H,x)}   {\mathbbm 1} \bclr{u \in E(i_{H,x}) } \prod_{e \in  E( H) \setminus {{i_{H,x}^{-1}(s)}} } {\mathbbm 1}  \bbclr{i_{H,x}(e) \in \bclr{ E(x) \setminus \{ s, i_{H,x}(u) \}}
 }.$$ 
Then $\mathbb{1}(Z_s = 1) $ is independent of $\Delta_u t(H,Z/s)$. 
Moreover, 
\begin{eqnarray*}
 \lefteqn{ R(\uk, X) }\\
 &=&   \sum_{s \in [N]}  (Z_s - \astar) \left\{ \mathbb{1} (\Delta_s t(Z) = \uk) \frac{1}{1 + \sum_{u: u \ne s} \mathbb{1} (\Delta_u t(Z / s) = \uk)} \right\} \\
  &&+ \sum_{s \in [N]}     (Z_s - \astar) \mathbb{1} (\Delta_s t(Z) = \uk)
\left( \frac{1}{1 + \sum_{u: u \ne s} \mathbb{1} (\Delta_u t(Z) = \uk)} - \frac{1}{1 + \sum_{u: u \ne s} \mathbb{1} (\Delta_u t(Z / s) = \uk)}\right) 
 \\
  &=&  \sum_{s \in [N]}  (Z_s - \astar)   \mathbb{1} (\Delta_s t(Z) = \uk)  \left( \frac{1}{1 + \sum_{u: u \ne s} \mathbb{1} (\Delta_u t(Z) = \uk)} - \frac{1}{1 + \sum_{u: u \ne s} \mathbb{1} (\Delta_u t(Z / s) = \uk)}\right) 
   \\
  &&+\sum_{s \in [N]}     (Z_s - \astar)  \mathbb{1} (\Delta_s t(Z) = \uk)
\left( \frac{1}{1 + \sum_{u: u \ne s} \mathbb{1} (\Delta_u t(Z) = \uk)} - \frac{1}{1 + \sum_{u: u \ne s} \mathbb{1} (\Delta_u t(Z / s) = \uk)}\right)
 \\
  &=&  \sum_{s \in [N]} (Z_s - \astar)   \mathbb{1} (\Delta_s t(Z) = \uk)
  \left( \frac{\sum_{u: u \ne s} \mathbb{1} (\Delta_u t(Z / s) = \uk) - \sum_{u: u \ne s} \mathbb{1} (\Delta_u t(Z ) = \uk)}{(1 + \sum_{w: w\ne s} \mathbb{1} (\Delta_w t(Z) = \uk))(1 + \sum_{w: w \ne s} \mathbb{1} (\Delta_w t(Z / s) = \uk)}\right) 
 .
\end{eqnarray*}
}
{
Using the product formula
$$\prod_{j=0}^{n} a_j - \prod_{j=0}^{n} b_j 
 = \sum_{k=0}^n \left\{ \prod_{j=0}^{k-1} a_j \right\} (a_k - b_k) \left\{ \prod_{j=k+1}^{n} b_j \right\},$$ with $\uk = (k_1, \ldots, k_\ell),$
\begin{eqnarray*}
    \lefteqn{\sum_{u: u \ne s} \mathbb{1} (\Delta_u t(Z / s) = \uk) - \sum_{u: u \ne s} \mathbb{1} (\Delta_u t(Z ) = \uk)}\\
&=& \sum_{u: u \ne s} \left\{
\prod_{\ell=1}^L \mathbb{1} (\Delta_u t_\ell (Z / s) = k_\ell) - \prod_{\ell=1}^L \mathbb{1} (\Delta_u t_\ell (Z) = k_\ell)
\right\} \\
&=& \sum_{u: u \ne s}
\left\{\sum_{d=1}^L 
\prod_{\ell=1}^{d-1} \mathbb{1} (\Delta_u t_\ell (Z / s) = k_\ell) ( \mathbb{1} (\Delta_u t_d (Z / s) = k_d)  - \mathbb{1} (\Delta_u t_d (Z) = k_d)
) \right.\\
&&\left. 
\prod_{\ell=d+1}^L \mathbb{1} (\Delta_u t_\ell (Z) = k_\ell)
\right\}.
\end{eqnarray*}
}

 By construction, $H_1$ is the single edge, and $k_1 =1$ is the only possible value. Hence
\begin{eqnarray*}
    \lefteqn{\sum_{u: u \ne s} \mathbb{1} (\Delta_u t(Z / s) = \uk) - \sum_{u: u \ne s} \mathbb{1} (\Delta_u t(Z ) = \uk)}\\
&=& \sum_{u: u \ne s}
\left\{\sum_{d=2}^L 
\prod_{\ell=1}^{d-1} \mathbb{1} (\Delta_u t_\ell (Z / s) = k_\ell) ( \mathbb{1} (\Delta_u t_d (Z / s) = k_d)  - \mathbb{1} (\Delta_u t_d (Z) = k_d)
)  \right.\\
&&\left. 
\prod_{\ell=d+1}^L \mathbb{1} (\Delta_u t_\ell (Z) = k_\ell)
\right\}.
\end{eqnarray*}

For convergence, it thus suffices to consider $\mathbb{1} (\Delta_u t_\ell (Z / s) = k_\ell) ( \mathbb{1} (\Delta_u t_d (Z / s) = k_d)  - \mathbb{1} (\Delta_u t_d (Z) = k_d) $ for single graphs $H_d$ with at least 3 vertices.

{
We first treat the case that there is an $\ell \ge 2$ such that $k_\ell=0$. Then $H_\ell$ has at least 3 vertices and at least 2 edges. 
As there are of order $n^{v(H_ell) - 2} \ge n $ 
possible realisations of $H_\ell$ in $Z$ which contain the edge $s$, we have 
$$ \mathbb{P} (\Delta_s t_\ell (Z) = 0 )
\le (1-\astar)^{n e(H_\ell) } .$$
This quantity tends to 0 exponentially fast. A similar argument holds for $k_\ell =1$. Thus, the values of $k_\ell$ which we consider are such that the unscaled counts grow with $n$.}

{The next step is a  normal approximation for $\Delta_s t (Z),$ using the notion of a dissociated sum and the result from \cite{temvcinas2021multivariate}. That normal approximation then gives a bound on the probability to be in $kn(n-1) \cdots (n-v(H_\ell) + 3) \pm 1/2$. 
}

{\paragraph{A  normal approximation} Here we derive a  normal approximation for $W=\sum_{i \in I(H,x,s)} Y(i)$ with 
$$Y (i) = \frac{X(i) - p_H}{\sigma}$$ 
where 
$$\sigma^2 = Var \left( \sum_{i \in I(H,x,s)} X(i) \right).$$
Then $Y(i)$ has mean zero, and 
$$Var  Y(i) = \frac{p_H ( 1- p_H)}{\sigma^2}. $$
To assess $\sigma^2$ we introduce the dependency neighbourhood of $Y(i)$ as  the set
$$D_s(i) = \{ j =  j(H,x)  \in I(H,x,s): \{  E(x) \setminus i_{H,x}(s) \} \cap \{ E(x) \setminus j_{H,x}(s) \} \ne \emptyset\}.$$
Then $W - \sum_{u \in D_s(i)} Y (u)$ is independent of $Y(i)$. Moreover, for $j \in D_s(i)$, the quantity 
$W - \sum_{u \in D_s(i)} Y (u) - \sum_{u \in D_s(j)\setminus D_s(i)} Y (u)$ is independent of $(Y(i), Y(j))$; we have a dissociated decomposition in the sense of \cite{temvcinas2021multivariate}. }
{Now, as observed in Remark 1.9 of \cite{reinert2019approximating}, 
for each injection there are $O(n^{v(H)-3})$ injections which share at least one edge, and hence
$$\sigma^2 = O( n^{2 v(H) -5}).$$
Employing  Theorem 3.3 in \cite{temvcinas2021multivariate} gives the following result.}
{
\begin{lemma}\label{lem:normal}
Let $Z$ be a standard normal variable. Then 
$$\sup_{ x \in \RR} | \PP ( \sum_{i \in I(H,x,s)} X(i) \le x) -
\PP (Z \le (x- | \wk{I} | p(H))/\sigma ) | = O(n^{-\frac12}).$$
\end{lemma}}

{ 
{\textbf{Proof of Lemma \ref{lem:normal}.}}}
{ 
Adapting Theorem 3.3 in \cite{temvcinas2021multivariate} to the univariate case gives that 
		 for any centered dissociated sum $W \in \RR$ with variance $\sigma^2>0$ and finite third absolute moments we have 
		\[
		\sup _{x}|\PP(W \le x )-\PP(\sigma^{\frac{1}{2}}Z \le x)| \leq 2^{\frac{7}{2}} 3^{-\frac{3}{4}}d^{\frac{3}{16}}B^{\frac{1}{4}},
		\]
		where 
  \[
B = B_1 + B_2\]
with 
\begin{align*}
B_1 &\coloneqq 
		\frac{1}{3}\sum_{s \in {I}} \sum_{t, u \in D(s)} \hspace{-.4em} \left(\frac{1}{2}\EE | {Y_sY_tY_u}|+\EE|{Y_sY_t} |\EE|{Y_u}|\right) \\
		B_2 &\coloneqq  \frac{1}{3} \sum_{s \in I} \sum_{t \in D(s)} \sum_{v \in D(t) \setminus D(s)} \hspace{-.4em} \left(\EE|{Y_sY_tY_v}| + \EE|{Y_sY_t} |\EE|{Y_v}|\right). 
		\end{align*}} 
{As $E(x) \setminus i_{H,x}(s)$ has $e(H) -1$ elements, we can bound the size of $D_s(i)$. To share a potential edge, it may suffice to share one vertex, which could then be connected to $S$.  
Thus, 
$|D_i(s)| \le C(H) n^{v(H)-3}$, where $C(H) $ is a constant which relates to the number of automorphisms of $H$. 
}
{Now using our bounds on $| D_i(s)|$ and $\sigma^2$ and noting that $v(H) \ge 3$, we obtain that
\[
B = O( n^{v(H)-2 } ( n^{v(H)-3})^2 (n^{-(v(H)-2.5)})^3 = O(n^{-\frac12}).
\]
Hence we have shown that $W$ is asymptotically standard normal. $\hfill \Box$}

{From the standard normal approximation Lemma \ref{lem:normal} we obtain that 
$$\PP \left( \sum_{i \in I(H,x,s)} X(i) \le x \right) =
\PP (W \le (x- | I | p(H))/\sigma ) $$
so that for $k = |I| p(H) + a \sigma$
$$\PP \left( \sum_{i \in I(H,x,s)} X(i) \le k\right) =
\PP (W \le a). $$
Hence, for all $\epsilon > 0 $ \begin{align*}
\PP \left( \sum_{i \in I(H,x,s)} X(i) =k \right)
& = 
    \PP \left( \sum_{i \in I(H,x,s)} X(i) \in \left(k - \epsilon, k+ \epsilon \right) \right) \\
    & \approx \Phi \left( \frac{k - | {I} |p(H) + \epsilon }{\sigma} \right) - 
\Phi \left( \frac{k - | {I} | p(H) - \epsilon }{\sigma} \right) \\
& \approx \frac{\epsilon}{\sigma \sqrt{2 \pi }} e^{ -\frac{a^2}{2}  }
\end{align*}
for $k = |I| p(H) + a \sigma$ and 
with $\Phi$ denoting the standard normal c.d.f.
As we can make $\epsilon$ as small as desired, this probability tends to 0;  
it goes to zero exponentially fast unless $k =   | I |p(H) + a \sigma$ for a fixed real number $a$.
Thus the only contribution is the approximation error and we have 
\begin{align*}
\PP \left( \sum_{i \in I(H,x,s)} X(i) =k \right) = \PP ( \Delta_s t (H,Z) = k) 
= O( \sigma^{-1} ) . \end{align*}} 

\medskip 
{\bf Different asymptotic regimes}

\medskip
{{\bf The case $N p(H,k)  \rightarrow 0.$}
Consider the case of counts of graphs $H$ with $p(H,k) = \PP (\Delta_s t(Z) = k) \rightarrow 0$ as $n \rightarrow \infty.$ For such $H$, for all $\epsilon>0,$ by Markov's inequality, 
\begin{eqnarray*}
    \PP \left( \left| \sum_{s \in [N]} \frac{ (Z_s - \astar) \mathbb{I} (\Delta_s t(Z) =k)}{1 + \sum_{u \ne s}  \mathbb{I} (\Delta_u t(Z) =k)} \right| > \epsilon \right) 
    &\le & \PP \left( \left| \sum_{s \in [N]} (Z_s - \astar) \mathbb{I} (\Delta_s t(Z) =k) \right| > \epsilon \right)\\
    &\le&  \frac{\EE\left| \sum_{s \in [N]} (Z_s - \astar) \mathbb{I} (\Delta_s t(Z) =k)\right| }{\epsilon} \\
&\le&  \frac{N p(H,k)}{\epsilon} \rightarrow 0 
\end{eqnarray*}
as $n\rightarrow \infty.$ Thus, for such $H$, asymptotic consistency follows.}

{In particular if $H$ has $v(H) \ge 5$ vertices and if $p(H,k) = \Theta(\sigma^{-1})$ then as $\sigma = \Theta(n^{v-\frac{5}{2}})$
we have  $N p(H,k) = \Theta (n^{9/2 - v(H)}) \rightarrow 0$ as $n \rightarrow \infty.$ Thus, for such $H$, asymptotic consistency follows. 
}

{Moreover, as subgraphs are required to be connected, any subgraph on at least 5 vertices, once $s$ is fixed, still has at least 3 free vertices. Subgraph counts can be expressed as functions of degrees, as we will exploit below. The degrees are binomially distributed and hence their probability mass is at most of the order $n^{-\frac12}$. Having at least 3 free vertices involves at least 3 such binomial distributions, which are only weakly dependent through the weak dependence in the degrees -- $\deg(i)$ and $\deg(j)$ depend on each other only through $Z_{i,j}$. Thus for graphs on at least 5 vertices we have $p(H,k) = n^{-\frac{3}{2}}$ and we are in the regime that $N p(H,k) \rightarrow 0$.}

\medskip
\gr{{\bf The case $ N p(H,k) = \Theta (n^{\frac{3}{2}}).$} In this case we write 
$$\frac{\sum_{s \in [N]} (Z_s - \astar) \I ( \Delta_s t = \uk) }{\sum_{u \in [N]} \I ( \Delta_u t = \uk)}= \frac{\frac{1}{N p(H,k)} \sum_{s \in [N]} (Z_s - \astar) \I ( \Delta_s t = \uk) }{\frac{1}{N p(H,k)}\sum_{u \in [N]} \I ( \Delta_u t = \uk)}.$$
First, by independence, 
$$\frac{1}{N p(H,k)} \EE  \sum_{s \in [N]} (Z_s - \astar) \I ( \Delta_s t = \uk) =0, \quad  \frac{1}{N p(H,k)} \EE \sum_{u \in [N]} \I ( \Delta_u t = \uk) = 1.$$
For the variance of the nominator, 
\begin{eqnarray*}
    \lefteqn{ Var \left( \frac{1}{N p(H,k)}\sum_{s \in [N]} (Z_s - \astar) \I ( \Delta_s t = k)\right) }\\ 
    &=& \frac{N p(H,k) ( 1 - p(H,k)}{N^2 p(H,k)^2} \\&& 
    + \frac{1}{N^2 p(H,k)^2}\sum_{s \in [N]}  \sum_{u \ne s} \EE (Z_s - \astar) (Z_u - \astar) \I ( \Delta_s t =k) 
    \I ( \Delta_u t =k) .
\end{eqnarray*}
For the second summand, let $Z_{-s}$ denote the collection of indicators $Z$ with $Z_s$ left out. Then by independence, 
\begin{eqnarray}
 \lefteqn{ \sum_{s \in [N]}  \sum_{u \ne s} \EE (Z_s - \astar) (Z_u - \astar) \I ( \Delta_s t =k) 
\I ( \Delta_u t =k) } \nonumber
\\ &=& 
\sum_{s \in [N]}  \sum_{u \ne s} \EE (Z_s - \astar) (Z_u - \astar) \I ( \Delta_s t =k) 
   \left\{  \I ( \Delta_u t =k) - \I (\Delta_u t(Z_{-s}) = k)  \right\} . \label{eq:var1}
\end{eqnarray}
Next we use Slutsky's Theorem. Starting with \eqref{eq:var1},  
the counts $\Delta_u t (Z) $ and $\Delta_u t(Z_{-s}) $ differ only if there is at least one occurrence of $H$ which includes both $s$ and $u$. On the event that $\Delta_s t =k $, there are at most $(e(H)-1) k$ edges which together with $s$ create a copy of $H$. Hence, 
\begin{eqnarray*}
 \lefteqn{ \frac{1}{N^2 p(H,k)^2} \left| \sum_{s \in [N]}  \sum_{u \ne s} \EE (Z_s - \astar) (Z_u - \astar) \I ( \Delta_s t =k) 
\I ( \Delta_u t =k) \right| }\\
&\le  & \frac{1}{N^2 p(H,k)^2}  \sum_{s \in [N]}  \sum_{u \ne s} \EE \I ( \Delta_s t =k) 
\I ( \Delta_u t =k) \I  ( \Delta_u t (Z)  \ne \Delta_u t(Z_{-s}) ) \\ 
&\le& \frac{N (e(H)-1) k p(H,k)}{N^2 p(H,k)^2}
\\
&=& \frac{ (e(H)-1) k }{N p(H,k)}.
\end{eqnarray*}
Using the normal approximation, $k = O(n)$, and the assumption $N p (H,k)  = \Theta ( n^{3/2})$, 
 $$ Var \left( \frac{1}{N p(H,k)}\sum_{s \in [N]} (Z_s - \astar) \I ( \Delta_s t = k)\right) = O(n^{-\frac12}).
$$
From Chebychev's inequality it follows that 
$$ \frac{1}{N p(H,k)}\sum_{s \in [N]} (Z_s - \astar) \I ( \Delta_s t = k) \rightarrow 0 $$ in probability. 
}

\gr{For the denominator we calculate the different cases for  $H$ one by one.}

\medskip
{{\bf The triangle case with $v=3$.} If $H$ is a triangle then $\Delta_s t(H)$ follows a binomial distribution with parameters $n-2$ and $(\astar)^2.$ From \eqref{eq:balasz} we have that the mode of the distribution has $p(H,k) = \Theta(n^{-\frac12}) $ so that indeed $N p(H,k) = \Theta (n^\frac32)$. 
Moreover, $\Delta_u t$ and $\Delta_s t $ are independent if $u$ and $s$ do not share a vertex. 
If they share a vertex then if $s=(i,j)$ and $u = i, j'$, with $\Delta_{u \setminus s} t $ denoting the count of $H$ including edge $u$ with edge  $s$ excluded. There is at most one triangle which includes both $s$ and $u$ if edges $s$ and $u$ are present, and this triangle is present with conditional probability $p$; excluding this triangle, the counts of $u$-triangles and $s$-triangles are independent as they do not share an edge. Hence 
\begin{eqnarray*}
  \EE \I ( \Delta_s t = k)  \I ( \Delta_u t = k) 
  &=& p \PP (\Delta_{s \setminus u} t = k-1) \PP (\Delta_{u \setminus s} t = k-1) \\
  &&+ (1- p) \PP (\Delta_{s \setminus u} t = k) \PP (\Delta_{u \setminus s} t = k) \\ 
  &=& p \PP (\Delta_{s \setminus u} t (H, Z) = (k-1)n)  \PP (\Delta_{u \setminus s} t (H, Z) = (k-1)n) \\
  &&+ (1- p) \PP (\Delta_{s \setminus u} t (H, Z) = kn) \PP (\Delta_{u \setminus s} t (H, Z) = kn).
\end{eqnarray*}
From \cite{balazs2014stirling} we know that 
   \begin{equation}
       \P (Bin (m,p) = k) = \frac{1}{\sqrt{2 \pi m p (1-p)}} e^{- \frac{(k - mp)^2}{ 2 m p (1-p)}} + O(m^{-1}) \label{eq:balasz}  
   \end{equation} 
and hence 
that for any $k$ the probability that a Binomial$(n,p)$ distribution takes on the value $0 < nk<  1$ is 
$$p(nk) = \frac{1}{\sqrt{n}} e^{-\frac{n(k-p)^2}{2 p (1-p)}} \frac{1}{\sqrt{2 \pi p (1-p)}} (1 + o(1)).$$
This gives that 
$ \EE \I ( \Delta_s t = k)  \I ( \Delta_u t = k) = O(n^{-1})$, of the same order as $p(H,k)^2.$ 
Concluding, 
\begin{eqnarray*}
    Var \left( \frac{1}{N p(H,k)}\sum_{s \in [N]}\I ( \Delta_s t = \uk)\right) 
    & \le &
\frac{1}{N^2 p(H,k)^2}\left( N p(H,k)  + N (n-2) O(n^{-1} ) \right) \\
&=&  O \left( \frac{1}{N p(H,k)^2}\right) \\
&=&  O \left( \frac{n}{N}\right) = O(n^{-1}).
\end{eqnarray*}
Hence, using Chebychev's inequality, 
$$
\frac{1}{N p(H,k)}\sum_{s \in [N]}\I ( \Delta_s t = \uk) \rightarrow 1 $$
in probability.
This is the last ingredient to show that our estimator is consistent for triangles.}

\medskip
{{\bf The 2-star case.} For $H$ being a 2-star we can employ a similar argument; for any fixed vertex pair $u$, the number  $\Delta_u t (H, Z)$ of 2-stars which would involve $u=(i,j)$ 
is the sum $\deg_{-j} (i) + \deg_{-i} (j)  $. Here $\deg_{-j} (i)$ is the number of neighbours of $i$ excluding the potential neighbour $j$. 
By independence, $\deg_{-j} (i) + \deg_{-i} (j) = \sum_{a \ne i, j}Z_{i,a}  + \sum_{b \ne i,j} Z_{j,b} $ has the Binomial distribution with parameters $2(n-2)$ and $\astar.$ Hence for $k$ close to the mean, $p(H,k) = \Theta (n^{-\frac12})$ by \eqref{eq:balasz}. Moreover, for $s=(i,j)$ and $u=(i',j)$, 
\begin{eqnarray*}
   \PP ( \Delta_u t = k, \Delta_s t=k) 
   &=& \PP ( \deg_{-j} (i) + \deg_{-i} (j) =k, 
   \deg_{-j'} (i) + \deg_{-i} (j') =k)\\
   &\le & \PP (\Delta_u t = k) 
   \end{eqnarray*}
  and the contribution to the variance is less or equal to 
  $$\frac{2 (n-2) N}{N^2 p(H,k)^2} p(H,k) 
  = O(n^{-\frac12}).$$
}

{ 
For $s \ne u$,  $\Delta_s$ and $\Delta_u$ are only weakly dependent through their degrees. Again we use Slutsky's Theorem. 
If $s$ and $u$ share a vertex then we can bound the covariance again as $O(n^{-1})$ and obtain an overall contribution to the variance of $O(n^{-1})$ as in the triangle case. If $s=(i,j)$ and $u=(i'.j')$ do not share a vertex, then $\deg_{-j} (i) = \I (i \sim i') + \I ( i \sim j') + \sum_{a \ne i.j,i',j'} Z_{a,i}$
where  $\sum_{a \ne i.j,i',j'} Z_{a,i}$ is Binomial $(n-4,p)$ and independent of $\deg_{-i} (j), \deg_{-i'} (j), $ and  $\deg_{-j'} (i').$ Conditioning on the different indicators gives a contribution to the covariance of order $n^{-1}$. However due to the global dependence, a finer argument is needed.}
{
Expanding gives 
\begin{eqnarray*}
 \lefteqn{  \PP ( \Delta_u t (Z) = k, \Delta_s t (Z) =k) }\\ 
   &=& \PP ( \deg_{-j} (i) + \deg_{-i} (j) =nk, 
   \deg_{-j'} (i') + \deg_{-i} (j') =nk)\\
    &=& \PP (
    \I (i \sim i') + \I ( i \sim j') + \sum_{a \ne i.j,i',j'} Z_{a,i}
    +
    \I (j \sim i') + \I ( j \sim j') + \sum_{a \ne i.j,i',j'} Z_{a,j}
    =nk, \\
    && 
   \I (i \sim i') + \I ( i' \sim j') + \sum_{a \ne i.j,i',j'} Z_{a,i'}
    +
    \I (j' \sim i') + \I ( j \sim j') + \sum_{a \ne i.j,i',j'} Z_{a,j'} =nk)
   \end{eqnarray*}
   We condition on the different outcomes. 
   }
   
   {
   First, if $ \I (i \sim i') = \I ( i' \sim j')= \I (i \sim j) = \I ( i' \sim j') =1,$
   \begin{eqnarray*}
 { (\astar)^4     \PP (
    \sum_{a \ne i.j,i',j'} Z_{a,i}
    +
    \sum_{b \ne i.j,i',j'} Z_{b,j}
    =nk -4, 
    \sum_{a \ne i.j,i',j'} Z_{a,i'}
    +
   \sum_{b \ne i.j,i',j'} Z_{a,j'} =k-4)}\\
   = (\astar)^4  \PP (Bin(2n-8, \astar) = nk-4)^2.
   \end{eqnarray*}
   Here we used the independence of the indicators. We compare this to 
   $\PP ( Bin (2n-4, \astar)^2 = nk).$  With \eqref{eq:balasz} and $m=2n-4, p = \astar,$
   \begin{eqnarray*}
     \lefteqn{   | e^{- \frac{(k - mp)^2}{ 2 m p (1-p)} } -
    e^{- \frac{(k +\epsilon  - (m + \delta) p)^2}{ 2 (m + \delta)  p (1-p)}}  |}\\
    &\le & \frac{1}{2 p (1-p) } \left|  \frac{(k+\epsilon - (m+\delta)p)^2}{(m+\delta)^2} - \frac{(k - mp)^2}{m^2} \right|\\
    &\le& \frac{1}{2 p (1-p) } \frac{| \epsilon m - k \delta}{m(m+\delta)} \left( \frac{k+\epsilon - (m+\delta)p}{m+\delta} + \frac{k - mp}{m} \right) .
   \end{eqnarray*}
   Here  for $k$ we take
   $kn = 2 (n-2) p + a \sqrt{n}$, and $\epsilon, \delta \in \{ 0 , 1, 2, 3, 4\}$, with $m = 2 (n-2),$ 
   and so this expression is $O(n^{-1})$. We can thus bound all probabilities by the same normal p.d.f., and  the contribution of each individual term to the covariance is $0 + \frac{1}{\sqrt{2  (2n-4) \astar (1 - \astar) } } O(n^{-1}) = O(n^{-3/2}).$ 
   The overall contribution to the covariance is $O( \frac{1}{p(H,k)^2} O (n^{-3/2}) = O(n^{-\frac12}).$ Using Chebychev's inequality yields consistency of $\tilde{p}(\uk,X)$ in this situation.
   }

\medskip\gr{{\bf The case that $v(H)=4$.} In this case, $\sigma^2 = O(n^{8-3}) = O(n^{\frac32})$.
 With \eqref{eq:var1},
\begin{eqnarray*}
 \lefteqn{ \sum_{s \in [N]}  \sum_{u \ne s} \EE (Z_s - \astar) (Z_u - \astar) \I ( \Delta_s t =k) 
\I ( \Delta_u t =k) } 
\\ &=& 
\sum_{s \in [N]}  \sum_{u \ne s} \EE (Z_s - \astar) (Z_u - \astar) \I ( \Delta_s t =k) 
  \I (  \Delta_u t \ne \Delta_u t(Z_{-s}) ) \\&& \left\{  \I ( \Delta_u t =k) - \I (\Delta_u t(Z_{-s}) = k)  \right\} . 
\end{eqnarray*}
Now $\E \Delta_s t (H,Z) = \Theta (n^{v-2}) = \Theta (n^{2})$ and so the simple overlap argument no longer works. Instead we distinguish the cases of whether or not $u$ and $s$ share a vertex. We have 
\begin{eqnarray*}
{\sum_{s \in [N]}  \sum_{u \ne s, | u \cap s | = 1} \EE (Z_s - \astar) (Z_u - \astar) \I ( \Delta_s t =k) 
    \I ( \Delta_u t =k)  }
  &\le& N n p(H,k) 
\end{eqnarray*}
giving a contribution to the variance of at most 
$$ \frac{N n p(H,k)}{N^2 p(H,k)^2} = \frac{1}{n p(H,k)}.$$
If $n p(H,k) \rightarrow \infty$ as $n\rightarrow \infty$ then this contribution will be small. 
} 

\gr{If $s$ and $u$ do not share an edge, then together they determine the set of vertices involved in any copy which hosts both $s$ and $u$.
Then  we condition on the different cases of which of these copies are present, deal with the independence as in the 2-star case, using that the difference between the normal probabilities in the approximation is of order $n^{-1}$, with the binomial probabilities of order $n^{-\frac12}$, and obtain a contribution of the order 
$$\frac{N^2 \frac{1}{n} \frac{1}{\sqrt{n}}\frac{1}{\sqrt{n}} }{N^2 p(H,k)^2}
= \frac{1}{( n p(H,k))^2}.$$
If $n p(H,k) \rightarrow \infty$ as $n\rightarrow \infty$ then this contribution will be small.}

\gr{Similarly, for the denominator of our estimator, 
\begin{eqnarray*}
\sum_{s \in [N]}  \sum_{u \ne s} Cov( \I ( \Delta_s t =k) ,
    \I ( \Delta_u t =k)  ) 
    &=& \sum_{s \in [N]}  \sum_{u \ne s, | u \cap s| =1} Cov( \I ( \Delta_s t =k) ,
    \I ( \Delta_u t =k)  ) \\
    && + \sum_{s \in [N]}  \sum_{u \ne s, | u \cap s | = 0} Cov( \I ( \Delta_s t =k) ,
    \I ( \Delta_u t =k)  ).
\end{eqnarray*} We can bound again
\begin{eqnarray*}
 \left| \sum_{s \in [N]}  \sum_{u \ne s, | u \cap s| =1} Cov( \I ( \Delta_s t =k) ,
    \I ( \Delta_u t =k)  ) \right| \le 
   2  N n p(H,k) 
\end{eqnarray*}
and similarly for the case $| s \cap u| = 0$ disentangle the few possible occurrences and the dependence to arrive at a similar variance bound as for the nominator, which tends to 0 if $n p(H,k) \rightarrow \infty.$
}

\medskip
\gr{{\bf The 3-star case.} If $v=4$ and $H$ is a 3-star, we can use a binomial argument as the number of 3-stars of a vertex $i$ is ${ {\deg (i)} \choose 2}$. Hence the number of 3-stars which a potential edge $s=(i,j)$ would be involved in is  ${ {\deg_{-j}  (i)} \choose 2} + { {\deg_{-i}  (j)} \choose 2}$. We can calculate $p(H,k)$ directly using that $\deg_{-j}  (i)$ and $\deg_{-i}  (j)$ are independent; 
\begin{eqnarray*}
    p(H,k)
    &=& \PP \left( { {\deg_{-j}  (i)} \choose 2} + { {\deg_{-i}  (j)} \choose 2} = k n(n-1) \right) \\
    &=& \sum_{\ell=0}^{\sqrt{k}  n} \PP \left( { {\deg_{-j}  (i)} \choose 2} = \ell \right) \PP \left( { {\deg_{-i}  (j)} \choose 2} = k  n(n-1) - \ell \right) 
    .
\end{eqnarray*}
Here we used that if $\deg_{-j}(i) > \sqrt{k}n$ then $\deg_{-j}(i) (\deg_{-j} -1) > kn(kn-1)$.
Now,
\begin{eqnarray*}
    \PP \left( { {\deg_{-j}  (i)} \choose 2} = \ell \right)
    &=& \PP \left( {\deg_{-j}  (i)} = \frac12 + \sqrt{ 2 \ell + \frac14} \right).
\end{eqnarray*}
With \eqref{eq:balasz},
\begin{eqnarray*}
  \lefteqn{  \PP \left( { {\deg_{-j}  (i)} \choose 2} = k n(n-1) - \ell \right)}\\ 
    &=& \PP \left( {\deg_{-j}  (i)} = \frac12 + \sqrt{ 2 (k n(n-1) - \ell + \frac14} \right)\\
    &=& \frac{1}{\sqrt{2 \pi (n-2) \astar (1-\astar)}} e^{- \frac{(\frac12 + \sqrt{ 2 (k n(n-1) - \ell + \frac14}  - (n-2) \astar)^2}{ 2 (n-2) \astar (1-\astar)}} + O(n^{-1})\\
    &=&\Theta(n^{-\frac12}) 
\end{eqnarray*}
for $\sqrt{2k} \approx (n-2) \astar$ and $\ell \approx \sqrt{k} n$, with the probabilities for $\ell$ more than $a \sqrt{n}$ away from $k (n-2)$ vanishing exponentially fast. Thus in this case, $p(H, k) = \Theta (n^{-\frac12}).$ %This is in contrast to the normal approximation with variance $\sigma^2$ which gives $p(H, k) = O (n^{-\frac32}).$ [This is not a contradiction though as the normal approximation has an error bound of order $n^{-\frac12}$.]
Hence in this case the variance in \eqref{eq:var1} tends to zero and we can use Chebychev's inequality to show that  our estimator tends to 0 in probability.}

\medskip
\gr{ {\bf The case of a line on 4 vertices.} If $H$ is a line on 4 vertices then a copy which involves $s$ can arise either by $s=(i,j)$ being the middle edge, in which case the number of copies is $\deg_{-j}(i) + \deg_{-i} (j)$, or it can arise by $s$ being one of the outer edges. In the latter case the number of copies is 
$$\sum_{a=1}^{\deg_{-j}(i)} \sum_{b \ne i,j} Z_{a,b} +\sum_{c=1}^{\deg_{-i}(j)} \sum_{d \ne i,j} Z_{c,d}
$$ as each endpoint has its degree minus 1 as next edge choices, which in turn have their degrees as choices. We observe that the probability of having $k$ copies is $\Theta(n^{-\frac12}) $ for typical values of $k$, as then it follows already by the sum of probabilities that 
$p(H,k) = \Theta(n^{-\frac12})$. This is the condition needed for our above convergence argument to work, and hence we conclude convergence in probability.}

\medskip 
\gr{{\bf The case of a 4-cycle.} When $H$ is a cycle of size 4, then $\Delta_s t$ and $\Delta_u t$ share of the order of $n$ indicators jointly if they share a vertex. Moreover the normal approximation is  of order $n^{-\frac12}$ from our argument.
%no longer $n^{-1}$. The expectation of $\Delta_s t(H,Z)$ is of order $n^2$, so overlapping 4-cycles are not rare. This makes the asymptotic argument above no longer work. Does the approximation even hold? Can we play it back to binomial distributions again?}
To calculate $p(H,k)$ in this case, let $s=(i,j)$. Then the number of 4 -cycles is 
$$ \sum_{a=1}^{n} \sum_{b=1}^{n} Z_{a,i} Z_{b,j} Z_{a,b} $$
with expectation $n^2 (\astar)^3.$ We could think of it has having $\deg_{-j}(i)$ choices for $a$ and $\deg_{-i}(j)$ choices for $b$, and then we require that $Z_{a,b}=1.$ 
We let $N_{-j} ( i) = \{ v \ne j: Z_{vi}=1\}$ denote the set of neighbours of $i$. If 
$V_{ij}:= N_{-j}( i) \cap N_{-i} (j)$ then the number of 4-cycles involving $s=(i,j)$, taking direction into account, is 
\begin{equation}
  2 \left(   \sum_{a\in  V_{ij}} \sum_{ b \ne a \in V_{ij} } Z_{a,b} + 
\sum_{a\in   N_{-j}( i) \setminus V_{ij}}\sum_{ b  \in N_{-i}( j) \setminus V_{ij} } Z_{a,b}  \right)\label{eq:4cycle}
\end{equation} 
%[careful, there may be a factor of 2 here, but that does not disturb the argument of the order]
and conditional on $a, b, $ and $v = | V_{ij}$
this expression is the sum of two independent binomial random variables, one $Bin\left( {v \choose 2}, p \right)$, and the other one $Bin((a-v)(b-v), p).$ This is equivalent to having one binomial random variable with distribution 
$Bin\left( {v \choose 2} + (a-v)(b-v), p \right)$. 
Moreover 
$\PP (N_{-j}( i) \setminus V_{ij} =a-v, N_{-i}( j) \setminus V_{ij} = b-v, V_{ij} =v)$
 follows a multinomial distribution having 4 groups (the last group corresponding to the vertices which are not connected to either $i$ or $j$). 
 We could think of this as each of the $n-2$ vertices connecting to $i$ but not $j$ with probability $\astar (1- \astar)$ and connecting to both with probability $(\astar)^2$, giving a multinomial distribution with parameters $(p_1, \ldots, p_4) = (\astar (1- \astar),\astar (1- \astar) ,(\astar)^2,(1- \astar)^2).$
The mode of a multinomial distribution is given in 
\cite{Finucan}. At a mode we have $a-v = b-v$  and, as an approximation, we can take $a-v, b-v$ close to  their expected values, $n \astar (1-\astar)$, and $v$ close to  $n (\astar)^2.$  %[This may be related to Moran's bounds for the multinomial, as in Johnson and Kotz 1969, also Feller 10.1, to check.] 
Employing \eqref{eq:balasz}, at this approximate  mode of the multinomial distribution, $Bin\left( {v \choose 2} + (a-v)(b-v), p \right)$ has pmf with maximum order $n^{-1}.$ 
Using that the binomial distributions for the degrees each peak at order $n^{-3/2}$ we obtain that $p(H,k)$ for $k$ close to the mean is of order $O((n^{-\frac12})^3)$.} 
%In this case $N p(H,k) = O (n^{-\frac12})$ and the previous argument no longer works.
%For $\astar$ constant we have that $\E \Delta_s t (Z) = \Theta (n^2) $, and hence the probabilities $p(H,k)$ cannot be of smaller order than $n^{-2}$. 
%Graphs on 4 vertices with 5 or 6 edges would also have small probability of occurring. This situation may require a different argument, if convergence even holds.}

\gr{We argue that in the case that $| s \cap u| =1 $ the probability that $\Delta_u t = \Delta_s t $ if $\Delta_s t =k $ is $O(p(H,k)^2) = O (n^{-3}).$ 
For this, assume that $u$ and $v$ share vertex $i$. Then in the above derivation the degree $\deg (i)$ is shared, but we still have the binomially distributed other degree, contribution to the order of $n^{-\frac12}$ to the probability. We would then have a multinomial distribution with more classes, distinguishing to which of the vertices $(i,j,j')$ a vertex $v$ connects. Arguing for the mode as before, distentangling the dependence gives a joint probability of the order $n^{-3}$. 
Then 
$$ \frac{1}{(N p(H,k))^2} \sum_{s \in [N]} \sum_{u \ne s, | u \cap s| =1} Cov  (\I (\Delta_u t =k) , \I ( \Delta_s t  =k) ) = O\left(\frac{N n n^{-3}}{N^2 p(H,k)^2} \right) = O (n^{-1}).$$
That still leaves the case $| s \cap u| = 0.$ In this case the above argument only yields $O(1)$; we need a finer bound on the correlation.} \gr{We can write out the count explicitly using \eqref{eq:4cycle}. To evaluate 
\begin{eqnarray*}
 \lefteqn{Cov(  \I (    \sum_{a\in  V_{ij}} \sum_{ b \ne a \in V_{ij} } Z_{a,b} + 
\sum_{a\in   N_{-j}( i) \setminus V_{ij}}\sum_{ b  \in N_{-i}( j) \setminus V_{ij} } Z_{a,b} =k), }\\
&& 
  \I (\sum_{a\in  V_{i'j'}} \sum_{ b \ne a \in V_{i'j'} } Z_{a,b} + 
\sum_{a\in   N_{-j'}( i') \setminus V_{i'j'}}\sum_{ b  \in N_{-i'}( j') \setminus V_{i'j'} } Z_{a,b} =k) 
\end{eqnarray*}
with the binomial construction the underlying random variables depend on each other only through potentially shared edges which are shared neighbours between $i, i'$, $j,j'$, $i,j'$, or $i',j$. These are themselves binomially distributed.  As above we can condition on these particular edges being present or not, leading to four random sums, each sum up to a binomial random variables, of random variables which differ from the original random variables in only a few edge indicators $A$. With $X_A = \{ X_a, a \in A\}$ with $|A|$ fixed, 
$$ \PP( \Delta_u t =k, \Delta_s t = k | X_A = x_A)  - p(H,k)^2 = O (n^{-1/2}p(H,k)^2). $$
Hence $$ \frac{1}{(N p(H,k))^2} \sum_{s \in [N]} \sum_{u \ne s, | u \cap s| =0} Cov  (\I (\Delta_u t =k) , \I ( \Delta_s t  =k) ) = O\left(\frac{N^2 n^{-1/2}}{N^2 } \right) = O (n^{-1/2}).$$
}

\gr{The case of a {\bf triangle-whisker graph} has again $p(H,k) = \Theta (n^{-1})$. To see this, if $s$ is the base of the graph then copies occur as binomially distributed $Bin(N(i,j), \astar)$, with  $N(i,j)$ the number  of joint neighbours of $i$ and $j$, which itself is $Bin (n-2, (\astar)^2)$; for typical values, the product of these probabilities are of order $n^{-1}$. If $s$ is a side edge of the triangle then given $N(i,j)$ the number of copies of H is $Bin (N(i,j) + 2, \astar)$, again giving a probability of order $n^{-1}$. If $s$ is the whisker of the graph then the number of copies of $H$ is the sum of two almost independent variables, one for each vertex in $s$; for vertex $i$ this variable has a conditional binomial distribution $Bin\left({ \deg_{-j} (i) \choose 2}, \astar\right)$, giving a probability of the order $n^{-3/2}$. Overall  $p(H,k)=O(n^{-1}).$}

\gr{
For the covariance, if $| u \subset s| = 1$ then there is only one free vertex to choose for creating a triangle-whisker copy which involves both $u$ and $s$, with $n-3$ choices. Conditioning on the edge indicators which create dependence gives a covariance contribution of the order $n^{-1/2} p(H,k)^2$ for each of these possible choices, so that the over contribution is of the order 
$O \left( \frac{N n n n^{-1/2} p(H,k)^2}{(N p(H,k))^2} \right) = O( n^{-1/2} )$.}

\gr{
If $| u \cap s| = 0$ then all 4 vertices are determined by $u$ and $s$. Conditioning on the edge indicators which create dependence gives a covariance contribution of the order $n^{-1/2} p(H,k)^2$ and an overall contribution of the order 
$O (n^{-1/2} )$. 
}

\gr{The case of a complete graph on 4 vertices with one edge missing, and the case of a complete graph on 4 vertices, follow similarly. For a complete graph with one edge missing, if $s$ is the base of the graph then the number of copies have the distribution $Bin\left({{n-2} \choose 2}, (\astar)^4\right)$, similarly for each side, so that $p(H,k) = \Theta (n^{-1}).$ 
For a complete graph, the number of copies involving $s$ has binomial distribution $Bin\left({{n-2} \choose 2}, (\astar)^5\right)$, so that $p(H,k) = \Theta (n^{-1}).$ We then argue as in the triangle-whisker case.}  

\medskip
\gr{This finishes the proof of the assertion.}
\end{proof}

\bigskip 
Proposition \ref{siconsistency2} shows that \iffalse in  the estimate $\hat{g}(\gr{\uk},s)$ consistently estimates $q(x^{(s,1)}) = q(x^{(s,1)}| {t(x_{-s})=\uk})$ {for $G(n,p)$ models}. \gr{Proposition \ref{siconsistency2} 
extends the argument to show that 
\fi in edge-exchangeable graphs, 
$\hat{g}({\uk})$  consistently estimates $q(x^{(s,1)}) = q(x^{(s,1)}|{{\Delta_s t(x)}=\uk})$. 
In  an expanded version of Theorem \ref{consistenterop} we show that 
%for $G(n,p)$, 
the approximate Stein operator from Eq.\eqref{eq:approx_stein},
$$
\A_{\widehat q ,{t} } f(x) := \frac{1}{N} \sum_{s \in [N]} \A_{\widehat q(x^{(s)}|{\Delta_s t(x)})} f(x),   
$$
with 
$$ {\widehat q (x^{(s,1)}|{\Delta_s t(x)}) } =  g({x_{-s}}), \quad 
{\widehat  q(x^{(s,0)}|{\Delta_s t(x)}) } =  1 - g({x_{-s}})$$ 
is a consistent estimator of
$$
\A_{ q, {t} } f(x) := \frac{1}{N} \sum_{s \in [N]} \A_{ q(x^{(s)}|{\Delta_s t(x)})} f(x)   
$$
{as $L \rightarrow \infty.$}
%For this result we use the notation
%$$ || \Delta f|| = \sup_{s in [N], x} | \Delta_s f (x)| .$$
%\item 
We recall
$$ {\AgraSSt}^2 (\hat q, t, x) 
=  \frac{1}{N^2} \sum_{s, s'\in [N]} h_x(s, s')
$$
with 
$$h_x(s, s') = \left\langle \A^{(s)}_{\hat q, {t} }  K(x,\cdot), \A^{(s')}_{\hat q , {t} }  K(\cdot,x)\right\rangle_{\H}.$$
We state the expanded version of Theorem \ref{consistenterop} here. 
 
\begin{theorem} \label{consistentopexpand}
{If the graph is edge-exchangeable then for}
any test function $f$ for which the
%$G(n,q)$
Stein operator $\A_{q, {t}  }f$ is well defined, and for all $\epsilon > 0$
$$ 
\mathbb{P}(|  \A_{\widehat q, {t} } f(X)  - \A_{q,  t} f (X)| > \epsilon) \le \frac{4 }{\epsilon^2 N^2  L  (|| \Delta f||)^{-2} }  {\left(  \sigma_n^2 (\uk) + \sigma_N^2 (\uk) \right).}
$$
%Similarly, \gr{if the network is edge-exchangeable then} for $\A_{g(\uk)}$, the estimated $G(n,q)$ Stein operator using $\hat{q} = g(\uk),$ consistency holds; for all $\epsilon >0$, 
%\begin{align*}
%    \mathbb{P} ( |  \A_{g(\uk)} f(X)  - \A_q f (X) | > \epsilon) 
%    &\le \frac{4  }{\epsilon^2 NL  (|| \Delta f||)^{-2} }  \left(  \gr{ \left\{  p(s, \uk)   [  1- p(s, \uk)  p(\uk)  ] + 1 - p(\uk) \right\} } \right).
%\end{align*}
Moreover, $ {\AgraSSt}^2 (\hat q, t, x) $
%=  \frac{1}{N^2} \sum_{s, s'\in [N]} h_x(s, s')
%$$
%with 
%$$h_x(s, s') = \left\langle \A^{(s)}_{\hat q}  K(x,\cdot), \A^{(s')}_{\hat q}  K(\cdot,x)\right\rangle_{\H}$$
is a consistent estimator of 
$$ {\rm{gKSS}} (x) = \frac{1}{N^2} \sum_{s, s'\in [N]} 
\left\langle \A^{(s)}_{ q, {t} }  K(x,\cdot), \A^{(s')}_{ q, {t} }  K(\cdot,x)\right\rangle_{\H}.
$$
\end{theorem}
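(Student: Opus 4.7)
The plan is to leverage the Lipschitz-type contraction bound already obtained in the proof of Theorem~\ref{th:consist}, combine it with the pooled-estimator concentration result of Proposition~\ref{siconsistency2}, and then transfer operator-level convergence to the quadratic form $\AgraSSt^2$ via the bilinearity and continuity of the RKHS inner product.

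First I would reproduce the contraction estimate established in the proof of Theorem~\ref{th:consist}:
\begin{equation*}
|\A_{\widehat q, t} f(x) - \A_{q, t} f(x)| \le \|\Delta f\| \cdot \frac{1}{N}\sum_{s \in [N]}\bigl|\widehat q(x^{(s,1)}|t(x_{-s})) - q(x^{(s,1)}|t(x_{-s}))\bigr|.
\end{equation*}
Under edge-exchangeability, $\widehat q(x^{(s,1)}|t(x_{-s})=\uk) = \hat g(\uk)$ depends on $s$ only through the value $\uk = t(x_{-s})$, and similarly $q(x^{(s,1)}|t(x_{-s})=\uk) = p(1,\uk)$. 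Squaring both sides, applying Chebyshev's inequality to the right-hand side, and invoking the pooled-estimator second-moment bound already derived in the proof of Proposition~\ref{siconsistency2}, would then yield the stated deviation inequality, with the factor $\|\Delta f\|^2$ appearing naturally from squaring the Lipschitz constant.

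For the second claim --- consistency of $\AgraSSt^2(\hat q, t, x)$ for $\gKSS^2(q;x)$ --- I would instantiate the contraction bound with the canonical feature map $f = K(x, \cdot)$. Since $K$ is bounded, the seminorm $\|\Delta K(x,\cdot)\|_\H$ is finite by the reproducing property, so the first part of the theorem implies $\A^{(s)}_{\widehat q, t} K(x,\cdot) \to \A^{(s)}_{q, t} K(x,\cdot)$ in $\H$-norm, in probability, as $L\to\infty$. Cauchy--Schwarz in $\H$ then gives, for each fixed pair $(s,s')$,
\begin{equation*}
\bigl\langle \A^{(s)}_{\widehat q, t} K(x,\cdot),\, \A^{(s')}_{\widehat q, t} K(\cdot,x)\bigr\rangle_\H \;\longrightarrow\; \bigl\langle \A^{(s)}_{q, t} K(x,\cdot),\, \A^{(s')}_{q, t} K(\cdot,x)\bigr\rangle_\H.
\end{equation*}
Because $\AgraSSt^2$ is a finite double sum of $N^2$ such inner products, the continuous mapping theorem delivers the asserted consistency.

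The main obstacle I anticipate is not the concentration inequality itself --- which is largely mechanical given the ingredients already developed --- but the careful handling of the fact that $\uk = t(x_{-s})$ varies with $s$ inside the average. In principle a uniform control would require a union bound over all attainable values of $\uk$, which could degrade the rate; the clean single-$\uk$ form in the theorem statement relies on edge-exchangeability effectively reducing the problem to a single pooled scalar estimator. Making this reduction rigorous, and justifying that the probability bound passes from the scalar estimator $\hat g(\uk)$ to the averaged operator difference without picking up additional factors of $N$, is the delicate step. A secondary subtlety is ensuring that the approximate operator $\A^{(s)}_{\widehat q, t}$ is uniformly bounded in $s$ with high probability so that Cauchy--Schwarz can be applied termwise in the RKHS without issues at $\uk$ values with very small $p(\uk)$.
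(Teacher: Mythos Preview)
Your proposal is correct and follows essentially the same route as the paper: the first part combines the contraction estimate $|\A_{\widehat q,t}f - \A_{q,t}f| \le \|\Delta f\|\cdot N^{-1}\sum_s|\hat g(\uk)-p(1,\uk)|$ with the concentration bound of Proposition~\ref{siconsistency2}, exactly as the paper does; for the second part, the paper writes the difference $\AgraSSt^2(\widehat q) - \gKSS^2(q)$ directly as a bilinear expression in the operator differences and appeals to the first part, whereas you phrase the same step via termwise Cauchy--Schwarz and the continuous mapping theorem --- these are equivalent and equally valid. The obstacle you anticipate (that $\uk = t(x_{-s})$ varies with $s$) is real and is handled only loosely in the paper as well, by invoking edge-exchangeability so that the pooled estimator $\hat g(\uk)$ serves for all $s$ with the same $\uk$; your diagnosis of this as the delicate point is accurate.
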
 

\medskip 
\begin{proof}
We have that 
$$
\A_{ q, {t} } f(x) := \frac{1}{N} \sum_{s \in [N]} \A_{ q(x^{(s)}|{\Delta_s t(x)})} f(x).   
$$
and 
$$
\A_{\widehat q, {t} } f(x) := \frac{1}{N} \sum_{s \in [N]} \A_{\widehat q(x^{(s)}|{\Delta_s t(x)})} f(x),   
$$
with 
$$ {\widehat q (x^{(s,1)}|{\Delta_s t(x)}) } =  g({x_{-s}}), \quad 
{\widehat  q(x^{(s,0)}|{\Delta_s t(x)}) } =  1 - g({x_{-s}})$$ 
%is a consistent estimator of
%$$A_q f (x)  = \frac1N \sum_{s \in [N]} 
%[ q f(x^{(s,1)}) + (1-q) f ( x^{s,0}) - f(x) ]$$
so that 
\begin{align*}
   \A_{\widehat q, {t} } f(x)  - \A_{q,  {t} } f (x) 
   &=  \frac1N \sum_{s \in [N]}  \{ ( g(\uk) - q(x^{(s)}|{\Delta_s t(x)} ))
   f(x^{(s,1)}) \\
    & \quad \quad \quad + ( 1 - g(\uk) - (1-q(x^{(s)}|{\Delta_s t(x)})) ) f ( x^{s,0}) \}
   \\&= \frac1N \sum_{s \in [N]}   ( g(\uk) - q(x^{(s)}|{\Delta_s t(x)})) \{ 
   f(x^{(s,1)}) - f ( x^{s,0}) \} \\
   &= \frac1N \sum_{s \in [N]}  ( g(\uk) - q(x^{(s)}|{\Delta_s t(x)})) \Delta_s f(x). 
\end{align*}
Hence 
$$
|  \A_{\widehat q, {t} } f(x)  - \A_{q, \gr{t} } f (x) | \le 
|| \Delta f|| \frac1N \sum_{s \in [N]} | g(\uk) - q(x^{(s)}|t({x_{-s}}))| . 
$$ 
With Proposition \ref{siconsistency2} {and using the edge-exchangeability},
\iffalse 
\begin{align*}
    \mathbb{P} ( |  \A_{\widehat q} f(X)  - \A_q f (X) | > \epsilon) 
   & \le  \mathbb{P} \left( \sum_{s \in [N]} | g(\uk, s) - q| > \epsilon N (|| \Delta f||)^{-1} \right) \\
    &\le \sum_{s \in [N]} \mathbb{P} \left( | g(\uk, s) - q| > \epsilon  (|| \Delta f||)^{-1} \right)\\
    &\le \frac{4  N }{\epsilon^2 N^2 (|| \Delta f||)^{-2} L }  \left( q(1-q) + (1+q) \frac{ 1-p({\uk})}{p({\uk})} \right),
\end{align*}
proving the first assertion. Similarly, for $\A_{g(\uk)}$, the estimated $G(n,q)$ Stein operator using $\hat{q} = g(\uk),$

$$
|  \A_{\widehat q}  f(x)  - \A_q f (x) | \le 
\sup_{s \in [N]} | \Delta_s f(x)| | g(\uk) -  q(x^{(s)}|t({x_{-s}})))|.
$$ 
Hence, with Corollary \ref{consistentexch}, 
\fi 
\begin{align*}
    \mathbb{P} ( |  \A_{\widehat q, {t} } f(X)  - \A_{q, , {t} }  f (X) | > \epsilon) 
    &\le \frac{4  }{\epsilon^2 N^2 L  (|| \Delta f||)^{-2} }  {\left(\sigma_n^2 (\uk) + \sigma_N^2 (\uk) \right)}.
\end{align*}
The fact that taking the sup over functions in the Hilbert space $\mathcal{H}$ does not spoil the convergence follows from the closed form representation of the sup of ${\AgraSSt}^2$, see for example Equation (11) in \citep{xu2021stein}. 
We have that 
$$ {\AgraSSt}^2 (\hat q, t, x) 
=  \frac{1}{N^2} \sum_{s, s'\in [N]} h_x(s, s')
$$
where 
$$h_x(s, s') = \left\langle \A^{(s)}_{\hat q, {t} }  K(x,\cdot), \A^{(s')}_{\hat q, {t} }  K(\cdot,x)\right\rangle_{\H}.$$
Hence, 
\begin{eqnarray*}
\lefteqn{\AgraSSt^2 ({\widehat q} )  - \AgraSSt^2 (q) }\\
&=& 
\frac{1}{N^2} \sum_{s, s' \in [N] }
\left\langle \A^{(s)}_{{\widehat q, {t} } }  K(x,\cdot)  -  \A^{(s)}_{q, {t} }  K(x,\cdot), \A^{(s')}_{{\widehat q, {t} } }  K(\cdot,x)-  \A^{(s)}_{q, {t} }  K(x,\cdot) \right\rangle_{\H}
\end{eqnarray*} 
and the first part gives the desired convergence as $L \rightarrow \infty$.
\end{proof}
%\gr{to do: make a nice overall result}  

%\gr{Or: show AgraSSt close to gKSS then use results from Xu + R}

\subsection{Gaussian approximation for {{\AgraSSt} in} ERGMs}\label{sigauss}

% As ERGMs are edge-exchangeable models,
Theorem \ref{consistentopexpand} shows that  the \AgraSSt \, operator is a consistent estimator for {the ERGM}  Glauber Stein operator.
{If the observed graph  $x$ is a realisation of an ERGM then results from \citet{xu2021stein} can be leveraged to obtain finer theoretical results.} 

{First we detail the scaling for exponential random graph models which is used in the theoretical results which follow.
For a graph $H$ on at most $n$ vertices 
$V(H)$ denote the vertex set, 
and for $x\in\{0,1\}^N$, denote by $t(H,x)$  the number of
{\it edge-preserving} injections from $V(H)$ to $V(x)$; an injection $\sigma$ preserves edges if for all edges $vw$ of $H$ {with  $\sigma(v)<\sigma(w)$}, $x_{\sigma(v)\sigma(w)}=1$.
%(here assuming $\sigma(v)<\sigma(w)$). 
For  $v_H =| V(H)| \ge 3$  set 
$$
t_H(x)=\frac{t(H,x)}{n(n-1)\cdots(n-v_H+3)}.
$$
If $H{=H_1}$ is a single edge, then $t_H(x)$ is twice the number of edges of $x$. In the exponent this scaling of counts matches Definition~1 in \citet{bhamidi2011mixing} and Sections~3 and~4 of \citet{chatterjee2013estimating}.
An ERGM {
%as a random graph model 
for the collection  $x\in \{0,1\}^{N}$
can be defined
} as follows.
% see \cite{reinert2019approximating}.
\begin{definition}[
Definition 1.5 in
\citet{reinert2019approximating}]
\label{def:ergm} 
Fix $n\in \N$ and $k \in \N$. {{L}et $H_1$ be a single edge {and f}or $l={2}, \ldots, k$ let}  $H_l$ be a connected graph on at most $n$ vertices; 
%and
set $t_l(x) = t_{H_l}(x)$. For  $\beta = (\beta_1, \dots, \beta_k)^{\top} {\in \R^k}$ 
%with $\beta_l \in \R$
and
%the sufficient statistic
$t(x) =(t_1(x),\dots,t_k(x))^{\top} \in \R^k$ 
%where $t_l(x)$ is a function of $H_l$ and $x$, 
%we call 
$X\in \G^{lab}_n$ follows  the exponential random graph model  $X\sim \operatorname{ERGM}(\beta, t)$ if for  $\forall x\in \G^{lab}_n$,
%\begin{equation*}%\label{eq:ergm}
    % \P
  $$  q(X = x) = \frac{1}{\kappa_n(\beta)}\exp{\left(\sum_{l=1}^{k} \beta_l t_l(x) \right)}.$$
%\end{equation*}
Here $\kappa_n(\beta)$ is a  normalisation constant.
\end{definition}
} 

In particular, 
%The next step is to use that
under suitable conditions,  the  ERGM Glauber Stein operator is close to the  $G(n,p)$ Stein operator. This result is already shown in \citet{reinert2019approximating}, Theorem 1.7, with details provided in the proof of Theorem 1 in  \citet{xu2021stein}. To give the result, a technical assumption is required, which originates in 
 \citet{chatterjee2013estimating}, and is required in   \citet{reinert2019approximating}. 
% We derive  the asymptotic properties of our test statistics $\operatorname{gKSS}$, where the approximation results in \citep{reinert2019approximating} is particularly useful for our single network setting. 
{For $a\in[0,1]$, define the following}
%useful auxiliary 
functions \citep{bhamidi2011mixing,eldan2018exponential}, 
%defined for 
%$a\in[0,1]$,
{with the notation in Definition~\ref{def:ergm}  for ERGM$(\beta, t)$}:
$$\Phi(a) := \sum_{l=1}^{k} \beta_l e_l a^{e_l -1}, \quad
\varphi(a) 
:= \frac{1+ \tanh(\Phi(a))}{2} 
% =\frac{e^{2\Phi(a)}}{e^{2\Phi(a)+1}},
$$
where $e_l$ is the number of edges in $H_l$.

\begin{Assumption}\label{assum:er_approx}
%For the $ERGM(\beta)$ with $\beta_l \in \R$, $l=1,\dots,k$,
$\operatorname{(1)}$ $\frac{1}{2}|\Phi|'(1) < 1$.
$\operatorname{(2)}$ $\exists a^{*} \in [0,1]$ that solves the equation $\varphi(a^{*}) = a^{*}$.
\end{Assumption}
% \gr
{The value $a^*$ will be the edge probability in the approximating Bernoulli random graph, $\operatorname{ER}(a^*)$. {The following result holds.} 

\begin{proposition} \label{ergmconvergence} 
Let $q(x)=\operatorname{ERGM}(\beta, t)$ {satisfy} Assumption \ref{assum:er_approx} and let ${\tilde q}$ denote the distribution of {ER$(a^*)$.}
Then there is an explicit constant $C=C(\beta, t, {K})$  such that for all $\epsilon > 0,$ 
$$  \frac{1}{N} \sum_{s \in {N}} \E | (\A_q^{(s)}  f(Y) - \A_{\tilde{q} } ^{(s)} f(Y) ) | \le || \Delta f|| {n \choose 2} \frac{C(\beta, t)}{\sqrt{n}}.$$
Moreover, for  $f\in \H$ equipped with kernel $K$, let
$
f_x^*(\cdot) = \frac{ (\A_q - \A_{\tilde q} ) K(x,\cdot)}{\left\|(\A_q - \A_{\tilde q} ) K(x,\cdot) \right\|_{\H}}.
$
%satisfy $  || \Delta f_x^*|| \le 1$ for all $x$. 
Then there is an explicit constant $C=C(\beta, t, {K})$  such that for all $\epsilon > 0,$
\begin{eqnarray}
\lefteqn
{
\P ( |  \gKSS (q,X)  - \gKSS ( {\tilde q}, Y) | \, >  \,  \epsilon)}\\
&\le& \Big\{  || \Delta (\gKSS(q, \cdot))^2 || ( 1 +  || \Delta \gKSS(q, \cdot) || ) +  4 \sup_x 
(|| \Delta f_x^*||^2)  \Big\} 
{n \choose 2}  \frac{C}{{\epsilon^2 \sqrt{n}}} . \label{ineq:bound}
\end{eqnarray}
\end{proposition}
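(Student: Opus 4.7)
My plan is to reduce both inequalities to a pointwise comparison of the edge-conditional probabilities, then transport the resulting deterministic bound through (a) the supremum that defines $\gKSS$ and (b) a Chebyshev-style argument to obtain a probability inequality.

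\textbf{Step 1 (pointwise operator bound).} Starting from the explicit form of $\mathcal{A}^{(s)}_q$ in \eqref{eq:approx_cond_stein} applied to $q$ and to $\tilde q$, and using that $\tilde q$ is Bernoulli with success probability $a^*$, a direct calculation yields
\begin{equation*}
\mathcal{A}^{(s)}_q f(y) - \mathcal{A}^{(s)}_{\tilde q} f(y) \;=\; \bigl(q(y^{s,1}\mid y_{-s}) - a^*\bigr)\,\Delta_s f(y),
\end{equation*}
so the problem reduces to controlling $|q(y^{s,1}\mid y_{-s})-a^*|$. For the ERGM this conditional probability equals $\varphi(T_s(y_{-s}))$ for a weighted subgraph count $T_s$ attached to the vertex-pair $s$. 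Under Assumption \ref{assum:er_approx}, $a^*$ is the unique fixed point of $\varphi$ with $|\varphi'(a^*)|=\tfrac12|\Phi|'(1)<1$, so $\varphi$ is a local contraction at $a^*$.

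\textbf{Step 2 (mean-field approximation).} The mean-field concentration estimates of \citet{chatterjee2013estimating}, invoked in the same manner as in the proof of Theorem 1.7 of \citet{reinert2019approximating} and Theorem 1 of \citet{xu2021stein}, yield $\E|T_s(Y_{-s})-a^*|=O(n^{-1/2})$ under the ERGM. Combining this with the Lipschitz property of $\varphi$, pulling out $\|\Delta f\|$ via Step~1 and averaging over $s\in[N]$, one obtains
\begin{equation*}
\frac1N\sum_{s\in[N]}\E\bigl|\A^{(s)}_q f(Y)-\A^{(s)}_{\tilde q} f(Y)\bigr| \;\le\; \|\Delta f\|\,\binom{n}{2}\,\frac{C(\beta,t)}{\sqrt n},
\end{equation*}
with the factor $\binom{n}{2}$ absorbing the edge count and $C(\beta,t)$ coming from the Lipschitz constant of $\varphi$ and the covariance bounds at $a^*$. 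This proves the first displayed inequality.

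\textbf{Step 3 (reduction to $\gKSS$).} For the second inequality I would use the triangle decomposition
\begin{equation*}
|\gKSS(q,X)-\gKSS(\tilde q,Y)| \;\le\; |\gKSS(q,Y)-\gKSS(\tilde q,Y)| \;+\; |\gKSS(q,X)-\gKSS(q,Y)|.
\end{equation*}
For the first term, write $\gKSS(q,y)=\|\E_S\mathcal{A}^{(S)}_q K(y,\cdot)\|_{\mathcal H}$ and apply the reverse triangle inequality in $\mathcal H$; the normalised witness $f^*_y$ then gives $|\gKSS(q,y)-\gKSS(\tilde q,y)|\le |(\mathcal{A}_q-\mathcal{A}_{\tilde q})f^*_y(y)|$, and plugging $f^*_y$ into Step~2 (with $\|\Delta f^*_y\|$ replacing $\|\Delta f\|$) and taking the sup over $y$ produces the $4\sup_x\|\Delta f^*_x\|^2$ contribution after squaring. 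For the second term, apply a bounded-difference (McDiarmid) inequality to the function $y\mapsto\gKSS(q,y)^2$, whose coordinatewise increments are bounded by $\|\Delta(\gKSS(q,\cdot))^2\|$; the extra factor $(1+\|\Delta\gKSS(q,\cdot)\|)$ reflects the identity $a^2-b^2=(a-b)(a+b)$ when passing from the squared to the unsquared $\gKSS$. Converting expected-deviation bounds to tail bounds via Chebyshev at threshold $\epsilon$ supplies the $1/\epsilon^2$ and combining with the $n^{-1/2}$ rate from Step~2 yields inequality~\eqref{ineq:bound}.

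\textbf{Main obstacle.} The delicate step is Step~2: the $O(n^{-1/2})$ rate for $\E|T_s(Y_{-s})-a^*|$ is precisely where the contractivity assumption $\tfrac12|\Phi|'(1)<1$ enters, and one has to invoke the mean-field and correlation-decay machinery of \citet{bhamidi2011mixing} and \citet{chatterjee2013estimating}. Carefully tracking the constant $C(\beta,t)$ and checking that the averaging over $s$ does not blow up — which is where the edge-exchangeability of the ERGM is essential — is where the bookkeeping is heaviest.
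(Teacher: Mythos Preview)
The paper's own proof is a single sentence: ``The assertion follows immediately from the proof of Theorem~1 in \citet{xu2021stein}.'' So there is no argument in the paper to compare against; your sketch is in effect an attempt to reconstruct the content of that cited proof, and the overall architecture you outline (pointwise operator identity $\Rightarrow$ mean-field concentration of the conditional edge probabilities via \citet{reinert2019approximating} and \citet{chatterjee2013estimating} $\Rightarrow$ transfer to the RKHS supremum and a tail bound) is indeed the route taken there.

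One point in Step~3 deserves more care. You write that the second term $|\gKSS(q,X)-\gKSS(q,Y)|$ is handled by ``a bounded-difference (McDiarmid) inequality'' applied to $y\mapsto\gKSS(q,y)^2$. McDiarmid gives concentration of a function of \emph{independent} coordinates around its own mean; here $X$ is an ERGM sample whose edge indicators are not independent, and in any case what you need is not concentration of $\gKSS(q,Y)$ around $\E\gKSS(q,Y)$ but closeness of $\gKSS(q,X)$ to $\gKSS(q,Y)$. The argument in \citet{xu2021stein} instead uses the Lipschitz property $\|\Delta(\gKSS(q,\cdot))^2\|$ together with the ERGM--ER coupling (Theorem~1.7 of \citet{reinert2019approximating}) to bound $\E|\gKSS(q,X)^2-\gKSS(q,Y)^2|$ directly, and only then applies Markov/Chebyshev to pass to a tail bound, producing the $\epsilon^{-2}$. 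Replacing your McDiarmid step by this Lipschitz-plus-coupling argument fixes the issue; the rest of your sketch is sound.
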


\begin{proof}
 {The assertion follows immediately from  the proof of 
Theorem 1 in \citet{xu2021stein}.}
\end{proof}

\iffalse 
The term
$\AgraSSt (q(x^{(s)}|t({x_{-s}})) ) - \AgraSSt (ER)  $ vanishes because of the construction of the operators; once
$q(x^{s,1}| t(x_s))$ is replaced by $\hat{q} (t(x_s)) = g (t(x_s))$ the two operators are identical. 

Moreover we can bound 
$\AgraSSt (ER) - \gKSS (ER)$ via Lemma \ref{consistenterop}
and we can bound 
$ \gKSS (ER) -  \gKSS ( q)$ using Theorem 1 in \citep{xu2021stein}. 
\fi

The approximation with a Bernoulli random graph is useful as for a Bernoulli random graphs a normal approximation for its $\gKSS$ is available in \citet{xu2021stein}, under suitable assumptions. 

\begin{Assumption}\label{assum:er_approx_kernel}
Let $\H$ be the RKHS 
%for network $X$ 
associated
with the  kernel $K: { \{ 0, 1 \}^N} \times  { \{ 0, 1 \}^N}\to \R$ and  for $s\in [N]$ let  $\H_s$ be the RKHS %for edges $X_s$ let 
associated with the kernel $l_s: \{ 0, 1 \} \times  \{ 0, 1 \} \to \R$. Then \vspace{-2mm} 
\begin{enumerate}[i)]
    \item \label{ass21} $\H$ is a tensor product RKHS, $\H = \otimes_{s \in [n]}\H_s $; \vspace{-2mm} 
    \item \label{ass22} $k$ is a product kernel, $k(x, y) = \otimes_{s \in [N]} l_s(x_s, y_s)$; \vspace{-2mm} 
    \item \label{ass23} $ \langle l_s (x_s, \cdot), l_s (x_s, \cdot) \rangle_{\H_s}  =1$; \vspace{-2mm} 
    \item \label{ass24} $l_s(1, \cdot) - l_s(0, \cdot) \ne 0$ for all $s \in [N]$. \vspace{-2mm} 
\end{enumerate}
\end{Assumption}
These assumptions are satisfied for example for the suitably standardised Gaussian kernel $K(x,y) = \exp\{ - \frac{1}{\sigma^2} \sum_{s \in [N]} (x_s - y_s)^2\} $.

Letting 
$|| \cdot ||_1$ denote  $L_1$-distance, and $\mathcal L$ denote the law of a random variable, \citet{xu2021stein} show  the following normal approximation.}

% \gr
{ \begin{theorem}[Theorem 2 in \citet{xu2021stein}] \label{th:normalapprox}
{Let $Y$ have the distribution $\tilde{q}$ of a Bernoulli random graph $ER(a^*)$ as in \cref{ergmconvergence}.}
 Assume that the conditions i) - iv) in Assumption \ref{assum:er_approx_kernel} hold. 
%  Let $Z$ denote a standard normal variable. 
 Let $\mu = \E [  \operatorname{gKSS}^2 ( {\tilde q}, Y)] $ and $\sigma^2 = Var [ \operatorname{gKSS}^2 ( {\tilde q}, Y)]. $ Set 
 $W = \frac{1}{\sigma} (  \operatorname{gKSS}^2 ( {\tilde q}, Y)]  - \mu)$ and let  $Z$ denote a standard normal variable, Then there is an explicit constant $C = C(a^*, l_s, s \in [N]) $ such that 
\begin{equation*} 
    || {\mathcal L}(W) - {\mathcal L}(Z)||_1  \le \frac{C}{\sqrt{N}}. 
    \end{equation*} 
 \end{theorem}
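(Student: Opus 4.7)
The plan is to exploit the quadratic representation of $\operatorname{gKSS}^2$ in \eqref{eq:gkss_quadratic}, the independence of edge indicators under $\tilde q=\mathrm{ER}(a^*)$, and the tensor structure imposed by Assumption~\ref{assum:er_approx_kernel} to cast $\operatorname{gKSS}^2(\tilde q,Y)$ as a functional of $N$ independent Bernoulli variables with a controlled dependency structure, and then to apply a Stein's method normal approximation of order $1/\sqrt N$.

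First I would unfold $\operatorname{gKSS}^2(\tilde q,Y)$ using \eqref{eq:gkss_quadratic}, writing it as the double sum
\[
\operatorname{gKSS}^2(\tilde q,Y)=\frac{1}{N^2}\sum_{s,s'\in [N]} \bigl\langle \A^{(s)}_{\tilde q} K(Y,\cdot),\A^{(s')}_{\tilde q} K(\cdot,Y)\bigr\rangle_{\mathcal H}.
\]
Conditions i)--iii) of Assumption~\ref{assum:er_approx_kernel} make $K$ and $\mathcal H$ tensor products across edge coordinates, so every inner product $\langle \A^{(s)}_{\tilde q}K(Y,\cdot),\A^{(s')}_{\tilde q}K(\cdot,Y)\rangle_{\mathcal H}$ factorises into a product over $e\in[N]$ of per-edge inner products $\langle\cdot,\cdot\rangle_{\mathcal H_e}$, each of which depends only on $Y_e$ (and on $s,s'$ through the two flipped coordinates). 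Under $\mathrm{ER}(a^*)$ the coordinates $Y_e$ are i.i.d.\ Bernoulli, so each summand becomes an explicit, bounded function of at most two coordinates of $Y$, namely $Y_s$ and $Y_{s'}$, after integrating the remaining factors against their means; condition iv) guarantees these per-edge factors are nontrivial, so the variance $\sigma^2$ is genuinely of order $1/N$ (from the diagonal $s=s'$ terms), which produces the $\sqrt{N}$ normalisation after centring.

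Next I would reorganise the centred statistic $\operatorname{gKSS}^2(\tilde q,Y)-\mu$ into a sum of local terms $\xi_s$, $s\in[N]$, each depending only on a single edge $Y_s$, by collapsing the off-diagonal $s\neq s'$ contributions using independence of $Y_s$ and $Y_{s'}$ (they factor through the unconditional means, which cancel in the centring). What remains is, up to a negligible remainder, an average of $N$ independent bounded random variables of order $1/N$, together with $O(1/N)$ cross terms whose dependency graph has bounded degree. For such a functional the normal approximation in $L_1$-Wasserstein (equivalently the $\|\cdot\|_1$ metric on distributions of $W$) via Stein's method with a local dependence coupling (e.g.\ the classical bound of Chen--Shao, or an exchangeable pair obtained by resampling a uniformly chosen $Y_s$) yields the rate $C/\sqrt N$, with $C$ explicit in $a^*$ and the per-edge kernels $l_s$ via the uniform bound on $|l_s(1,\cdot)-l_s(0,\cdot)|$ and on the ratio $\sigma^2 N$.

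The main obstacle, as in \citet{xu2021stein}, will be controlling the off-diagonal part of the double sum: one must show that after centring, the $s\neq s'$ contributions are small enough not to destroy either the leading-order variance identification or the $1/\sqrt N$ Stein bound. This reduces to a careful expansion of $\A^{(s)}_{\tilde q}K$ that isolates the factor $l_s(1,\cdot)-l_s(0,\cdot)$ and uses independence of $Y_s$ from $(Y_{e})_{e\neq s}$ to zero out mixed cross-expectations; once this identification is in place, conditions i)--iv) of Assumption~\ref{assum:er_approx_kernel} deliver uniform bounds on all remaining factors, and the Stein coupling yields the claimed constant.
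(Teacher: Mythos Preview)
The paper does not prove this theorem; it is quoted verbatim as Theorem~2 of \citet{xu2021stein} and used as a black box to feed into Corollary~\ref{cor:Gaussian approximation}. There is therefore no proof in the present paper to compare your plan against.

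That said, your plan is the right architecture and matches what one expects the argument in \citet{xu2021stein} to look like. The decisive simplification you correctly identify is that, under $\tilde q=\mathrm{ER}(a^*)$ and the tensor kernel Assumptions~\ref{assum:er_approx_kernel}\,i)--iii), the operator localises as
\[
\A^{(s)}_{\tilde q}K(Y,\cdot)=\bigl(a^*\,l_s(1,\cdot)+(1-a^*)\,l_s(0,\cdot)-l_s(Y_s,\cdot)\bigr)\prod_{e\neq s}l_e(Y_e,\cdot),
\]
and condition iii) collapses all factors with $e\notin\{s,s'\}$ in the inner product to $1$. Hence $h_Y(s,s')$ really is a function of $(Y_s,Y_{s'})$ only, and for $s\neq s'$ it factorises as $g_s(Y_s)\,g_{s'}(Y_{s'})$. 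One caution: the individual $g_s(Y_s)$ are \emph{not} mean zero in general (a short computation gives $\E g_s(Y_s)=\|a^*l_s(1,\cdot)+(1-a^*)l_s(0,\cdot)\|_{\H_s}^2-1$), so your sentence ``they factor through the unconditional means, which cancel in the centring'' is too quick. The clean route is to write $g_s(Y_s)=\E g_s+\tilde g_s(Y_s)$ with $\E\tilde g_s=0$, so that the off-diagonal block contributes a deterministic piece (absorbed into $\mu$), a linear piece $2N^{-2}\sum_{s}\tilde g_s(Y_s)\sum_{s'\neq s}\E g_{s'}$ that is a sum of independent mean-zero terms, and a genuine $U$-statistic piece $N^{-2}\sum_{s\neq s'}\tilde g_s(Y_s)\tilde g_{s'}(Y_{s'})$. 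The last piece is a degenerate rank-one $U$-statistic of order $O_P(1/N)$ after standardising by $\sigma\asymp N^{-1/2}$, and is exactly the ``negligible remainder'' you allude to; it must be bounded separately (e.g.\ via its second moment) rather than folded into the Stein coupling. With that refinement, the standardised statistic is a sum of $N$ i.i.d.\ bounded summands plus an $O_P(N^{-1/2})$ remainder, and either a local-dependence Stein bound or the exchangeable pair obtained by resampling one $Y_s$ gives the $C/\sqrt{N}$ rate in Wasserstein-1, with $C$ explicit in $a^*$ and the $l_s$ as you say.
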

{Thus a normal approximation for the approximating \gKSS{} can then be used to assess the theoretical behaviour of \AgraSSt{} {as follows.} 

\begin{corollary}\label{cor:Gaussian approximation}
Let the assumptions  \cref{ergmconvergence} and Theorem \ref{th:normalapprox} be satisfied. With the notation of \cref{th:normalapprox}, 
assume that the RKHS kernel $K$ is such that the right hand side of \cref{ineq:bound} is  
$ o(n).$ Then 
$\frac{1}{\sigma} ({\AgraSSt} ({\widetilde q}(x^{(s)}|{\Delta_s t(x)}) ) - \mu)$ is approximately standard normally distributed as $N \rightarrow \infty.$
\end{corollary}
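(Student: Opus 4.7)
The plan is to chain together the two preceding results so that the Gaussian limit for $\gKSS^2$ under the Bernoulli approximation is transported to $\AgraSSt$ evaluated at $X \sim q$. I would proceed in three steps. First, I observe that because $\widetilde q$ is the exact edge probability $a^\ast$ of the Bernoulli approximation, the conditional probabilities $\widetilde q(x^{(s,1)} \mid t(x_{-s}))$ do not depend on $t(x_{-s})$ and equal $a^\ast$. Substituting these exact values for the estimator $\widehat q$ in \cref{eq:approx_cond_stein} recovers the Glauber Stein operator of the ER model with parameter $a^\ast$. Hence, as random variables,
\begin{equation*}
\AgraSSt\bigl(\widetilde q(x^{(s)}\mid t(x_{-s}));\, X\bigr) \;=\; \gKSS(\widetilde q;\, X),
\end{equation*}
so the estimation error governed by \cref{consistentopexpand} vanishes in this idealised setting and the problem reduces to a statement about $\gKSS(\widetilde q; X)$.

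Second, I would invoke \cref{ergmconvergence} to replace $\gKSS(\widetilde q; X)$ by $\gKSS(\widetilde q; Y)$ with $Y \sim \widetilde q$. The probabilistic bound \cref{ineq:bound} scales like $\binom{n}{2}C/\sqrt{n}$ multiplied by the bracketed test-function factor; under the hypothesis that this entire right-hand side is $o(n)$, Markov's inequality gives $\sigma^{-1}|\gKSS(q, X) - \gKSS(\widetilde q, Y)| \to 0$ in probability, provided $\sigma$ does not decay faster than the ER-approximation error. A Slutsky-type argument then transfers the limiting distribution of $\sigma^{-1}(\gKSS(\widetilde q; Y)-\mu)$ to $\sigma^{-1}(\gKSS(\widetilde q; X)-\mu)$, and by Step 1 this is the quantity we care about.

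Third, I would apply \cref{th:normalapprox} directly to $\gKSS^2(\widetilde q; Y)$: under Assumption \ref{assum:er_approx_kernel}, the $L_1$-distance between its standardisation and a standard normal is $O(1/\sqrt{N})$. Since the corollary is about the unsquared $\AgraSSt$ rather than the squared object of \cref{th:normalapprox}, I would combine this CLT with the delta method applied to $u \mapsto \sqrt{u}$ around $\mu>0$, noting that $\gKSS \ge 0$ so that the square-root is smooth on a neighbourhood of $\mu$; this transforms the normal limit for $\gKSS^2$ into a normal limit for $\gKSS$ with rescaled variance, which in turn yields the stated Gaussian approximation for $\sigma^{-1}(\AgraSSt(\widetilde q) - \mu)$. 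The main obstacle is the compatibility of rates: the Bernoulli-approximation error from Step 2 must genuinely be $o(\sigma)$, and the delta-method step requires lower bounds on $\mu$ and control of $\sigma/\mu$. Proving these holds would require exploiting the product-kernel structure in Assumption \ref{assum:er_approx_kernel} to obtain explicit order-of-magnitude estimates for $\mu$ and $\sigma$ under the ER model, which is the delicate technical ingredient underlying the otherwise straightforward chaining.
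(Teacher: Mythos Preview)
Your overall strategy matches the paper's: chain $\AgraSSt(\widetilde q; X)$ to $\gKSS(\widetilde q; Y)$ and then invoke the CLT in \cref{th:normalapprox}. Your Step~1 identification $\AgraSSt(\widetilde q;\cdot)=\gKSS(\widetilde q;\cdot)$ is cleaner than what the paper writes (the paper routes through $\gKSS(q)$ and cites \cref{th:consist} for the first leg), and your Step~3 on the delta method is more careful than the paper's proof, which does not address the squared-versus-unsquared discrepancy between \cref{th:normalapprox} and the corollary at all.

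There is, however, a gap in Step~2. \cref{ergmconvergence} bounds $|\gKSS(q; X) - \gKSS(\widetilde q; Y)|$, where both the Stein operator \emph{and} the underlying sample change simultaneously. After Step~1 you hold $\gKSS(\widetilde q; X)$, so to reach $\gKSS(\widetilde q; Y)$ you still need the missing link $|\gKSS(\widetilde q; X) - \gKSS(q; X)| \to 0$; your Slutsky conclusion about $\gKSS(\widetilde q; X)$ does not follow from the bound on $|\gKSS(q, X) - \gKSS(\widetilde q, Y)|$ that you actually cite. This missing link is precisely what the paper handles as its first summand in the triangle-inequality split through $\gKSS(q)$; it can be supplied from the first display in \cref{ergmconvergence}, which controls the operator difference $\A_q^{(s)} - \A_{\widetilde q}^{(s)}$ on a common argument. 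Once you insert that extra piece, your chain closes and coincides with the paper's decomposition.
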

\begin{proof}
For all $\epsilon > 0$, 
%{\color{red}{there is a gap here, it is not clear whether $a^* = q(x^{(s)}|t({x_{-s}}))$. Perhaps we need the law of large numbers for exchangeable variables after all.}}
%in short-hand, 
\begin{eqnarray*}
\lefteqn{
\mathbb{P} \left[ \left| \AgraSSt ({\widetilde q}(x^{(s)}|{\Delta_s t(x)}) ) - \gKSS ( {a^*} ) \right| > \epsilon\right] }
\\& \le &
\mathbb{P}  \left[ \left| \AgraSSt ({\widetilde q}(x^{(s)}|{\Delta_s t(x)}) )  - \gKSS (q) \right| >  \frac12 \epsilon \right] + \mathbb{P}  \left[ \left|\gKSS(q)   -  \gKSS ( {{a^*}}) \right| > \frac12 \epsilon \right] . 
\end{eqnarray*}
The first summand tends to 0 as $N \rightarrow \infty$ due to \cref{th:consist} and the second summand tends to 0 due to  \cref{ergmconvergence}. 
That $\gKSS ( {{a^*}})$ is approximately normally distributed with the appropriate scaling follows from \cref{th:normalapprox}. 
\end{proof}

The theoretical behaviour of the  subsampling version {$\widehat{\AgraSSt} ({\widetilde q}(x^{(s)}|{\Delta_s t(x)}) )$} is addressed in Proposition \ref{Bconsistency}.} {A detailed examination of the choice of kernel $K$ such that the assumptions of \cref{cor:Gaussian approximation}  are satisfied is left for future work.}

\section{{Additional background}}
%Known Methods and Procedures}
In this section, we present {additional background to} 
%relevant known methods and procedures that 
complement the discussions in the main text.

\subsection{Parameter estimation for random graphs}\label{app:param-estimation}
Estimating parameters for parametric models is possible \emph{only} when the parametric family is \textit{explicitly specified}. For instance, in the synthetic example for  E2ST model shown in Section \ref{sec:synthetic_exp}, 
$\hat \beta_l$ can be estimated for $\beta_l$ since the edge, 2Star and triangle statistics are specified. There are various approaches for parameter estimation.

\paragraph{Maximum likelihood} {Maximum likelihood is}  
%has been
a popular approach for parameter estimation in random graph models. {A complication arises because its probability mass function from}
%ERGMs describe the probability laws in a unnormalised form. Recall 
Eq.\eqref{eq:ergm},
\begin{equation*}
    % \P
    q(X = x) = \frac{1}{\kappa_n(\beta)}\exp{\left(\sum_{l=1}^{k} \beta_l t_l(x) \right)}.
\end{equation*}
% For ERGMs, however,
{involves a} normalisation constant $\kappa_n(\beta)=\sum_x \exp \{\sum_{l=1}^k \beta_l t_l(x)\}$ {which is g}enerally intractable  and needs to be estimated for performing MLE. {For this task,} %Markov chain Monte-Carlo methods are {often} %used.  to estimate the normalisation constant thus exact likelihood. 
%{a} two-stage iterative algorithm with 
Markov chain Monte-Carlo maximum likelihood estimat{ion} (MCMCMLE) for ERGM has been developed by \citet{snijders2002markov}.
When the network size is large, 
%a good
{accurate} estimation for the normalised $\kappa_n(\beta)$ requires large amount of Monte-Carlo samples {and is hence computationally expensive.}
%that causes exhaustive computation cost. 

\paragraph{Maximum pseudo-likelihood estimator} 
To alleviate the problem associated with {the normalising constant}, 
{Maximum Pseudo-likelihood Estimation (MPLE)} \citep{besag1975statistical} has been developed for ERGMs, {see \citet{strauss1990pseudolikelihood} and also  \citet{schmid2017exponential}}. MPLE {factorises} 
%considers the factorisation of
the conditional edge probability to approximate the exact likelihood, 
\begin{equation}\label{eq:mple}
q(x) = \Pi_{s\in[N]} q(x^{s}|x_{-s}).
\end{equation}
For {ERGMs}
%exponential family based model,
the conditional distribution $q(x^{s}|x_{-s})$ does not {involve the normalising constant and can hence be computed more efficiently than the MLE.} 
%require computing the normaliser, which is much more computationally efficient. 
However, {in general the MPLS is not consistent for ERGMs} 
%the disadvantage of MPLE associates with its inconsistency, since the MPLE is maximum the exact probability
as the edges are generally non-independent. The consistency of MPLE for Boltzmann machines is shown in \citet{hyvarinen2006consistency}.
{A thorough comparison of MCMCMLE and MPLE estimation in ERGMs can be found in \citet{van2009framework}}. 
% , the estimation may not always be consistent.

\textbf{Contrastive divergence}
Estimation based on {contrastive divergence (CD)} \citep{hinton2002training} has also been developed for ERGM estimation \citep{hunter2006inference}. Contrastive divergence runs {a small number of} Markov chains simultaneously for $T$ steps and estimates the gradient %accordingly
{based  on the differences  between initial values and values after $T$ steps} in order to find a maximum}. {Convergence results}
%The convergence property 
for exponential family models are shown in \citet{jiang2018convergence}. CD {can provide}  a useful balance between computationally expensive {but} accurate MCMCMLE and fast {but} inconsistent MPLE.

\subsection{{Kernel Stein discrepancies and kernel-based nonparametric hypothesis testing}} %procedures}}
\label{sec:ksd}
%\label{app:kernel-based-testing}
% \subsection{Hypothesis Testing Procedures}

The task of hypothesis testing involves the comparison of distributions $p$ and $q$ that are significantly different with respect to the size of the test, denoted by $\alpha$. 
%The null hypothesis $\nullH_0: p=q$ is tested against the \gr{general} alternative $\nullH_1: p\neq q$. 
%The nonparametric hypothesis testing is referred to the scenario where the
%assumptions made on the distributions $p$ and $q$ are minimal. In particular,
{In nonparametric tests,} the
distributions 
%in nonparametric testings
are not assumed to be in any parametric families {and test statistics are often based on ranking of observations}.
In contrast, parametric tests, such as a Student t-test or a normality test, assume a pre-defined parametric family to be tested against and usually employ a 
%deal with 
particular summary statistics such as means or standard deviations.
%which is more restrictive in terms of comparing the full distributions.
Recent advances in nonparametric test procedures introduce RKHS functions which can be rich enough to distinguish distributions whenever they differ. {Below we detail two instances which are relevant for the main paper.}

%\subsubsection{Kernel Stein Discrepancies}\label{sec:ksd}

We start with a terse review of {\bf  kernel Stein discrepancy (KSD) for continuous distributions}
% ; this notion is 
developed to compare and test distributions \citep{gorham2015measuring,ley2017stein}.
% chwialkowski2016kernel,liu2016kernelized}. 
%The main idea is to construct a divergence measure between two distributions via Stein's method using test functions in a  reproducing kernel Hilbert space (RKHS).
Let $q$ be a smooth probability density on 
% $x = (x^1,\dots,x^d) \in$ 
$\mathbb{R}^d$ that vanishes at the boundary.  The operator $\T_q: (\mathbb{R}^d \to \mathbb{R}^d) \to (\mathbb{R}^d \to \mathbb{R})$ is called a {\it Stein operator} if the following {\it Stein identity} holds: $\E_q[{\T}_q f] = 0$, where $f
%=(f_1,\dots,f_d)
:\mathbb{R}^d \to \mathbb{R}^d$ is any bounded smooth function. 
\iffalse 
The Stein operator $\mathcal{A}_q$ 
for continuous density \citep{chwialkowski2016kernel,liu2016kernelized} is defined as
\begin{align}
%\mathcal{A}_q f(x) &= f(x)^{\top} \nabla_x \log q(x)+\nabla_x \cdot f(x) \nonumber \\
\mathcal{A}_q f(x)&=\sum_{i=1}^d \left( f_i(x) \frac{\partial}{\partial x^i} \log q(x) + \frac{\partial}{\partial x^i} f_i(x) \right)
.
\label{eq:steinRd}
\end{align}
{This Stein operator is also called Lagenvin-diffusion Stein operator \citep{barp2019minimum}.}
% Guidance for defining more general diffusion-based Stein operators are discussed in \citep{le2020diffusion,barp2019minimum}. 
Since $q$ is assumed to vanish at the boundary and $f$ is bounded, the Stein identity holds due to integration by parts.
As the Stein operator $\mathcal{A}_q$ only requires the derivatives of $\log q$ {and thus}  does not involve computing the normalisation constant of $q$, {it is} useful for dealing with unnormalised models \citep{hyvarinen2005estimation}. 
%{Other choices of Stein operators are possible, see \citep{mijoule2018stein}.} 
\fi
% \gr
A  {suitable 
%class of
function class  ${\mathcal F}$ is such that if} 
%Stein operator should characterise the distribution $q$ in the sense that if for a distribution $p$ it holds that
%if for a distribution $p$ it holds that 
 $\E_p[{\T}_q f] = 0$ for all functions $f { \in {\mathcal F}}$, then $p=q$ follows. 
%There are different choices for the function class to which $f$ belongs. 
{It is convenient to take 
${\mathcal F} = B_1( {\mathcal H })$,} the 
%Restricting the test functions $f$ to be within 
unit ball of {a} {large enough} RKHS { with bounded kernel $K$}.
{T}he kernel Stein discrepancy (KSD) between two densities $p$ and $q$ {based on $\T_q$} is defined as
\begin{equation}
\operatorname{KSD}(p\|q, {{\mathcal H }}) =\sup_{f \in B_1(\mathcal H)} \mathbb{E}_{p}[{\T}_q f]. %-\mathbb{E}_{p}[\mathcal{A}_p f]|.
\label{eq:ksd}
\end{equation}
%where $\H$ denotes the RKHS $f$ belongs to, and $B_1(\mathcal H)$ denotes the unit ball of $\mathcal{H}$.
{Under mild regularity conditions, {for a particular choice of $\T$ called  Langevin operator,}}  $\mathrm{KSD}(p\|q, {{\mathcal H }}) \geq 0$ and $\mathrm{KSD}(p\|q, {{\mathcal H }}) = 0$ if and only if $p=q$
%under mild regularity conditions
\citep{chwialkowski2016kernel}, {in which case} 
%Thus,
KSD
%is
is a 
proper
discrepancy measure between probability densities. 
% \gr{In this paper we shall introduce a related KSD which is baed on a Stein operator for an ERGM.} 

{The KSD in  Eq.\eqref{eq:ksd} can be used to test the model goodness-of-fit as follows.} 
{One can show that}
$\operatorname{KSD}^2(p\|q, {{\mathcal H }}) = {\E}_{x,\tilde{x} \sim p} [h_q(x,\tilde{x})]$,
where {$x$ and $\tilde{x}$ are independent random variables with density $p$} and  $h_q(x,\tilde{x})$ {is given in explicit form {which does not involve $p$},
% {\color{red}{check again}}\wk{checked} 
\begin{eqnarray} 
h_q(x,\tilde{x})&=& 
 \left\langle \T_{ q} K(x,\cdot), \T_{q} K(\cdot,\tilde{x})\right\rangle_{\H}. \label{hform} 
\end{eqnarray}}  
%and does not involve $p$.
% \gr
%Suppose we have a 

Given a set of samples $\{ x_1,\dots,x_n \}$ from an unknown density $p$ on $\mathbb{R}^d$, to test whether
%and the goodness-of-fit test aims to check whether
$p=q$, the statistic 
 $\mathrm{KSD}^2(p\|q, {{\mathcal H }})$ can be empirically estimated by {independent} samples from $p$ using a $U$- or $V$-statistic.
%This is similar in spirit to Stein's Unbiased Risk Estimate.
% These estimates can be used to test the hypothesis $H_0: p=q$.
The critical value is determined by  bootstrap based on
{weighted chi-square approximations for} 
%the theory of
$U$- or $V$-statistics. 
%{This procedure for goodness-of-fit can be applied to an} 
%As such, the goodness-of-fit test procedure for
%(\emph{unnormalised}) smooth density on $\mathbb{R}^d$.
%is obtained.
% 
{For goodness-of-fit tests of discrete distributions} {when i.i.d.\,samples are available}, 
%In addition, 
a kernel discrete Stein discrepancy (KDSD) has been proposed {in} \citet{yang2018goodness}.

\textbf{Goodness-of-fit Testing} aims to check the null hypothesis  $\nullH_0: p=q$ against the {general} alternative $\nullH_1: p\neq q$ when the target distribution $q$ is explicitly specified.
Given sample(s) from {the} \emph{unknown} distribution $p$ and {an} explicit density $q$, {$\nullH_0$ is assessed using} a chosen test statistic, usually a discrepancy measure, $D(q\|p)$, between $p$ and $q$, which can be estimated empirically.
%, are considered.
%Then, the null distribution of the test statistics need to be obtained. 
{Kernel-based hypothesis tests on goodness-of-fit {for continuous distributions $q$} use the kernel Stein discrepancy (KSD) in Section \ref{sec:ksd} as the test statistic. 
Given samples $x_1,\dots,x_n$ from  the \emph{unknown} density $p$,  $\operatorname{KSD}^2(p\|q, \H)$ in Eq.\eqref{eq:ksd} is estimated via the $V$-statistic
$$
\operatorname{\widehat{KSD}}^2(p\|q, {{\mathcal H }}) = \frac{1}{n^2}\sum_{i,j} h_q(x_i,{x_j});
$$
recall that $h_q(x_i,{x_j}) = \left\langle \T_{ q} K(x_i,\cdot), \T_{q}K({x_j},\cdot)\right\rangle_{\H}$ from Eq.\eqref{hform}.
The null distribution {of this test statistic} involves integral operators that are not available in close form; {often it is simulated using a}
%so we simulated the null via the
wild-bootstrap procedure \citep{chwialkowski2014wild}.} With the (simulated) null distribution, the critical value {of the test} can be {estimated}
%determined 
to decide whether the null hypothesis is rejected at test level $\alpha$.
% In this way, a general method of nonparametric goodness-of-fit test on $\mathbb{R}^d$ is obtained, which does not require computation of the normalization constant
In this way, a general method 
%of 
{for nonparametric testing of}  %nonparametric 
goodness-of-fit 
%test 
on $\mathbb{R}^d$ is obtained, which is applicable even for %unnormalised 
models {with an intractable normalising constant}.

\textbf{Two-sample Testing} aims to determine whether two sets of samples are drawn from the same distribution, i.e. instead of 
%accessing 
$q$ {being available} in density form as in the goodness-of-fit setting, $q$ is only accessible  {through samples}.
%in \sample form. 
Maximum mean embedding (MMD) test are {often} used for {this} two-sample problem \citep{gretton2007kernel}. {These tests are}
%which is 
based on the kernel mean embedding of {a} distribution,
\begin{equation}\label{eq:mean_embedding}
    \mu_p:= \E_{x\sim p}[k(x,\cdot)] = \int_\X k(x,\cdot)dp(x) \in \H,
\end{equation}
whenever $\mu_p$ exist. Similar to KSD, MMD takes the {supremum} over unit ball RKHS functions;
%between 
\begin{equation}
    \label{eq:mean_embedding_distance}
    \operatorname{MMD}(p\|q) = \sup_{f \in B_1(\mathcal H)} \big| \E_p[f] - \E_q[f] \big| = \|\mu_p - \mu_Q\|_{\H}.
\end{equation}
With samples $x_1,\dots x_m \sim p$ and $y_1,\dots,y_n \sim q$, MMD can be estimated empirically via $U$-statistics,
\begin{align}\label{eq:mmd_u}
{\widehat{{\operatorname{MMD}^2_u}}}(p\|q) = \frac{1}{\small m(m-1)}\sum_{i\neq i'}k(x_i,x_{i'}) + \frac{1}{n(n-1)}\sum_{j \neq j'}k(y_{j},y_{j'}) - \frac{2}{mn}\sum_{i j}k(x_i,y_{j}).
\end{align}
In such kernel-based two-sample tests, the null distribution can be obtained via a permutation procedure \citep{gretton2007kernel};
{this procedure}
%that 
can be more robust compared to a wild-bootstrap procedure, especially when the kernels need to be optimised \citep{gretton2012optimal,jitkrittum2016interpretable, liu2020learning, liu2021meta}. 

The two-sample procedure can also be applied to 
{verify}  model assumptions when the model is not directly accessible through its distribution but through generated samples. Such a strategy has been considered as benchmark testing procedure in various studies for goodness-of-fit tests
\citep{jitkrittum2017linear, xu2020stein, xu2021interpretable}. Despite lower test power compared to the corresponding state-of-the-art KSD-based tests and higher computational cost due to additional empirical estimation for the distribution $q$, the MMD-based tests are competitive with a simpler derivation in complicated testing scenarios \citep{xu2020stein,xu2021interpretable}, {and they can} outperform non-kernel based goodness-of-fit tests as discussed in \citet{xu2020stein}.
% \citep{xu2021advances}

% \newpage 
\section{Visual illustrations {of}
%Featuring 
the assessment procedures}\label{app:illustration}

{While  \AgraSSt{}} 
%proposed procedure
is illustrated in Figure.\ref{fig:florentine}, we provide an additional visualisation emphasising different {tasks for which \AgraSSt{} can be applied.}
%focus that our procedure may apply. 
In \cref{sec:assessing} in the main text, we mentioned two features of our proposed AgraSSt procedure: 
\begin{enumerate}
    \item Regardless of the learning or training procedures (masked in grey), \AgraSSt{} can test a given generator $G$ that is only accessible through its generated samples as shown in Figure.\ref{fig:florentine_sim}. In this setting, we do not need to know how the generator $G$ is obtained and the focus is the assessment of a particular generator $G$ itself. 

\begin{figure}[h]
    \centering
    \vspace{-.5cm}
    % \hspace{-2.5cm}
    \includegraphics[width=1.\textwidth]{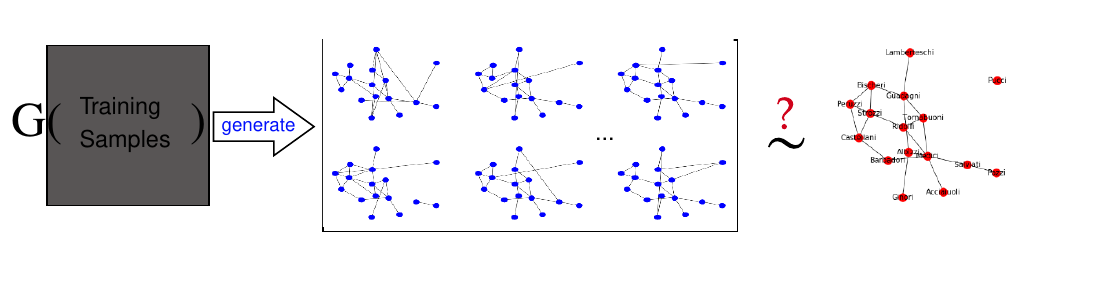}
    \vspace{-1.2cm}
    \caption{Assessing trained graph generators.}
    \label{fig:florentine_sim}
        % \vspace{-.5cm}
\end{figure}
\item Moreover, we are also interested in understanding the quality and capability of training procedures of (deep) generative models. 
As illustrated in Figure.\ref{fig:florentine2}, a generator $G$ is trained from the same distribution as the input graph, e.g. ERGMs. The focus in this setting is to assess the training procedure of the generative model. (The samples generated are masked in grey.)
% For instance, as samples are generated from trained Cross-Entropy Low-rank Logit (CELL) model \cite{pmlr-v119-rendsburg20a}.}, 
For instance, for $G$ trained from the Florentine marriage network \citep{padgett1993robust}, we {may like to}
%are interested to 
understand whether the generative model can be trained to generate graphs that resemble the Florentine marriage network.

\begin{figure}[h]
    \centering
    \vspace{-.5cm}
    % \hspace{-2.5cm}
    \includegraphics[width=1.\textwidth]{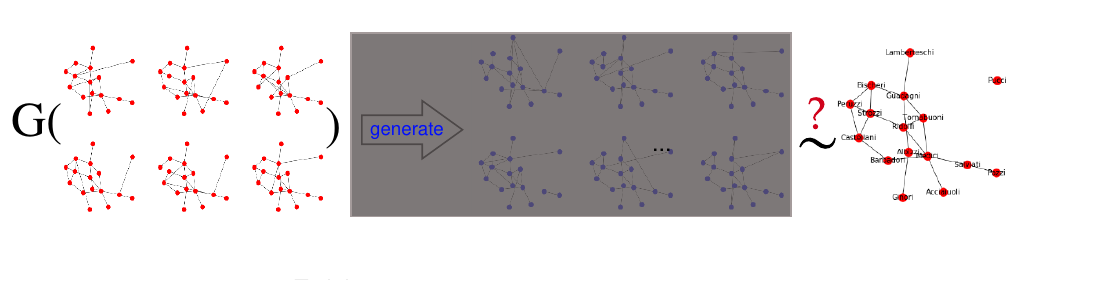}
    \vspace{-1.cm}
    \caption{Criticising training quality for generative models.}
    \label{fig:florentine2}
        \vspace{-.5cm}
\end{figure}

\end{enumerate}

\section{Additional experimental results and discussions}\label{app:exp}

\subsection{Generating reliable samples}

% \medskip 
{To illustrate how \AgraSSt{} can be used to select sample batches, Figure.\ref{fig:karate} shows three sample batches of size 8 for the Karate club network of \cite{zachary1977information}, {including the  corresponding $p$-values for the displayed sample batches.} Here we would expect to detect some community structure in the networks; only the sample batch from CELL captures this feature at least to some extent {and has $p$-value which would not lead to rejection at the 5\% level}. This finding chimes with the results from Table \ref{tab:karate_rej}; AgraSSt rejects both GraphRNN and NetGAN as synthetic data generators, but does not reject CELL.}  
% \todo{This should now relate to the Karate network not the Florentine network} 
% \todo{
% Do we have the $p$-values for these samples?} 

\begin{figure}[t!]
    \centering
    \subfigure[The Karate Club network (vertices in red)]{\includegraphics[width=0.39\textwidth]{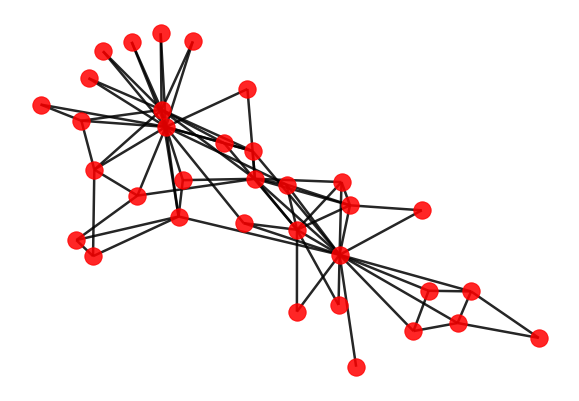}
    \label{fig:karate_club}}
    
    \vspace{.8cm}
    \subfigure[Samples generated from GraphRNN model trained on Karate Club network (vertices in green)]{\includegraphics[width=0.87\textwidth]{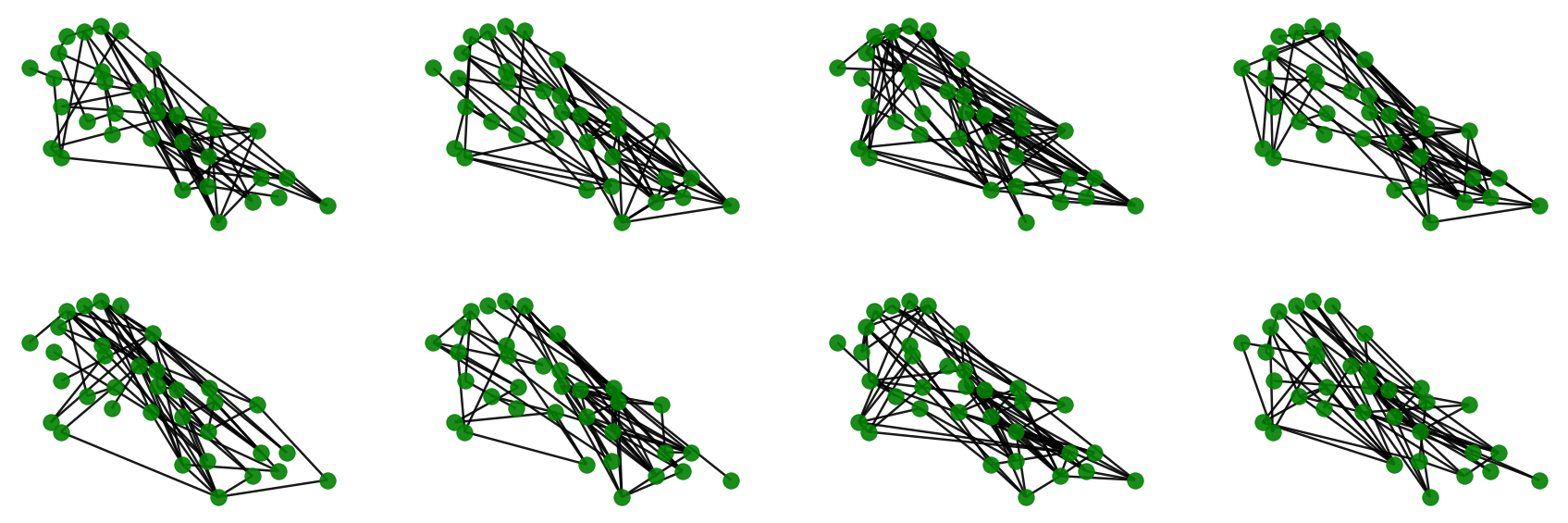}\label{fig:karate_graphrnn}}

    \vspace{0.25cm}
        \subfigure[Samples generated from NetGAN model trained on Karate Club network (vertices in orange)]{\includegraphics[width=0.87\textwidth]{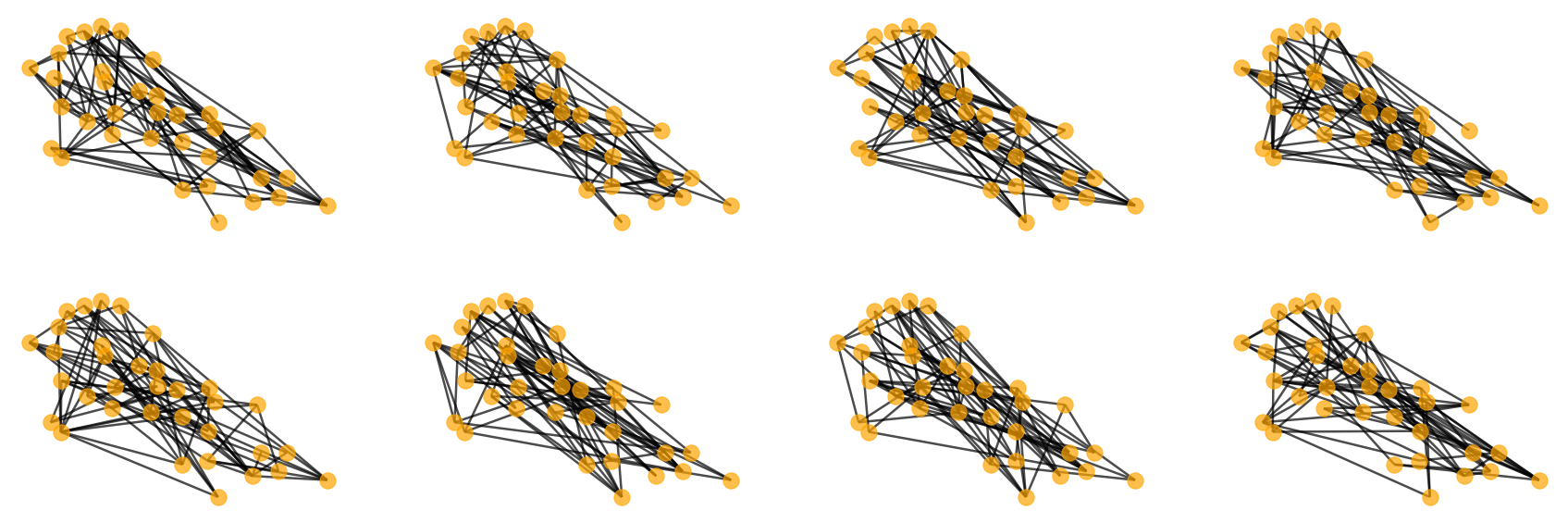}\label{fig:karate_netgan}}
    
    \vspace{0.25cm}
        \subfigure[Samples generated from CELL model trained on Karate Club network (vertices in blue) 0.26]{\includegraphics[width=0.87\textwidth]{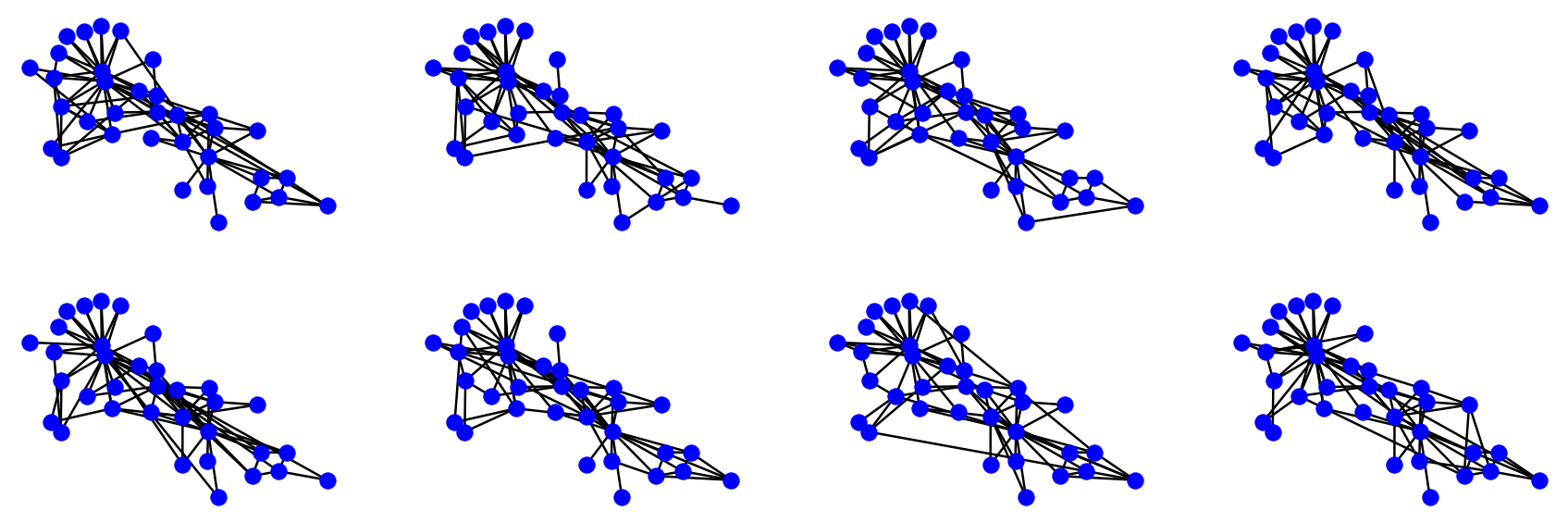}\label{fig:karate_cell}}
    \caption{The  Karate Club network \cite{zachary1977information} and three sample batches of size 8 from different graph generators. The $p$-value for GraphRNN samples in (b) is $0.00$, for  NetGAN samples in (c) the $p$-value is $0.01$; for CELL samples in (d) the $p$-value is $0.26$. %respectively. 
    }
    % \vspace{0.2cm}
    \label{fig:karate}
\end{figure}

% \newpage 
\subsection{Additional case study: Padgett's Florentine network} \label{app:karate}

Padgett's Florentine network \citep{padgett1993robust}. 
{%The Florentine network 
has 16 vertices and 20 edges; in \citet{xu2021stein} the hypothesis that it is an instance of a $G(n,p)$
% \footnote{Bernoulli random graph of size $n$, edge probability $p\in[0,1]$.} 
model could not be rejected.
% \footnote{Additional details are discussed via Figure.\ref{fig:graphical_test_demo} in SI.\ref{app:TV}.}.
} 

\begin{table}[htp!]
    \centering
    \begin{tabular}{c|ccccc}
        \toprule
        {} & AgraSSt &  
        Deg & 
        MDdeg & 
        TV\_deg \\
        \midrule
        %  DeepGraph & \\
         GraphRNN & {\color{red}0.01} & 0.11 & 0.26 & {\color{red}0.03} \\
         NetGAN & 0.16 & 0.18 & 0.09  & 0.06 \\
         CELL & 0.23  & 0.36 & 0.69 & 0.18 \\
        % GRAN & \\
        \bottomrule
    \end{tabular}
\vspace{0.2cm}
    \caption{$p$-values for models trained from Florentine marriage network; 100 samples to simulate the null; rejected null at significant level $\alpha=0.05$ is marked red.}
    \vspace{-0.1cm}
    \label{tab:florentine_rej}
\end{table}

The $p$-values for different tests are  shown in Table.\ref{tab:florentine_rej}. 
The Florentine marriage network has edge density $q=0.167$, while the trained CELL has $\q=0.165$ which  is a close approximation. GraphRNN generates graphs with higher edge density $\q=0.188$. NetGAN generate samples with $\q=0.176$,  
%that is 
not too different from the null, which is not rejected at $\alpha=0.05$. This is different from what we see in the ERGM case above. {This discrepancy may arise as}  the Florentine network is small with $n=16$ and not highly clustered, with average {local}  clustering coefficient $0.191$.
% \wk{(I take from your lecture notes)}. 
% % GraphRNN 0.188
% % Florentine 0.167
% % NetGAN 0.176
% % CELL 0.165
% \gr{Some references \wk{(\citet{smith2016empirical} did E2ST instead of ER)} to support that this network can be modelled by ER, and that assumptions for the theoretical guarantees are satisfied; \cite{reinert2019approximating}, \cite{xu2021stein}} 

\paragraph{{Sample batch selection}}
With CELL being {deemed} a good generator for the Florentine {marriage}  network, we generate a sample batch of size $30$ and check the sample quality. Most sample batches produce a $p$-value above $\alpha=0.05$ until the 8th batch, which  has $p$-value $0.03 <\alpha$. {\AgraSSt{} would recommend not taking this batch.} 
% We may pick the batch with highest $p$-value, but batches with $p$-value higher than $0.05$ can be accepted. 
A visual illustration is shown in Figure.\ref{fig:batches}.
%{We recall that following Table \ref{tab:florentine_rej},
%CELL is generally deemed reliable by \AgraSSt{} for generating samples from the Florentine marriage network.} 
%that are less representative.

{To investigate these batches, we note that t}he Florentine marriage network has $3$ triangles, while the batch being rejected has a significantly lower average number of triangles, {namely} $1.2$. Despite a well estimated edge density, this batch produces a low $p$-value. {This batch, identified by \AgraSSt{} as less reliable, may not be very suitable for downstream tasks and it may be better to generate another batch instead.}
%In this case, it can be better to generate another batch for downstream tasks.
% 
{In contrast, t}he batch with $p$-value $0.75$ has $2.28$ triangles on average while the batch with $p$-value $0.37$ has $2.04$ triangles on average;  {these averages are closer to the observed number of triangles}.
% {\color{red}{What drives the low $p$-value, is it the number of triangles? It would be good to add some summary statstics}}\wk{right. I will add more text in. I ll further investigate a bit more.}

\begin{figure}[htp!]
    \centering
    \vspace{.2cm}
    % \hspace{-2.5cm}
    \includegraphics[width=.87\textwidth]{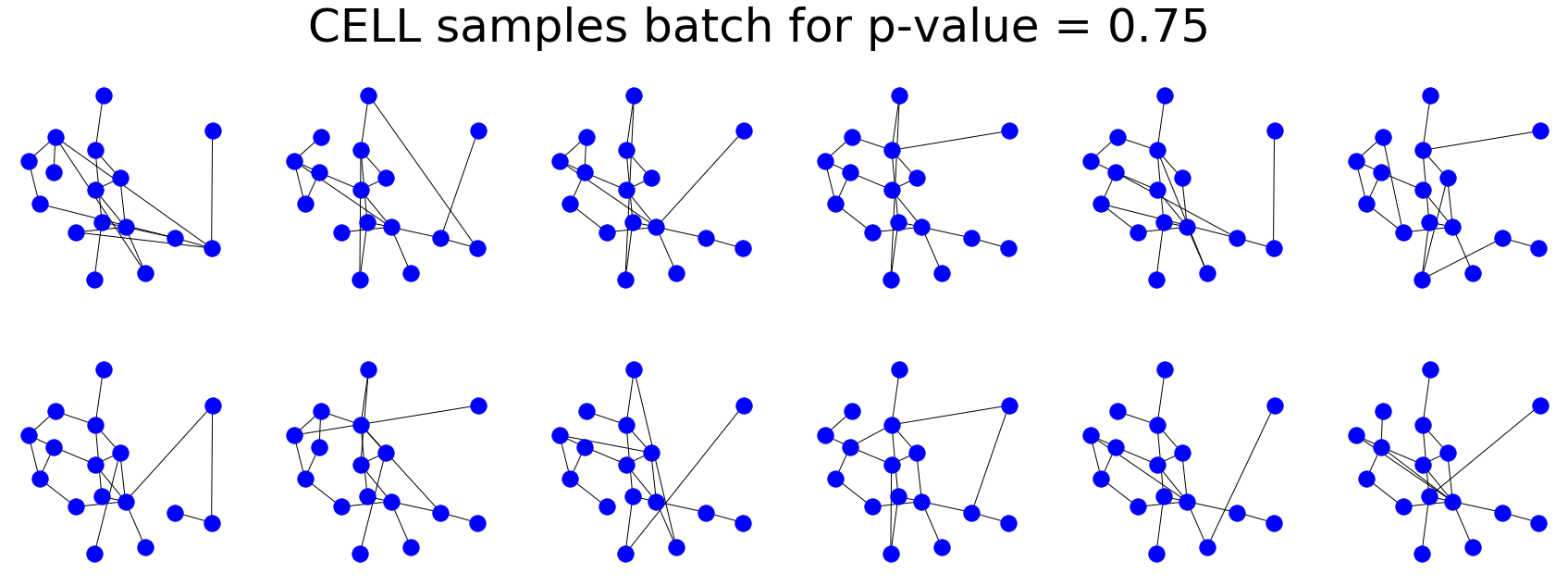} 
    
    \vspace{0.8cm}
    \includegraphics[width=.87\textwidth]{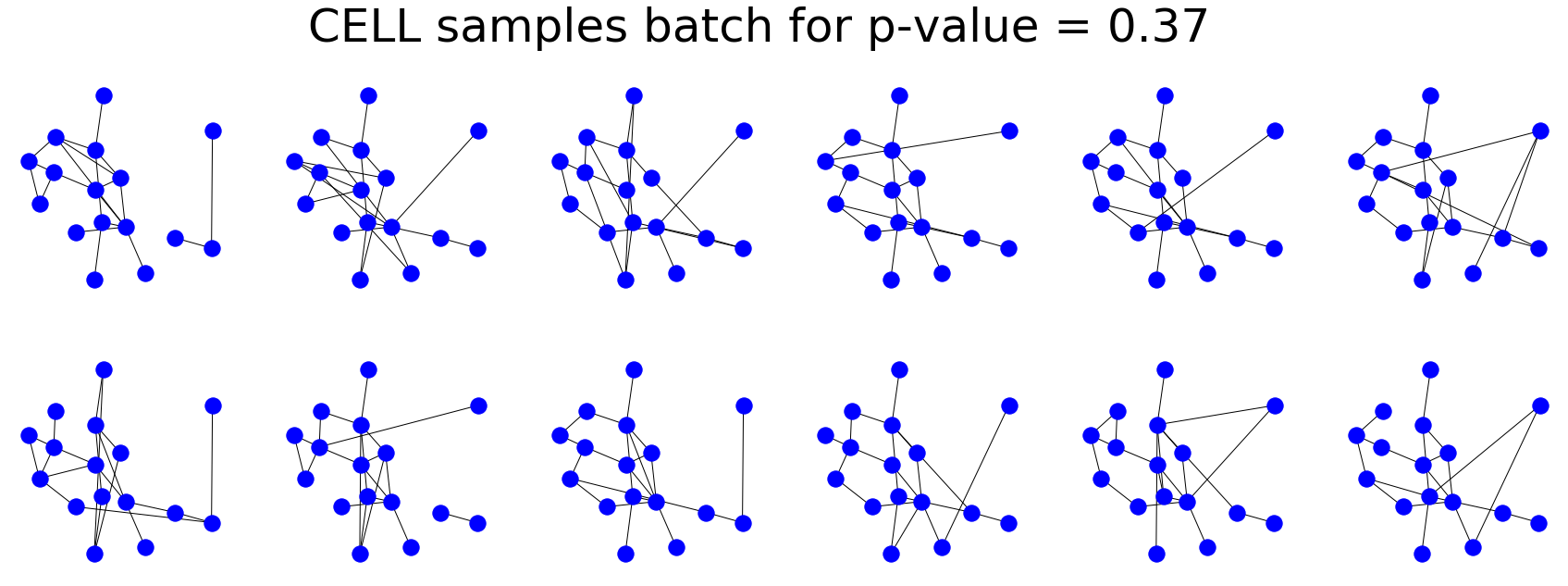} 
    
    \vspace{0.8cm}
    \includegraphics[width=.87\textwidth]{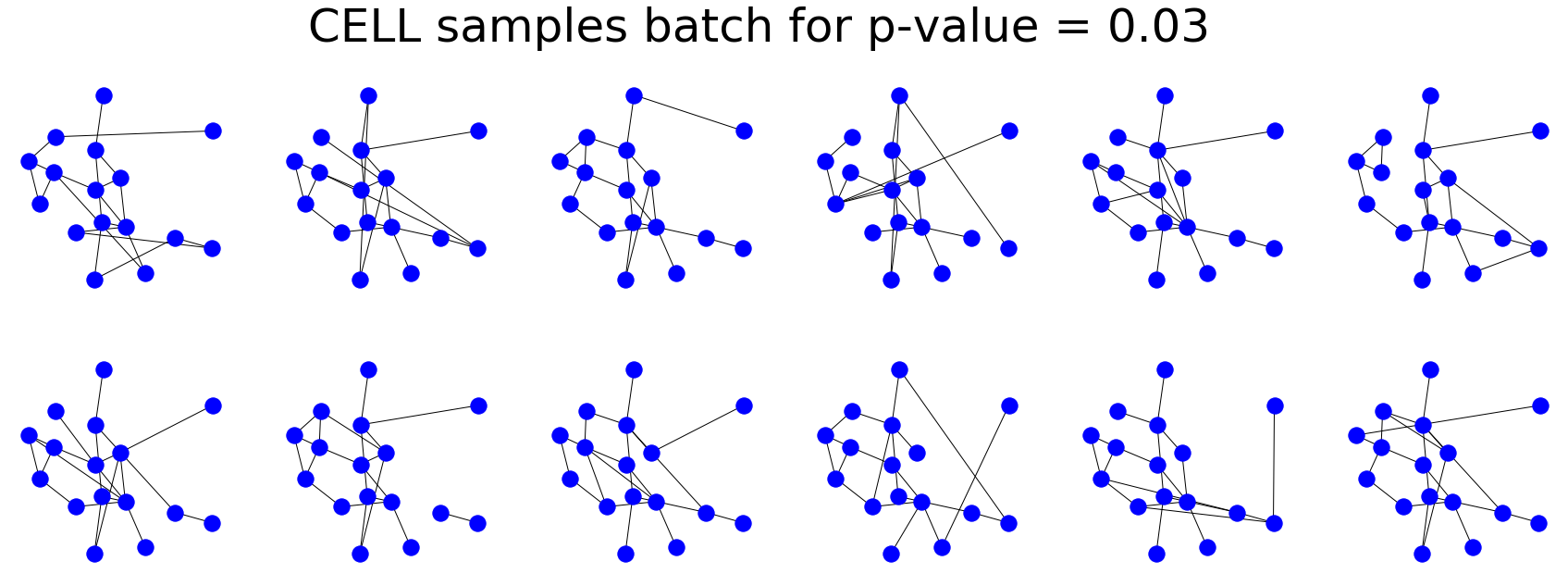} 
\caption{Samples from small size batches generated from CELL 
%model
    trained on the Florentine marriage network, {with \AgraSSt{} $p$-values. The first two batches would be deemed suitable by \AgraSSt{}, while \AgraSSt{} would not accept the third sample at the 5\% significance level.}}
    \label{fig:florentine_pvalue}
        % \vspace{-.5cm}
        \label{fig:batches} 
\end{figure}

\subsection{Experiments with other network statistics} \label{app:additional}

%Our framework 
{{AgraSSt can} 
%is capable 
incorporate any user-defined network statistics. 
%We include additional results in 
{\cref{tab:additional} and \cref{tab:add_exp}  {show additional results in}
%, to replicate
the settings of \cref{fig:e2st_statistics}  and \cref{tab:deep_model_validation},  respectively.} 
%in the main text. 
%As \textbf{R2} %suggested,#
{As AgraSSt network statistics } {$t(x_{-s})$,} we {introduce} {D3, {which is} based on the}  multivariate statistics 
(edges((i,j)),deg(i),deg(j)), 
%denoted as D3; 
and 
{we  introduce}
Tri, 
{which is} 
based on the number of {common neighbours of $i$ and $j$}.
%to construct AgraSSt.   %$\#\Delta(i,j)$
%.
%,
% ($\#\Delta(i,j)$,deg(i),deg(j)), 
%denoted as Tri. 
% The use of triangles as summary statistics can also be useful. \wk{add exp results.}
%{In these tables},
%e recall
{T}he edge based AgraSSt {from} 
%present 
%in 
the main text {is added in grey} for comparison.}

\begin{table}[h!]
% \small
    \centering
\begin{tabular}{c|rrrrr}
\toprule
% {$\beta_2$ perturb}
{perturbed $\beta_2$} &     -0.60 &     -0.40 &  -0.20 &     0.00 &   0.20 \\
\midrule
AgraSSt\_D3 
&  0.93 &  0.87 &    0.60 &  0.06 &    1.00 \\
AgraSSt\_{Tri} & 0.82 & 0.71 & 0.35 & 0.07 & 1.00\\
% AgraSSt\_{LC} & 0.91 & 0.82 & 0.30& 0.05 & 1.00 \\
\midrule
{\color{gray}AgraSSt (main)} & {\color{gray}{0.95}} & {\color{gray}{0.89 }}& {\color{gray}{0.68}}& {\color{gray}{0.04}} & {\color{gray}{1.00}} \\
\bottomrule
\end{tabular}
\vspace{0.4cm}
\caption{Rejection Rate for the setting in \cref{fig:e2st_statistics}.}
% \vspace{-0.2cm}
    \label{tab:additional}
\end{table}

\begin{table}[h!]
\small
    \centering
    % \vspace{-0.23cm}
\begin{tabular}{c|ccc|c}
        \toprule
        {Models} & GraphRNN 
        & NetGAN  &
        CELL & 
        MC \\
        \midrule
        AgraSSt\_{D3} & 0.31 
        & 0.66 &
        0.10 & 
        0.03 \\
        AgraSSt\_{Tri} & 0.28  
        & 0.32 &
        0.12 & 0.06
         \\
         \midrule
{\color{gray}{    AgraSSt (main)}}  & {\color{gray}{0.42  }}
        & {\color{gray}{0.81 }}&
       {\color{gray}{ 0.05}} & {\color{gray}{0.04}}
         \\
        \bottomrule
    \end{tabular}
\vspace{0.4cm}
\caption{Rejection Rate for the setting in Table 1.}\label{tab:add_exp}
% \vspace{-0.2cm}
\end{table}

{In {the} Florentine network example, D3 has $p$-values
% D3 Florentine,
0.04 for GraphRNN, 0.11 for NetGAN, and 0.74 for CELL.
Tri has $p$-values 0.02 for GraphRNN, 0.01 for NetGAN, and 0.12 for CELL.
{Overall, the results are mainly comparable to using AgraSSt based on the number of edges, although Tri rejects NetGAN for the Florentine marriage network, thus picking up on NetGAN struggling to reproduce local clustering.} }

\subsection{Additional discussions on distance-based test statistics}\label{app:TV}
%The graphical test has been used to 
{A classical approach for}
%perform 
goodness-of-fit testing {in ERGMs is the graphical test by}
\citet{hunter2008goodness}. The idea is to simulate {sample graphs under the null distribution} statistics and  {create box plots of}  {some} relevant network statistics; {add to these plots the  network statistics in the observed network, as a solid line} for comparison, which is illustrated in Figure.\ref{fig:graphical_test_demo}. The box plot is used to check whether the observed network is ``very different'' from the simulated null samples. {This graphical test procedure can be translated into Monte Carlo tests.}
%As such,
It is natural to adapt such procedure to implicit models {from which} samples can be obtained. Figure.\ref{fig:graphical_test_demo} plots {standard network statistics from \citet{hunter2008goodness} for} samples from a  fitted $G(n,p)$ generator {(ER Approximate)} and a learned GraphRNN generator of the Florentine marriage network {described in more detail in \cref{app:karate}}. %\gr{it includes the standard network statison various network statistics. Samples are generated to compute the distribution of chosen network statistics. 
The bold black line indicates the distribution of statistics for the Florentine marriage network.
% \todo{could we repeat this for the Karate network?} 

The distribution of network statistics is then quantified via Total Variation (TV) distance \citep{xu2021stein}, based on which  goodness-of-fit testing with $p$-values can be conducted. {We find that while the fitted ER generator shows a reasonable fit for all summary statistics, the GraphRNN generator does not match the Florentine marriage network very well for dyad-wise shared partners and the triad census.} 
\begin{figure*}[htp!]
\begin{center}
\includegraphics[width=0.795\textwidth]{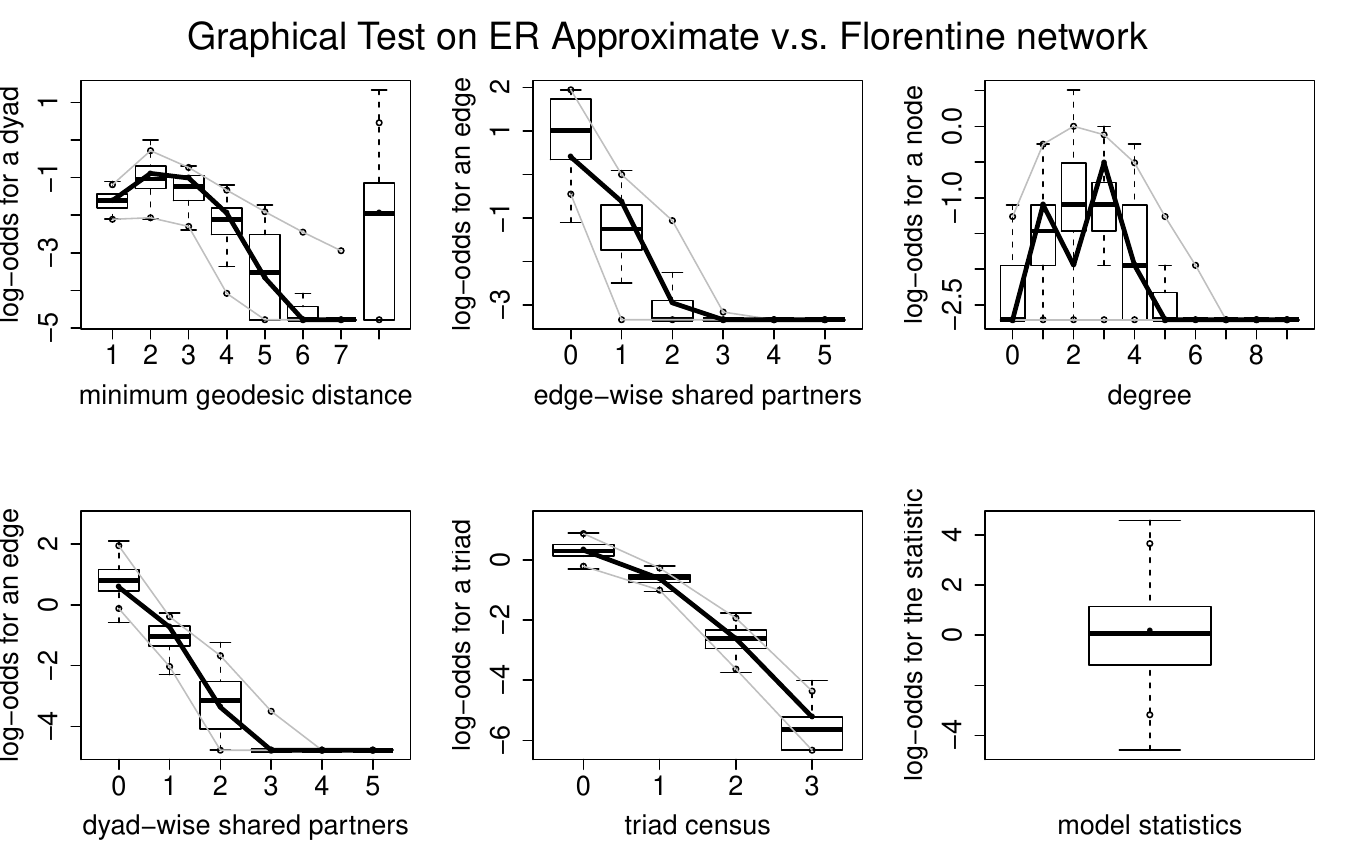}

\vspace{0.8cm}
\includegraphics[width=0.795\textwidth]{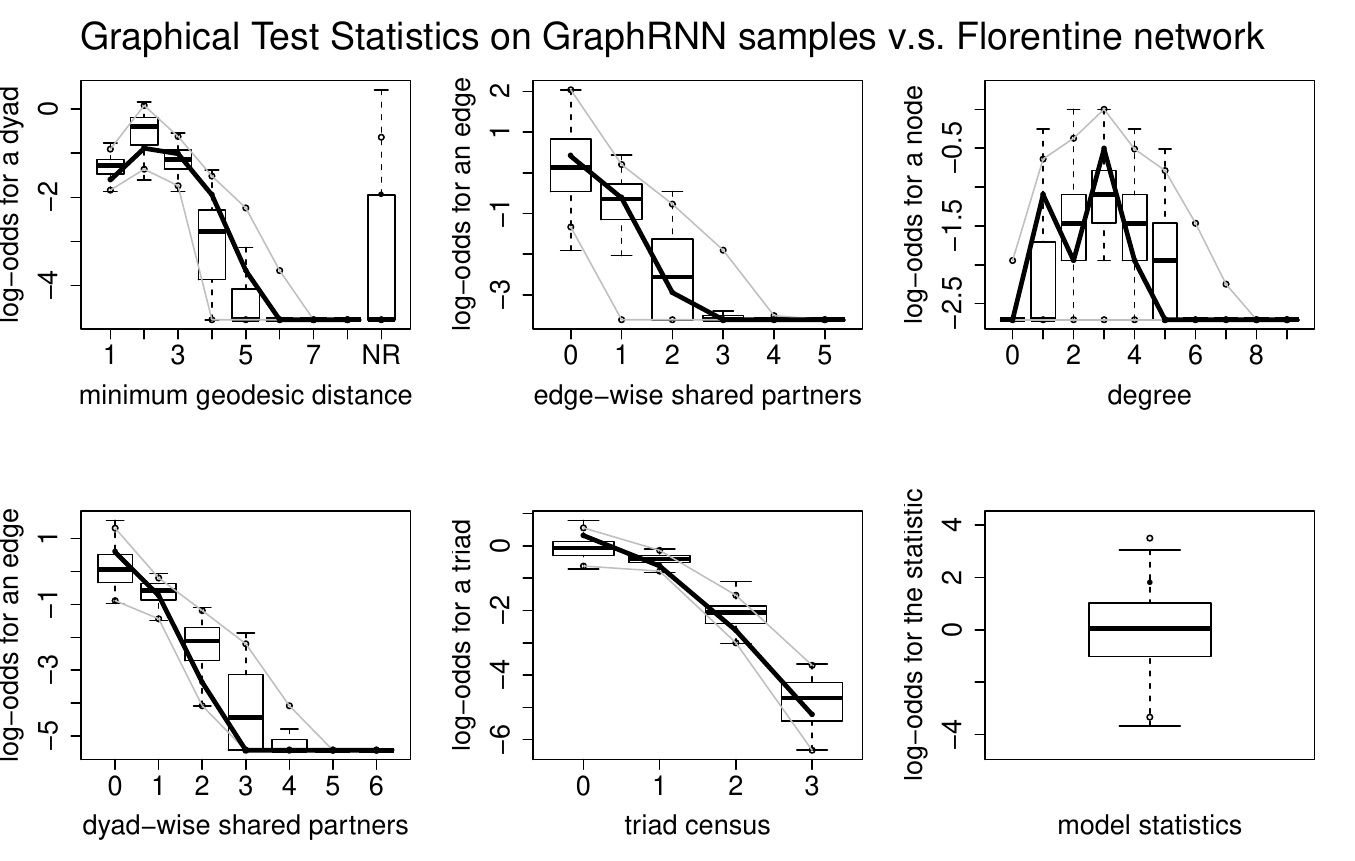}
	\caption{Graphical test illustrations on samples from generators learned on the Florentine marriage network}\label{fig:graphical_test_demo}
	\end{center}
	%\vspace{-0.5cm}
\end{figure*}
% \subsection*{Stein Statistics replicate Distance-based method}

% {\color{red}{Not sure what should go here}} \wk{haven't write down yet.}

% \newpage 

\subsection{Efficiency results}\label{app:exp_comp}

Table.\ref{tab:runtime} presents the computational runtime (RT) {and} the test construction time (CT) {for \AgraSSt{} and its comparison methods from Section \ref{subsec:related}} {with the simulation setup as in \cref{subsec:simresults}}. {As a measure of accuracy,} % as well as 
the variance (Var) of the simulated (or estimated) test statistics {under the null distribution is also included}.

The parameter estimation in \textbf{Param} depends on a computationally efficient method  which is based on MPLE \citep{schmid2017exponential} in Eq.\eqref{eq:mple}. \textbf{AgraSSt} takes longer to compute 
% than \textbf{TV} 
mainly due to the computation of graph kernels, e.g.  Weisfeiler-Lehman kernel \citep{shervashidze2011weisfeiler}.
{We} 
%is important to 
note that for implicit models, the estimation step {in \AgraSSt{} } relies on generating samples from the model so that the 
the computational advantage\footnote{These results on gKSS are shown in Supplementary Material D in \citet{xu2021stein}.} of the Stein based test over
graphical goodness-of-fit tests\footnote{The graphical test \citep{hunter2008goodness} is computed based on generating a large amount of samples from the null distribution.} reduces {compared to \gKSS{}}. 
\textbf{MD{deg}} is computationally expensive due to the estimation of an inverse covariance matrix.
%Despite 
{While providing} fast computation and estimation, \textbf{Deg} and \textbf{Param} sacrifice test power {through a} large variance  %the null distribution 
of the test statistics. Estimating the full degree distribution, {the total variation distance method}  \textbf{TV\_deg}, {based only on degrees,} %TV distance 
is competitive with {\AgraSSt{}}; {we recall that in our simulation results from  \cref{subsec:simresults}  \textbf{TV\_deg} was less powerful than {\AgraSSt{}}.} 
%Stein-based statistics with only estimated densities.
{Here} \textbf{MD{deg}} {is outperformed by the other test statistics.}
%based test statistics with only degree statistics is the least desirable one among others, for implicit models.

\begin{table}[htp!]
    \centering
    \begin{tabular}{c|ccccc}
        \toprule
       {}& AgraSSt &  
        Deg & 
        Param & 
        MDdeg & 
        TV\_deg \\
        \midrule
        RT(s)  & 0.141 & 0.0006 & 0.014 & 0.831 & 0.002 \\
        CT(s)  & 28.656 & 0.277 & 2.963 & 162.912 & 0.555\\
        \midrule
        Var & 0.23 & 8.38 & 1.43 & 15.84 & 0.28   \\
        \bottomrule
    \end{tabular}
    \vspace{0.5cm}
    \caption{Computational efficiencies and uncertainty in estimates. RT: runtime for one test; CT: construction time for the test class, including generating 500 samples for relevant estimation and 200 samples for simulating {from the} null distribution; {Var}: the estimated variance {under the} simulated null distribution. Both RT and CT are in seconds.}
    \label{tab:runtime}
\end{table}

% \subsection{Variance comparison for the test statistics}
% % {Compare the efficiency, both accuracy, less variance and computational cost}
% \td{The other thing might be more interesting + illustrative is the variance of statistics under $H_1$, with certain shift in model parameters.}

% \paragraph
% {Consistency of estimating conditional distributions under well-specified case}.

% \td{When the ERGM has network statistics $t(x)=\{t_1(x),\dots,t_l(x)\}$, we would like to show that estimation procedure in Algorithm \ref{alg:AgraSSt} has 
% $$g(t(x_{-s})) \to q(x^{(s,1)}|x_{-s})$$
% as the number of network $n \to \infty$
% }
% \paragraph{misspecified network statistics $t_l(x)$ and its estimation properties}.

% \td{When the ERGM has network statistics $t(x)\neq \{t_1(x),\dots,t_l(x)\}$, we would like to understand under what circumstances does the estimation procedure in Algorithm \ref{alg:AgraSSt} is bounded 
% $$|g(t(x_{-s})) - q(x^{(s,1)}|x_{-s})|<M$$
% and with what type of assumptions $M\to 0$ as the network size grows.
% }

\subsection{Additional implementation details} \label{app:implementation} 

%\paragraph{Remark.} \wk{an early stopping point that I haven't mentioned before, not necessarily included in this manuscript} 

We note that training NetGAN \citep{bojchevski2018netgan}  with the Florentine and with the Karate Club network may encounter some generator instability and hence early stopping can be useful. 
Without early stopping, the training loss for the generator increases during training, although it should be decreasing. 
Figure.\ref{fig:training_loss} shows the training loss on the generator in NetGAN as well as on the critic (or discriminator) in NetGAN. Figure.\ref{fig:training_loss} plots the loss every 200 training epochs. We can see from Figure.\ref{fig:training_netgan_flo} that the generator loss starts to be unstable and then increases after 50 points, i.e. 10,000 epochs.
Hence we use only 10,000 epochs for training. 

% \begin{figure}[h]
%     \centering
%     \subfigure[Florentine network]{\includegraphics[width=0.3\textwidth]{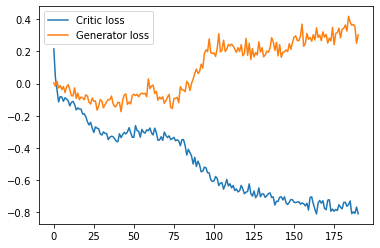}\label{fig:training_netgan_flo}}    \subfigure[Florentine network with early stopping]{\includegraphics[width=0.3\textwidth]{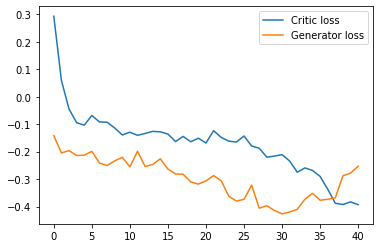}}    \subfigure[Karate Club network]{\includegraphics[width=0.3\textwidth]{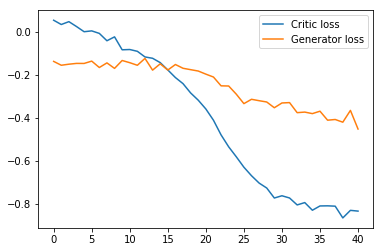}}
%     \caption{Training loss (y-axis) for NetGAN \cite{bojchevski2018netgan}; plot{ted} against (x-axis) every 200 training epochs
%     }
%     \label{fig:training_loss}
% \end{figure}

\begin{figure}[htp!]


    \centering
    \subfigure[Florentine network]{\includegraphics[width=0.3\textwidth]{fig/netgan_train_flo.png}\label{fig:training_netgan_flo}}    \subfigure[Florentine network with early stopping]{\includegraphics[width=0.3\textwidth]{fig/netgan_train_flo_early_stop.png}}    \subfigure[Karate Club network]{\includegraphics[width=0.3\textwidth]{fig/netgan_train_karate.png}}
    \caption{Training loss (y-axis) for NetGAN \citep{bojchevski2018netgan}; plot{ted} against  every 200 training epochs (x-axis).
    }
    \label{fig:training_loss}
\end{figure}

\end{document}